\newcommand{\cM}{\mathcal{M}}
\newcommand{\cX}{\mathcal{X}}
\newcommand{\R}{\mathbb{R}}
\newtheorem{thm}{Theorem}[section]
\newtheorem{remark}[thm]{Remark}
\newtheorem{prop}[thm]{Proposition}
\newtheorem{example}[thm]{Example}
\def\BibTeX{{\rm B\kern-.05em{\sc i\kern-.025em b}\kern-.08em
    T\kern-.1667em\lower.7ex\hbox{E}\kern-.125emX}}
\begin{document}

\title{Mapper Based Classifier
\thanks{$^*$ Jacek Cyranka and Alexander Georges contributed equally to this work. 
Most of this work was done when Jacek Cyranka held a postdoctoral position at 
CSE Department, UC San Diego under supervision of prof. Sicun Gao.
JC has been partially supported by NAWA Polish Returns grant PPN/PPO/2018/1/00029.
}
}

\author{\IEEEauthorblockN{Jacek Cyranka$^*$}
\IEEEauthorblockA{\textit{Institute of Informatics} \\
\textit{University of Warsaw}\\
Warszawa, Poland\\
jcyranka@gmail.com}
\and
\IEEEauthorblockN{Alexander Georges$^*$}
\IEEEauthorblockA{\textit{Department of Physics} \\
\textit{University of California, San Diego}\\
San Diego, USA\\
ageorges@ucsd.edu}
\and
\IEEEauthorblockN{David Meyer}
\IEEEauthorblockA{\textit{Department of Mathematics} \\
\textit{University of California, San Diego}\\
San Diego, USA \\
dmeyer@math.ucsd.edu}
}

\maketitle

\begin{abstract}
   Topological data analysis aims to extract topological quantities from data, which tend to focus on the broader global structure of the data rather than
local information.  The Mapper method, specifically, generalizes clustering methods to identify significant global mathematical structures, which are out of reach of many other approaches. We propose a classifier based on applying the Mapper algorithm to data projected onto a latent space. We obtain the latent space by using PCA or autoencoders. Notably, a classifier based on the Mapper method is immune to any gradient based attack, and improves robustness over traditional CNNs (convolutional neural networks). We report theoretical justification and some numerical experiments that confirm our claims.
\end{abstract}

\section{Introduction}
Deep neural networks \cite{deeplearning,deeplearningbook} are well known to be not robust with respect to input image perturbations, which are designed by adding to images perturbations that are typically non-perceptible by humans  \cite{intriguing,advers,advphys}. In this paper we explore opportunities for combining deep learning techniques with a well known \emph{topological data analysis} (TDA) algorithm -- \emph{the Mapper algorithm} \cite{singh2007topological}, which we use to create classifiers with improved robustness. First, the training data is projected onto a latent space. The latent space in the simplest variant is constructed using \emph{PCA} components, and we also use nonlinear projections by utilizing various \emph{autoencoders} \cite{deeplearning,CAE,DAE,VAE}.
Then, a discrete graph representation (Mapper) is assigned to the training data projected onto the \emph{latent space}.  Having this trained graph structure built, any input can be binarized, by assigning the binary vector representing the nodes in the graph to which it belongs (see the algorithm on Fig.~\ref{fig:tdann}). We emphasize that such discretization step makes our algorithm essentially immune to any white-box \emph{gradient based adversarial attack}. The test data is treated by a special mapping procedure that is essentially performing a weighted $k$-nearest neighbor search in the preimage of some portion of the latent space in order to compute a vectorized representation of the testing points. We apply the algorithm we have developed, using methods from topological data analysis and more traditional approaches, to the \emph{MNIST} \cite{lecun-98} and \emph{FashionMNIST} \cite{fashionmnist} datasets as an application of robust computer vision. 

We stress that the general idea of applying a topological method (Mapper algorithm) on top of some latent space method is that the topology is less sensitive to both the perturbation direction and size in the original data space - i.e. no particular direction in the data space is very likely to fool the topological classifier.  Because of this, the topological classifier has some noise invariance built in from the outset.  In the case of neural networks, such a direction can be computed using the gradient of the input - and indeed, many such directions typically exist, many of which are exceedingly small perturbations.  Further, since our topological based classifier is not differentiable, it makes it difficult to efficiently find the input perturbations that are most likely to fool it.

The main ingredient in this algorithm is our implementation of a topological object, called a \emph{Mapper}, which captures global information about the data space.  Intuitively, these objects allow for some variance in the data while still producing the same desired output \cite{carriere2018structure}.  In general, methods in topological data analysis are robust to perturbations in data because the overall global structure remains relatively unchanged, and these methods capture this information.  Owing to this property,  due to the implemented ensemble approach the bias and variance in our predictive model is reduced.  Hence, our algorithm resolves, to an extent, the \emph{bias-variance tradeoff}.  We will note that Mappers have been used to classify error modes for CNNs when applied to MNIST \cite{fibres} and FashionMNIST \cite{fashionmnist} data. 

The original software to produce Mapper outputs is \emph{Python Mapper} \cite{mappersw}.  The code we have constructed for this analysis is a prototype implementation and is built around an R implementation of the Python Mapper software \cite{pearsontdamapper}.  Using Mapper as a means to compare shapes has been done before in \cite{singh2007topological} and \cite{biasotti2000extended}.  Our approach is different in that we utilize traditional machine learning approaches in conjunction with the Mapper algorithm.  The code is available online \cite{codes}.
%
%
\begin{figure}[t]
\begin{center}
   \includegraphics[width=0.98\linewidth]{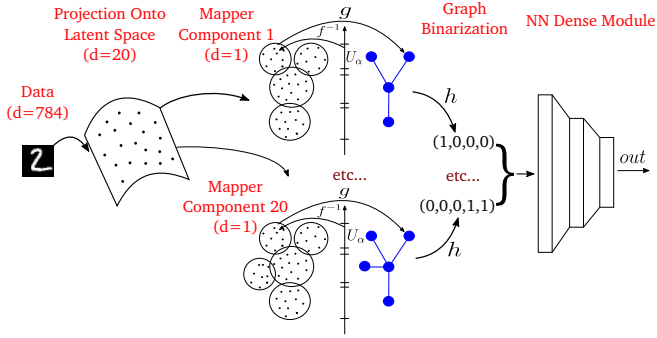}
\end{center}
   \caption{Illustration of our MC method. Refer to algorithm description in the paper. In practice we use not just one Mapper output graph but a whole committee of mapper graphs (see Sec.~\ref{sec:structure}).}
\label{fig:tdann}
\end{figure}

\subsection{What is Mapper?}
\label{secmapper}
The Mapper method \cite{singh2007topological}
 is a discretized analog of \emph{Reeb graphs} \cite{carlsson2009topology, reeb1946points}, which are tools used in \emph{Morse theory} \cite{milnor2016morse}. Both Mapper and Reeb graphs provide topological information pertaining to connectivity of the space.  More precisely, they describe changes in level sets of the filter $f$ (i.e. set of points in $\mathcal{X}$ at the level $l$ is $\{x\in\mathcal{X}\colon f(x) = l\}$) in a space given a function over this space.  Some motivations for using these approaches to understand data consist of: the ability to get a higher-level understanding of the structure of data by determining clustering information (which is based on clusters in $X$ and how various functions behave on $X$), and the low computation cost of producing these topological networks.  They have been used for a number of different applications, including: the discovery of significant clusters in breast cancer gene expression, the classification of player performance in the NBA, voting behavior in the United States Congress \cite{lum2013extracting}, and to study RNA hairpins to identify dominant folding paths \cite{bowman2008structural}.
 
We denote a specific Mapper output graph as $M=M(X, f)$ and the space of Mapper outputs by $\cM$.  Below, we describe the Mapper algorithm and provide specific examples of Mapper output graphs.  

\subsection{Mapper Algorithm:}
\label{secmapper}
\noindent\textbf{INPUT:}\\
* The dataset $X_{\rm train}\subset\R^{N_{\rm in}}$.\\
* The choice of metric for the pairwise distances,\\
* The function $f\colon\cX\to\R$, referred to as the ``lens'' or ``filter function.'' \footnote{The Mapper algorithm does not require a mapping to one dimension, and indeed it is possible to replace this step with $f: \cX \to \mathbb{R}^m$ for arbitrary $m$.}\\
* The number of intervals in the \emph{open cover} of $im(f) \subset \R$ is defined as $n_{\rm int}$,\  the percent overlap of the intervals is defined as the \emph{gain},\ the number of bins in a histogram consisting of distances at which clusters merge in a single-linkage clustering is defined as $n_{\rm bins}$.  Note: for our purposes, an \emph{open cover} is defined as a set of open intervals that cover the entire space upon taking their union \cite{hatcheralgebraic}. We set $n_{\rm bins}=n_{\rm int}=10$ and $gain=0.33$ (more on this in Sec. \ref{secnum}).\\
\noindent\textbf{OUTPUT:}\\
* The Mapper output graph $M\in\cM$ (undirected graph encoding the clustering of data and the intersection structure.\\
\noindent\textbf{begin}
\begin{enumerate}[topsep=0pt,itemsep=-0.5ex,partopsep=1ex,parsep=1ex]
    \item Set $I=\text{im}(f)$.  Choose an open cover $\{ U_{\alpha} \}$ for $I$.
    \item  Set $V_{\alpha}=f^{-1}(U_{\alpha})$.  Then $\{ V_{\alpha} \}$ is an open cover for $X$.
    \item  Refine $\{V_{\alpha}\}$ to $\{V_{\alpha,i_{\alpha}}\}$ where $i_{\alpha}$ indexes the $N_{\alpha}$ components of $X_{\rm train}\cap V_{\alpha}$ defined by connecting points that have distance less than $\epsilon>0$, where $\epsilon$ is dependent on $n_{\rm bins}$.  Let $\tilde N = \sum_{\alpha} N_{\alpha}$.
    \item Let $\{(\alpha,i_{\alpha})\}$ label the $\tilde N$ vertices of a simplicial complex. It is useful to think of data points living in these vertices. 
    \item Connect vertices labeled $(\alpha,i_{\alpha})$ and $(\alpha',i_{\alpha'})$ iff $V_{\alpha,i_{\alpha}} \cap V_{\alpha',i_{\alpha'}} \not=\emptyset$.
\end{enumerate}
\noindent\textbf{end}

In general, $\widetilde{N}$ will be a function of both $n_{\rm bins}$, $n_{\rm int}$, the function $f$, and the data. The Mapper procedure gives clustering information based on the original data $X_{\rm train}$ and the information contained in $f|_{X_{\rm train}}$. In step (3), a local neighborhood scale must be defined or recovered in order to produce a refinement of the open cover ${V_{\alpha}}$.  This is done by first producing a histogram of the number of components that become connected at varying length scales via single-linkage clustering (see Fig. \ref{fig:hists} in the Appendix available online).  If there are distinct clusters in the data, the histogram will have at least two main peaks: one peak corresponding to points that become connected at smaller distance scales, and another corresponding to points that become connected at larger lengths which represent the distance between clusters.  The heuristic we use to choose the small distance scale is the value at which the first break in the histogram occurs.  This process is repeated for each set $U_{\alpha}$ in the open cover, and a new local neighborhood scale is recovered for each of these sets.  

A higher bin value will tend to push down the distance scale that is required in step (3), and hence produce more nodes in the Mapper output graph.  Thus, a higher bin value can also be seen to correspond to, on average, an increase in the complexity of the Mapper output graph.  In general, there is a bias-variance tradeoff in setting this value.  See Fig. \ref{fig:mapper_objects} in the appendix available online to see how this choice can change the Mapper output. Additionally, we fix the gain for the sets in the open cover ${U_{\alpha}}$, to be a $33\%$ overlap.  This will cause each data point to be assigned to at most $2$ nodes in the Mapper output graph. 
\begin{remark}
\label{remnerve}
The undirected graph constructed from the set of vertices $\{(\alpha,i_{\alpha})\}$, and having edges whenever the intersection of two data components is nonempty, i.e., $V_{\alpha,i_{\alpha}} \cap V_{\alpha',i_{\alpha'}} \not=\emptyset$, is interpreted as the \emph{nerve complex} of the dataset $X$. In particular, when using a one-dimensional filtration the resulting nerve complex is one-dimensional (composed out of nodes and edges only). This construction is generalized to higher dimensions.
\end{remark}
\section{Description of our Training and Testing Classifier Method}\label{sec:structure}

We will refer to our algorithm as a \emph{Mapper Classifier}, or \emph{MC} for short.   In our MC algorithm, using the provided dataset $X_{\rm train}$, we construct a \emph{committee of Mapper graphs}, meaning a pair of sequences of Mapper output graphs and the corresponding filtration functions used to generate them:
\begin{equation}
\label{eqC}
\mathrm{C}(X_{\rm train}, f) := \left(\{M_j\}_{j=1}^{N_C}, \{f_j\}_{j=1}^{N_C}\right) = \left(\{M_j\}, \{f_j\}\right),
\end{equation}
where $f_j\colon \cX\to\R$ are the filter functions and $N_C$ is the number of members chosen to be included in the committee. We emphasize that building a committee of Mapper graphs requires in the first place, some \emph{systematic way of generating filter functions}.  For our analysis, we choose the overall filter $f$ (projection onto a latent space) to be either: 
\begin{enumerate}
\item PCA \cite{PCA}, where each $f_j$ is a mapping onto the $j$-th principal component which is constructed using $X_{\rm train}$.
\item An autoencoder, which we use several variants of, including: contractive \cite{CAE}, deep \cite{DAE}, and variational \cite{VAE}. $f_j$'s were constructed using an autoencoder as a mapping onto the latent space generated by hidden nodes of the autoencoder, where $f_j$ is the projection onto the $j$-th hidden node of the autoencoder.  The autoencoders are first trained using $X_{\rm train}$.
\end{enumerate}
We define a map that takes data points in $X_{\rm train}$ to the vector representation of the Mapper nodes in a single Mapper output graph $M$ by:
\begin{equation}
\label{gm}
g_{M}\colon X_{\rm train} \to \R^{N_M},    
\end{equation}
where $N_M$ is the number of nodes of Mapper $M$.  The map $g_{M}$ has a natural definition for all the data in $X_{\rm train}$, as it sends points to a vectorized binary representation: $g_{M}(X_{\rm train})\subset\{0,1\}^{N_M}$.  Each training point $x\in X_{\rm train}$ gets assigned by $g_M$ the binary vector representing the mapper vertices $\widetilde{V_{\alpha,l_{\alpha}}}$, such that $x\in\widetilde{V_{\alpha,l_{\alpha}}}$.
\begin{example}
Assume that a Mapper output graph $M$ is composed of $3$ nodes. Let $x_1,x_2,x_3\in X_{\rm train}$, such that they are in the $1$st, $2$nd and $3$rd vertices of $M$ respectively. Then, $g_M(x_1)=(1,0,0), g_M(x_2)=(0,1,0), g_M(x_3)=(0,0,1)$.
\end{example}

The map $g_M$ has a natural extension to the Mappers committee $\mathrm{C}$:
\begin{equation}
\label{gc}    
g_{C}\colon X_{\rm train} \to \R^{\sum_{j=1}^{N_C}{N_{M_j}}},
\end{equation}

where $M_j$ represents the $j$-th Mapper output graph in the committee, which consists of $N_C$ total Mappers, and $N_{M_j}$ corresponds to the number of nodes in $M_j$.  This procedure can be seen as joining the vector representations for all the individual Mappers in the committee into a long vector.  

The procedure that has been presented for mapping data points in $X_{\rm train}$ to the vectorized binary representation applies to the training data only, and for datapoints in $X_{\rm test}$ we need to utilize an alternate procedure, that we present in Sec.~\ref{sectestproc}.

\subsection{Training Procedure}\label{sec:train}
\noindent\textbf{INPUT:}\\
    * The internal parameters of the Mapper algorithm (see Sec.~\ref{secmapper}).\\
    * The number of components ($N_C>0$) used to build the committee of Mapper graphs.  We set $N_C=20$ (more on this in Sec. \ref{secnum}).\\
    * The choice in the latent space projection method (either PCA or an autoencoder).\\
    * A split for the training dataset: $X_{\rm train} = X_{\rm train}^1\cup X_{\rm train}^2\cup \dots\cup X_{\rm train}^n$, such that $X_{\rm train}^i\cap X_{\rm train}^j = \emptyset$.  \\
    * Labels $Y_{\rm train}$ for the training set.\\
\noindent\textbf{OUTPUT:}\\
* Classifier for the training set $X_{\rm train}$ with labels $Y_{\rm train}$.\\
\noindent\textbf{begin}
\begin{enumerate}[topsep=0pt,itemsep=0ex,partopsep=1ex,parsep=1ex]
    \item Build the committee of Mapper graphs for each data-split $X_{\rm train}^i$:
$\mathrm{C}_i = \mathrm{C}(X_{\rm train}^i,f^i) = \left(\{M_j^i\}, \{f_j^i\}\right)$.  Note that each projection $f^i$ is trained independently on each of the $X^i_{\rm train}$. 
    \item Using the map $g_{C_i}$ (see Equation \eqref{gc}) applied to each of the $C_i$ committees, compute the binary matrix representations of the subsets denoted by $i$: 
$g_{C_i}(X_{\rm train}^i)\in \{0,1\}^{N_i\times\sum_k{N_{M_k}}}$,
where $N_i$ is the number of examples in $X_{\rm train}^i$.
    \item (only if $X_{\rm train}$ is split) using the computed committee of Mapper graphs, compute the `off-diagonal' binary matrix representations, i.e., for all $i=1,\dots,n$ compute $g'_{C_i}(X_{\rm train}^j) \in \R^{N_j\times\sum_{k}{N_{M_k}}}$ $\text{ for all $j \neq i$,}$
using the mapping procedure presented in Sec.~\ref{sectestproc}.  
    \item Train an end classifier (in our case we use a neural network), using the merged data from the previous step.  The merged data is represented as a matrix which can be seen as a map $g(X_{\rm train}) \in \{0,1\}^{N_{\rm tot}\times N_{M_{\rm tot}}}$, where $N_{\rm tot}$ is the total number of instances in $X_{\rm train}$, and $N_{M_{\rm tot}}$ is the total number of nodes over the entire collection of committees, i.e., $N_{M_{\rm tot}} = \sum_i \sum_{j}{N_{M_j}}$ where $i$ runs over all data splits and $j$ denotes specific Mappers in a split.  In block form:
{\scriptsize\begin{align*}
    \begin{bmatrix}
    g_{C_1}(X_{\rm train}^1) & g'_{C_2}(X_{\rm train}^1) & \dots & g'_{C_n}(X_{\rm train}^1)\\
    g'_{C_1}(X_{\rm train}^2) & g_{C_2}(X_{\rm train}^2) & \dots & g'_{C_n}(X_{\rm train}^2)\\
    \vdots & \vdots & \vdots & \vdots \\
    g'_{C_1}(X_{\rm train}^n) & g'_{C_2}(X_{\rm train}^n) & \dots & g_{C_n}(X_{\rm train}^n)
    \end{bmatrix}
\end{align*}}
\end{enumerate}
\noindent\textbf{end}
\begin{figure}[h]
	\centering
	\begin{subfigure}{0.125\textwidth} 
   \label{fig:mapper_10_10} 
		\includegraphics[width=\textwidth]{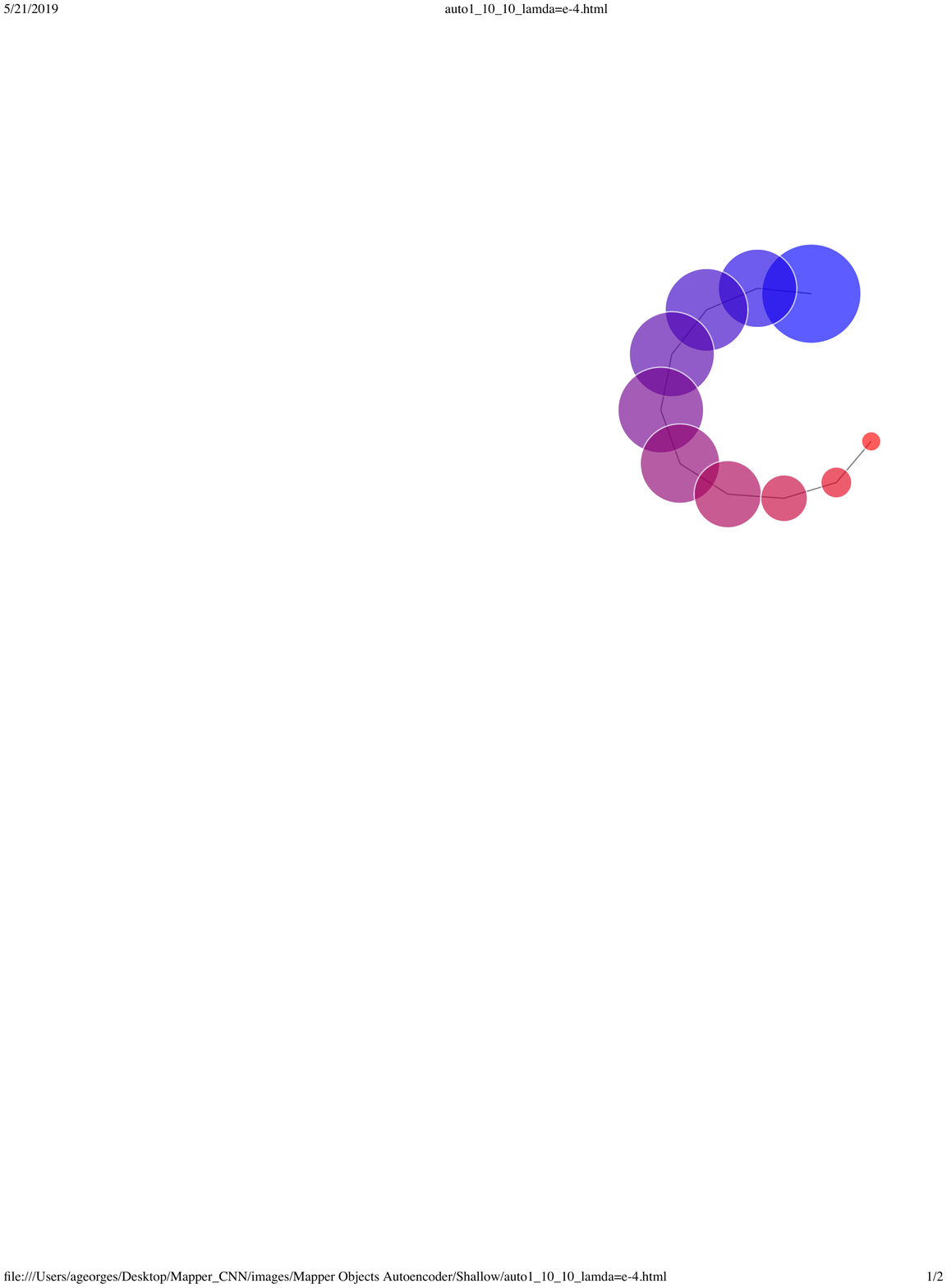}
	\end{subfigure}
	\hspace{5em} 
	\begin{subfigure}{0.125\textwidth}
   \label{fig:mapper_20_20} 
		\includegraphics[width=\textwidth]{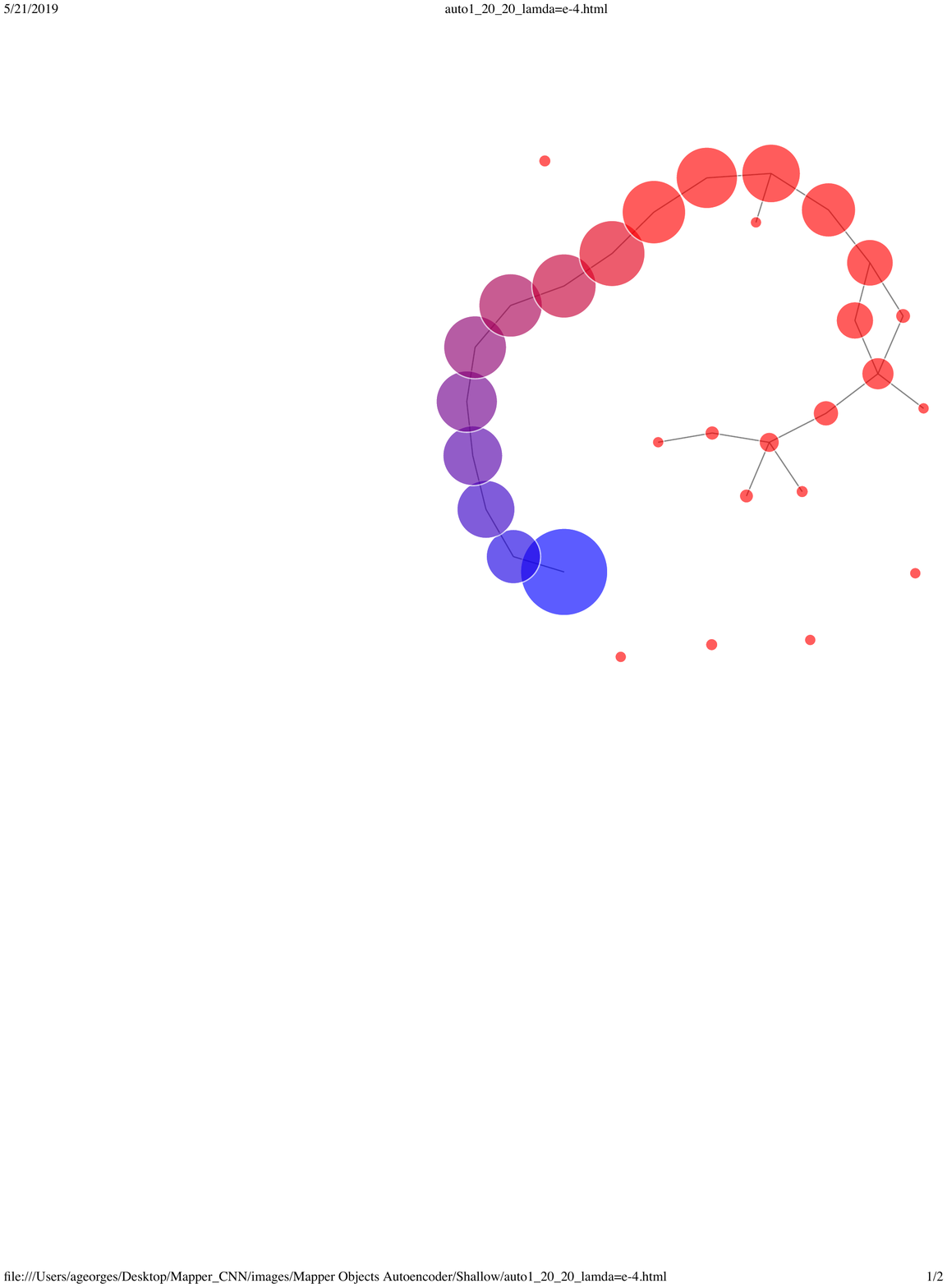}
	\end{subfigure}
\caption{Mapper output graphs computed for $10$k MNIST training data, with contractive autoencoder projection (see Sec. \ref{sec:structure}).  Color corresponds to projection value, and node size corresponds to total number of points in the node. $n_{\rm int}=n_{\rm bins}=10$ for the LHS figure, $n_{\rm int}=n_{\rm bins}=20$ for the RHS figure.}
\label{fig:mapper_objects}
\end{figure}

\subsection{Mapping Unseen Points to the Committee}
\label{secmapping}
We describe how we construct the $g'_{C_i}$ map, a generalization of the $g_{C_i}$ map \eqref{gc} to test datasets (analogously define $g'_{M}$ map). $g'_{C_i}$ is used in order to map test data points to an existing committee of Mapper graphs that is constructed using the $i$-th split of data. 

The data-points that are being tested through the committee of Mapper graphs can be any set in principle. 
In the algorithm below, we will denote this set as $X_{\rm new}$, and assume that it is provided as input to our testing algorithm. Examples are $X_{\rm new}=X^i_{\rm train}$ and $X_{\rm new}=X_{\rm test}$ (or some perturbations of data in $X_{\rm test}$ as used for robustness testing). We describe in more detail the splitting procedure and its utility in Appendix~\ref{sec:split} available online.\\
\label{sectestproc}
\noindent\textbf{IN:}\\
    * The same input information from the training procedure in Sec. \ref{sec:train},\\
    * $k \geq 1$, the number of nearest neighbors considered in the algorithm.  We set $k=6$ (more on this in Sec. \ref{secnum}),\\
    * the testing data $X_{\rm new}$.\\
\noindent\textbf{OUT:}\\
    * Vector committee representation of the new points $g'_{C_i}(X_{\rm new})$.\\
\noindent\textbf{begin}
\begin{enumerate}[topsep=0pt,itemsep=-0.5ex,partopsep=1ex,parsep=1ex]
    \item Find $\alpha$'s such that $\left|f_j(x) - \mbox{mid}(U^{ij}_{\alpha})\right| < \frac{\mbox{max}(U^{ij}_{\alpha})-\mbox{min}(U^{ij}_{\alpha})}{2}(1+\delta)$, for $x \in X_{\rm new}$. If $f_j(x) < \min(I^{ij})$ then set $\alpha=1$, if $f_j(x) > \max(I^{ij})$ set $\alpha=n$.
    The $\delta$ parameter consequently enhances the robustness as it broadens the search space (defined in step 3).  The $\alpha$ denotes the interval in the cover for $\mathbb{R}$ (see Sec. \ref{secmapper}), $i$ denotes the split, and $j$ denotes the filter (i.e $j=1$ for PCA would mean the first principal component).
    \item For these $\alpha$, collect the corresponding refined vertices
$\widetilde{V^{ij}_{\alpha,l_{\alpha}}}$.  These are the clusters in the $i$-th data split $X^i_{\rm train}$ that are mapped to $U^{ij}_{\alpha}$ by $f_j$ and are indexed by $l_{\alpha}$.
    \item For all $x\in X_{\rm new}$ and for all splits indexed by $i$ perform the $k$-nearest neighbors search within the Mapper vertices found in the previous step $\{\widetilde{V^{ij}_{\alpha,l_{\alpha}}}\}$  to find $nn^i_1(x),\dots,nn^i_k(x)\in X_{\rm train}$, where the distance function is chosen to be consistent with the choice of metric used to compute the Mapper committee (Euclidean).
    \item For all $x\in X_{\rm new}$ and for all splits $X^i_{\rm train}$ define: 
\begin{multline*}
g^\prime_{C^i}(x) = w^i_1\cdot g_{C_i}(nn^i_1(x)) + w^i_2\cdot g_{C_i}(nn^i_2(x))+\dots\\+w^i_k\cdot g_{C_i}(nn^i_k(x)),
\end{multline*}
where the weights are defined by
$$
w^i_j = \left[d(x, nn^i_j(x))+\eta\right]^{-1} / \sum_{j=1}^{k}{\left[d(x, nn^i_j(x))+\eta\right]^{-1}},
$$
where $\eta=10^{-5}$.
\end{enumerate}
\noindent\textbf{end}

$\eta$ is a stability parameter and does not  have a large impact on the results so we opt for setting it to this small value.  We use this procedure for any new data point not yet assigned to the $i$-th committee, including both $X_{\rm train} \setminus X_i$ and $X_{\rm test}$.  This process is precisely the $g'$ map mentioned in Sec. \ref{sec:train}, and we use it to fill in the missing values of $g(X_{\rm train})$ and to construct in its entirety $g(X_{\rm test})$.

The computational complexity of the training and testing portions of solely the Mapper procedure are $\mathcal{O}({n^2})$, where $n$ is the number of data points.  There are additional computational costs incurred before (i.e. when constructing $f$) and after this procedure (i.e. when training the end classifier - see Fig. \ref{fig:tdann}).
\section{Numerical Experiments}\label{secnum}
In order to quantify the overall robustness of studied classifiers we perform several black box random noise attacks. We do not perform white box attacks, as it is not clear for us how to efficiently perform such attack on our classifier. As noted earlier any gradient based attack is essentially not applicable due to the graph discretization step. Perturbations (small) ($x'$) are generated from the correctly classified test-set examples ($x$). Hence, the ground truth $c$ stays the same for the perturbed examples $c(x)=c(x')$. Let $h\colon\mathcal{X}\to\mathcal{Y}$ denotes the predictions of a classifier, where $\mathcal{Y}$ is the set of labels. To measure the robustness of a classifier we use the function counting the number of misclassified examples for a dataset $X$, with all the examples being perturbed with Gaussian blur noise ($blur$) within $l^2$ range $(0,x]$
\[
\text{missclassified\,}_{X,blur}((0,x])\in\{0,\dots,\#X\},
\]
where $blur$ can be replaced with $s\&p$ or $gauss$, which is in turn used in the definition of the normalized accuracy $a(x)$ \eqref{eq:accuracy}.

The main numerical results we report are this accuracy with respect to different noise models for various classifiers, and all are applied to the usual $10$k test MNIST/Fashion-MNIST data.  The noise models we use include: Gaussian blur, random noise selected from a Gaussian, and salt \& pepper noise. The \emph{Gaussian blur} model performs a convolution with a $2$-dimensional Gaussian centered at each pixel in the image.  The $\lambda$ parameter we use for robustness calculations is related to the Gaussian standard deviation by: $\sigma = 28 \lambda$.
The \emph{salt \& pepper} or \emph{s\&p} model replaces random pixel values with the minimum (i.e. ``pepper'') or maximum (i.e. ``salt'') in the image.  Setting $q_1$ as the probability of flipping a pixel, and $q_2$ as the ratio of salt to pepper, we use: $q_1 = \lambda$ and $q_2=\frac{1}{2}$. The \emph{Gaussian} model adds in noise to each pixel which is sampled from a Gaussian distribution.  The distribution we use is centered at zero and has $\sigma=0.1\sqrt{\epsilon}$.  Additionally, we train on both a $10$k subset (examples chosen by random) in the data and the full $60$k set. We do not normalize data using std. dev. and mean. For testing, we use the usual $10$k MNIST/Fashion-MNIST test set. We compute the normalized accuracy for the dataset $X_{test}$ and perturbation method (blur/gauss/s\&p) as:

\vspace{-0.5em}
\begin{equation}\label{eq:accuracy}
    a_{X_{test},blur}(x) = 1-\frac{\text{missclassified\,}_{X_{test},blur}((0,x])}{\text{initial correct}},
\end{equation}
where $blur$ can be replaced with $gauss$ or $s\&p$, $\text{missclassified\,}_{X_{test},blur}((0,x])$ is the number of misclassified perturbations within $l^2$ perturbation norm range $(0,x]$.  This equation has the benefit that it removes, to an extent, potential dependencies of robustness on the data itself (i.e., we would like robustness to be more a property of the classifier rather than how the classifier interacts with the specific dataset).  

There are a few hyperparameters we use, which we will briefly mention here.  We set $N_C$, the number of Mappers in a committee to 20, because for various classifiers, the initial classification accuracy levels off around this number.  By initial classification accuracy, we mean the number of initially correctly classified instances with no noise added. 
To fine-tune the hyperparameters we used some heuristics we derived by experimenting using a single  10k datasplit.
For example we set the number of bins and intervals $n_{\rm int}=n_{\rm bin}=10$ heuristically, through a combination of what provides a high initial classification accuracy while still uncovering interesting topology as seen by the Mapper output graph (see Fig. \ref{fig:mapper_objects} and Tab. \ref{tab:mapperdimensions} in the appendix available online).  The $\delta$ parameter is set to $0.2$ (see Sec. \ref{sectestproc}) as this appears to provide the best robustness.  Lastly, we set $k=6$ when mapping new points to the committee as this provided the best overall robustness for our Mapper classifier (see Sec. \ref{secmapping}).

We compare our approach based on the Mapper method to the robustness results of a CNN (the standard architecture LeNet \cite{lenet} was used for MNIST, and a 5 layer CNN with batch normalization achieving accuracy close to state of the art listed at \cite{fashionurl} for Fashion-MNIST).  The Mapper based methods only differ in their initial projections.  For $10$k training of MNIST/Fashion-MNIST, we investigate Mapper approaches based on: PCA, contractive autoencoder (``CAE'') $784$-sigmoid-$20$-linear-$784$. For the CAE we include a contractive loss term with a multiplying factor of $10^{-4}$. Deep autoencoder (``DAE''): $784$-ReLU-$1000$-ReLU-$500$-ReLU-$250$-linear-$20$ encoder and symmetric decoder; variational autoencoder (``VAE''): $784$-ReLU-$512$-linear-$20$ encoder and symmetric decoder with sigmoid output.  For the VAE, we insert a $\beta$ term multiplying the KL divergence in the loss \cite{higgins2017beta}, and in our case, $\beta=0$ provides the most robust results.  All these methods can be thought of as projection models to $20$-dimensional space (i.e. $N_C=20$), and for an example of the VAE projection in a $2$-dimensional subspace, see Fig. \ref{fig:latentspace} in the appendix available online.  For $60$k training of MNIST, we investigate Mapper approaches based on just PCA and VAE since they appear to be the best performing on average. For 60k training of Fashion-MNIST we investigate Mapper approach based on just PCA, which appear to be best performer. In this case we study just PCA due to the computational limitation of our prototype implementation. We will investigate other also other approaches and ways of improving efficiency as a future study. We present the structure of the end classifier that we use in our MC method in Appendix~\ref{secclassifier} available online.

For the initial accuracies of our methods and the traditional CNN approach, see Tab. \ref{tab:accuracies}; for the overall robustness calculations, see Figs. \ref{fig:l2_mnist_10k} through \ref{fig:l2_fashion_60k}.

\subsection{Examples of Image Data as a Function of Noise Parameter}
In Fig.~\ref{fig:mnistp} we present some representative examples of perturbation of images from MNIST dataset. On Fig.~\ref{fig:fmnistp} we present some examples of perturbation of images from Fashion-MNIST dataset. More detailed examples can be found in the appendix to extended version of this paper.
\begin{figure}[h]
\begin{center}
  \begin{subfigure}[]{0.1\textwidth}
    \includegraphics[width=\textwidth]{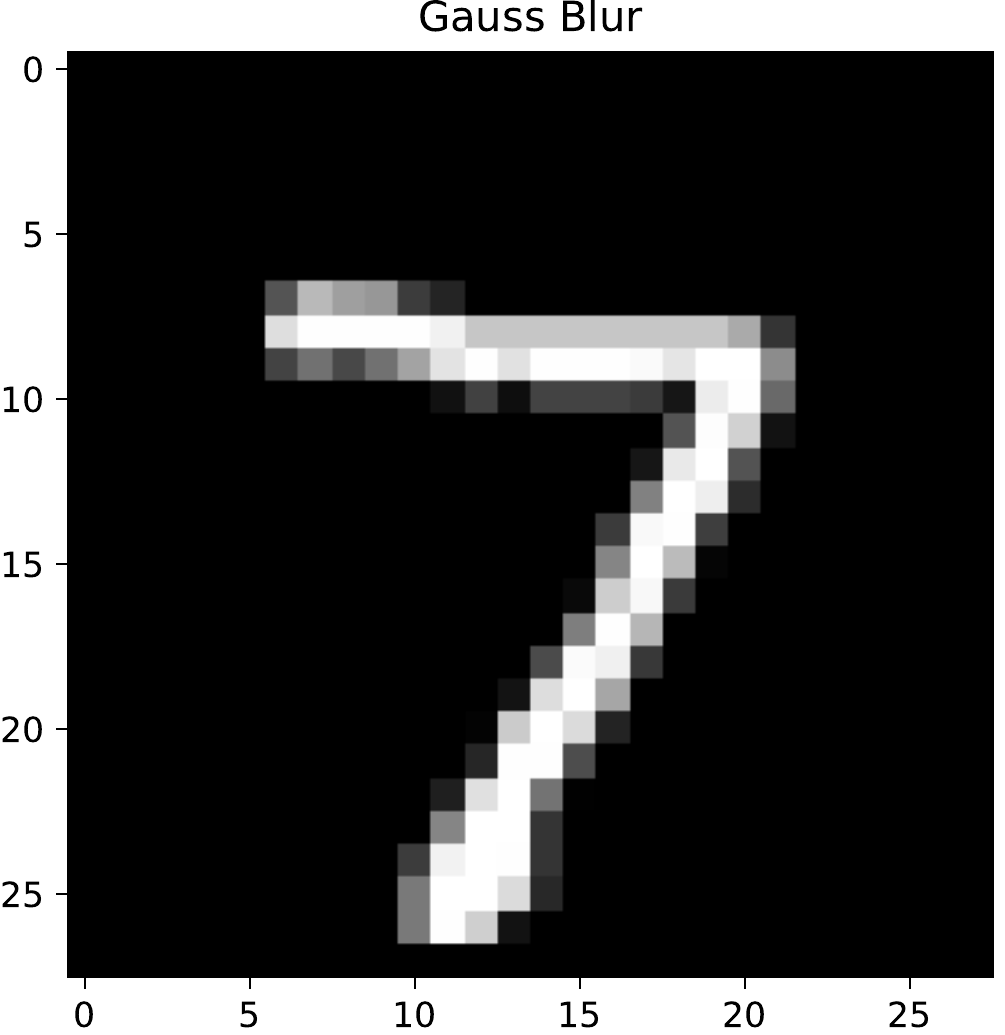}
     \caption{$l^2 \approx 0.02$}
   \label{fig:gb_1} 
  \end{subfigure}
\hspace{0.1em}
  \begin{subfigure}[]{0.1\textwidth}
    \includegraphics[width=\textwidth]{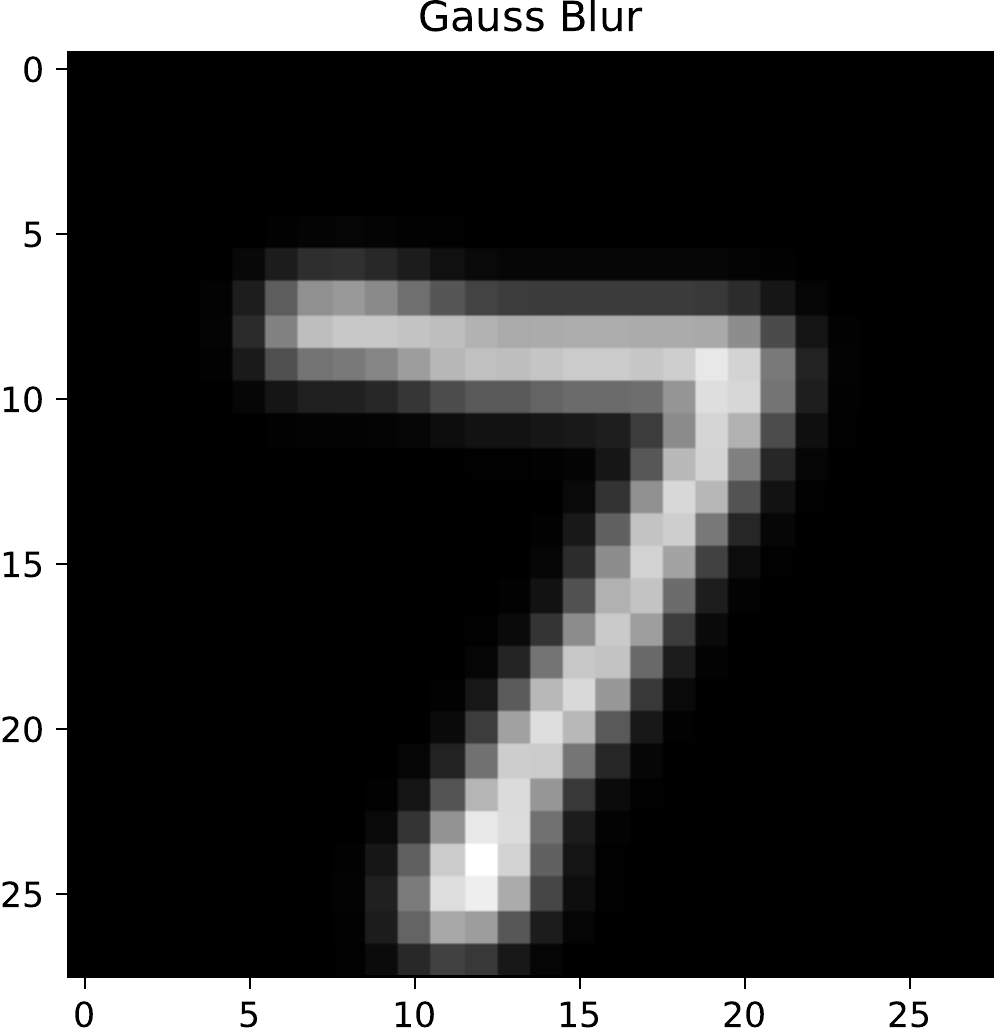}
     \caption{$l^2 \approx 2.5$}
   \label{fig:gb_3} 
  \end{subfigure}  
\hspace{0.1em}
  \begin{subfigure}[]{0.1\textwidth}
    \includegraphics[width=\textwidth]{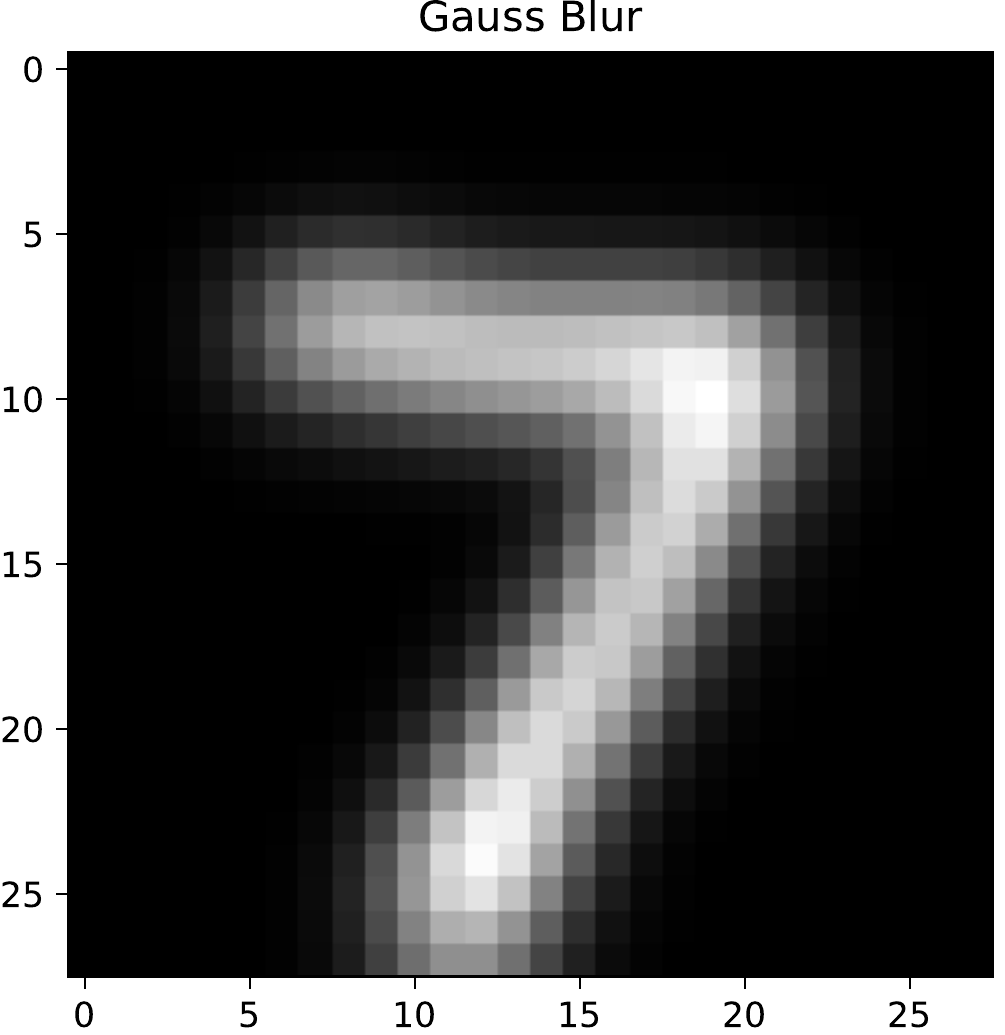}
     \caption{$l^2 \approx 4.6$}
   \label{fig:gb_6} 
  \end{subfigure}  

  \begin{subfigure}[]{0.1\textwidth}
    \includegraphics[width=\textwidth]{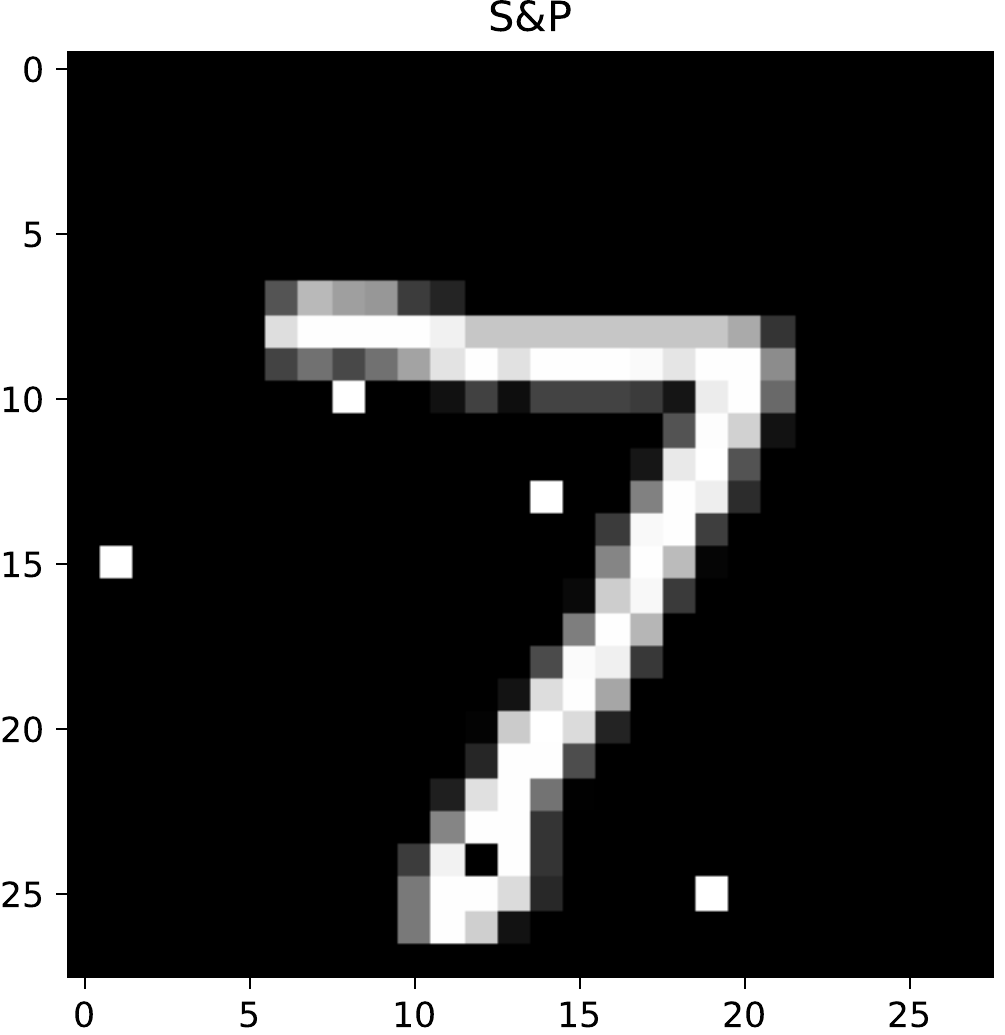}
     \caption{$l^2 \approx 2.4$}
   \label{fig:sp_10} 
  \end{subfigure}
\hspace{0.1em}
  \begin{subfigure}[]{0.1\textwidth}
    \includegraphics[width=\textwidth]{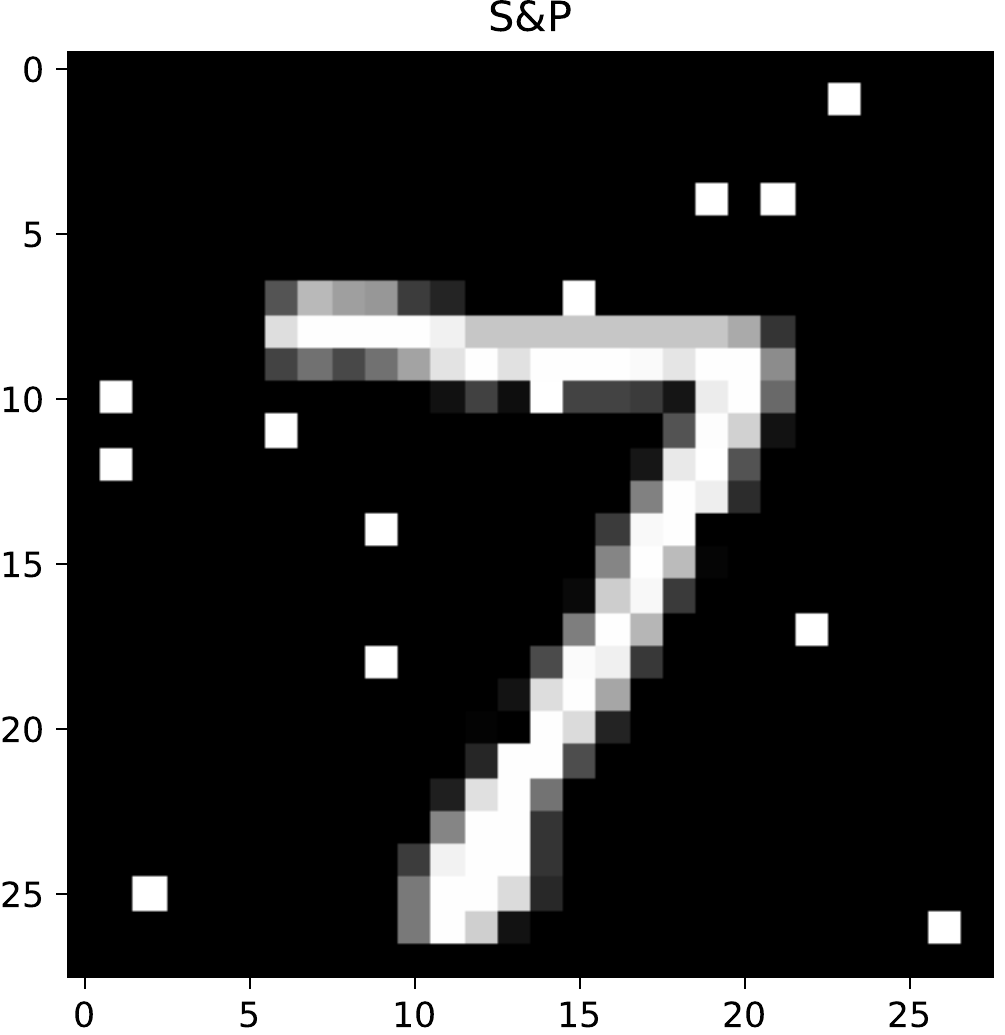}
     \caption{$l^2 \approx 3.6$}
   \label{fig:sp_30} 
  \end{subfigure}  
\hspace{0.1em}
  \begin{subfigure}[]{0.1\textwidth}
    \includegraphics[width=\textwidth]{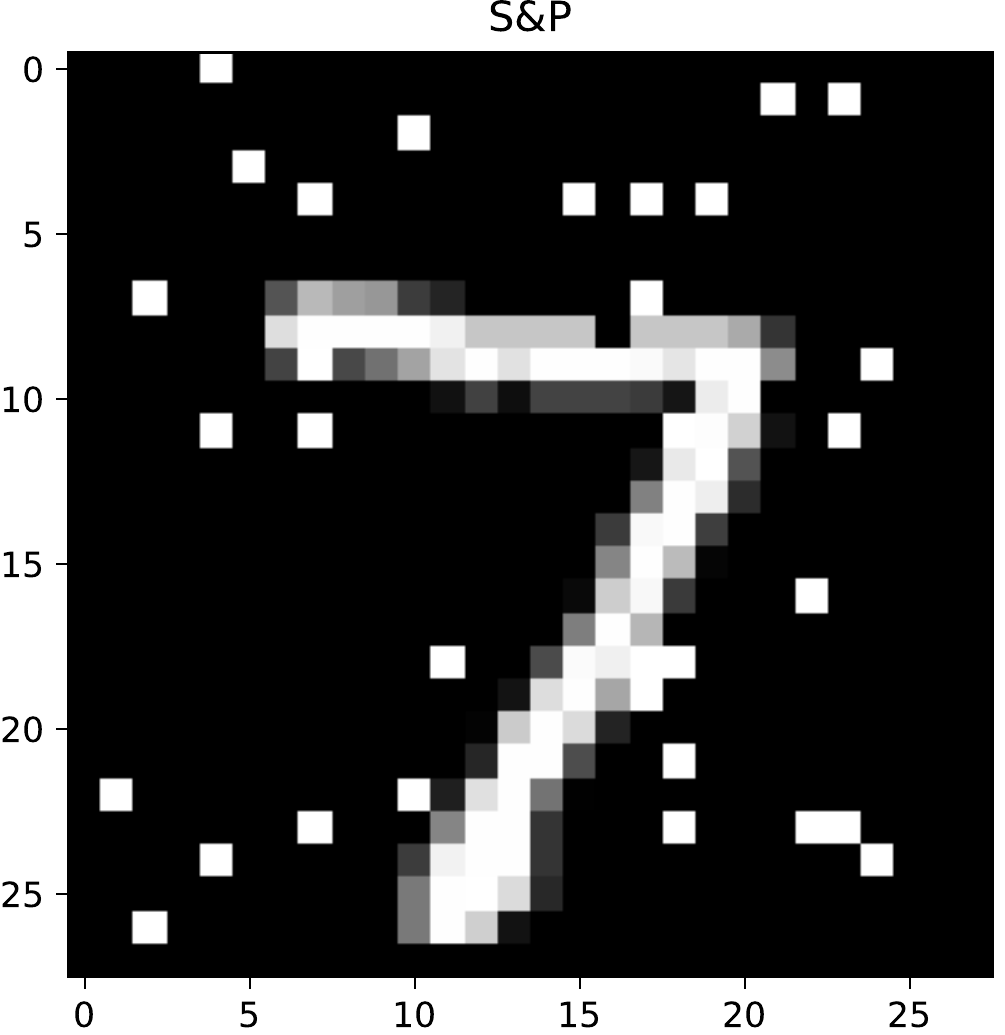}
     \caption{$l^2 \approx 5.6$}
   \label{fig:sp_60} 
  \end{subfigure} 
  
  \begin{subfigure}[]{0.1\textwidth}
    \includegraphics[width=\textwidth]{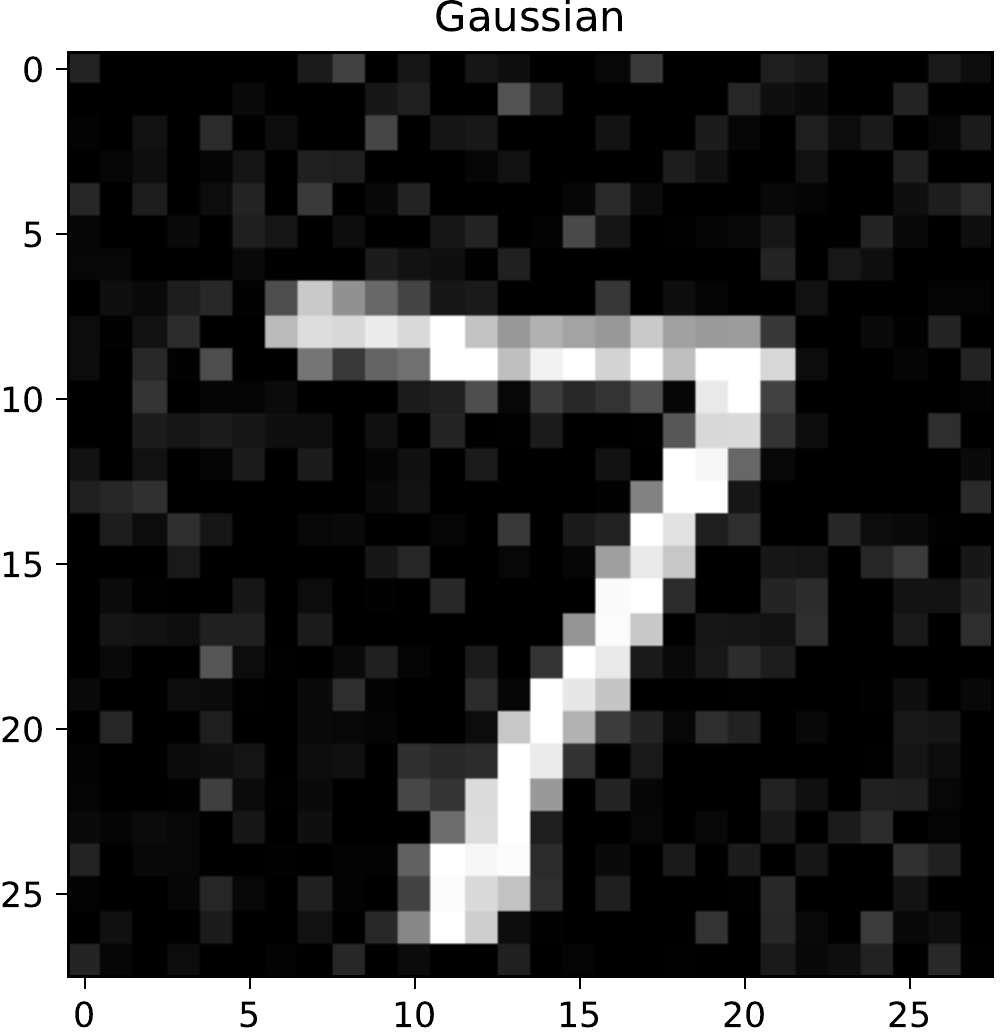}
     \caption{$l^2 \approx 2.2$}
   \label{fig:gaussian_1} 
  \end{subfigure}
\hspace{0.1em}
  \begin{subfigure}[]{0.1\textwidth}
    \includegraphics[width=\textwidth]{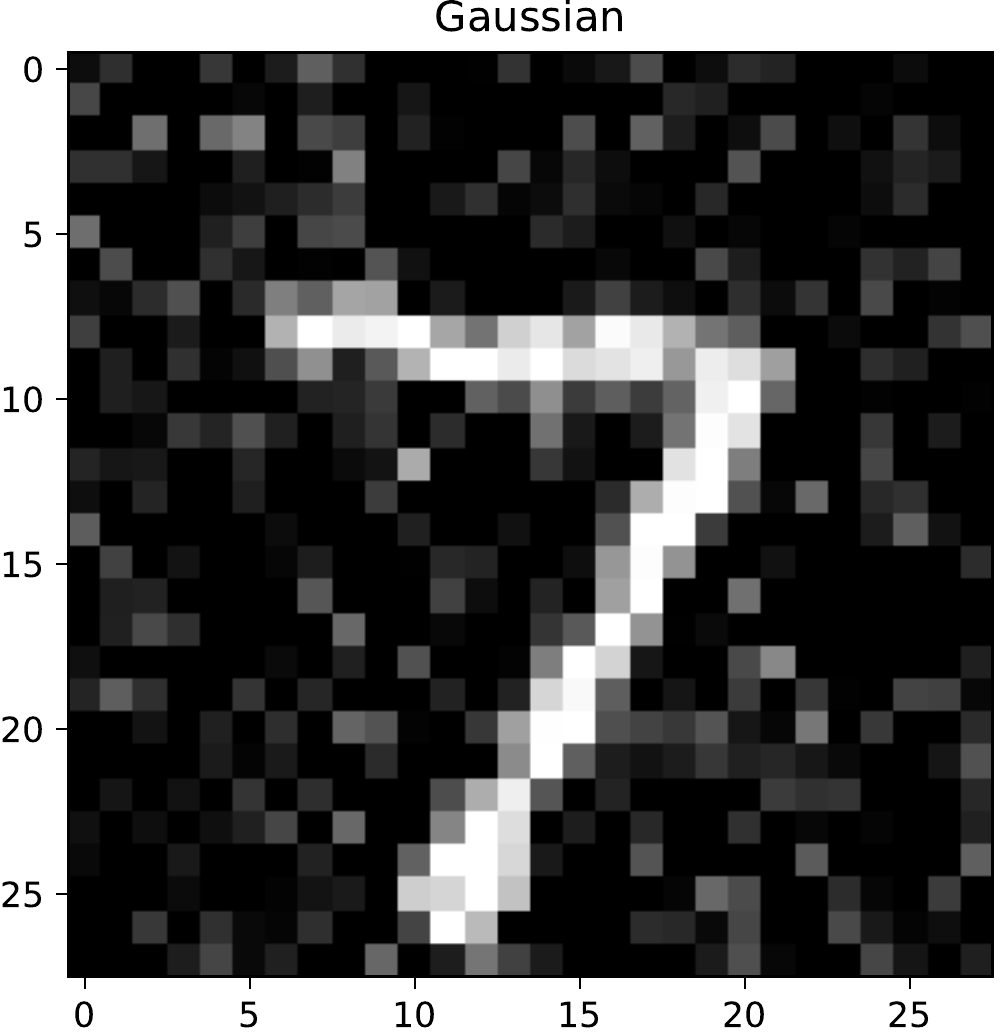}
     \caption{$l^2 \approx 3.8$}
   \label{fig:gaussian_4} 
  \end{subfigure}  
\hspace{0.1em}
  \begin{subfigure}[]{0.1\textwidth}
    \includegraphics[width=\textwidth]{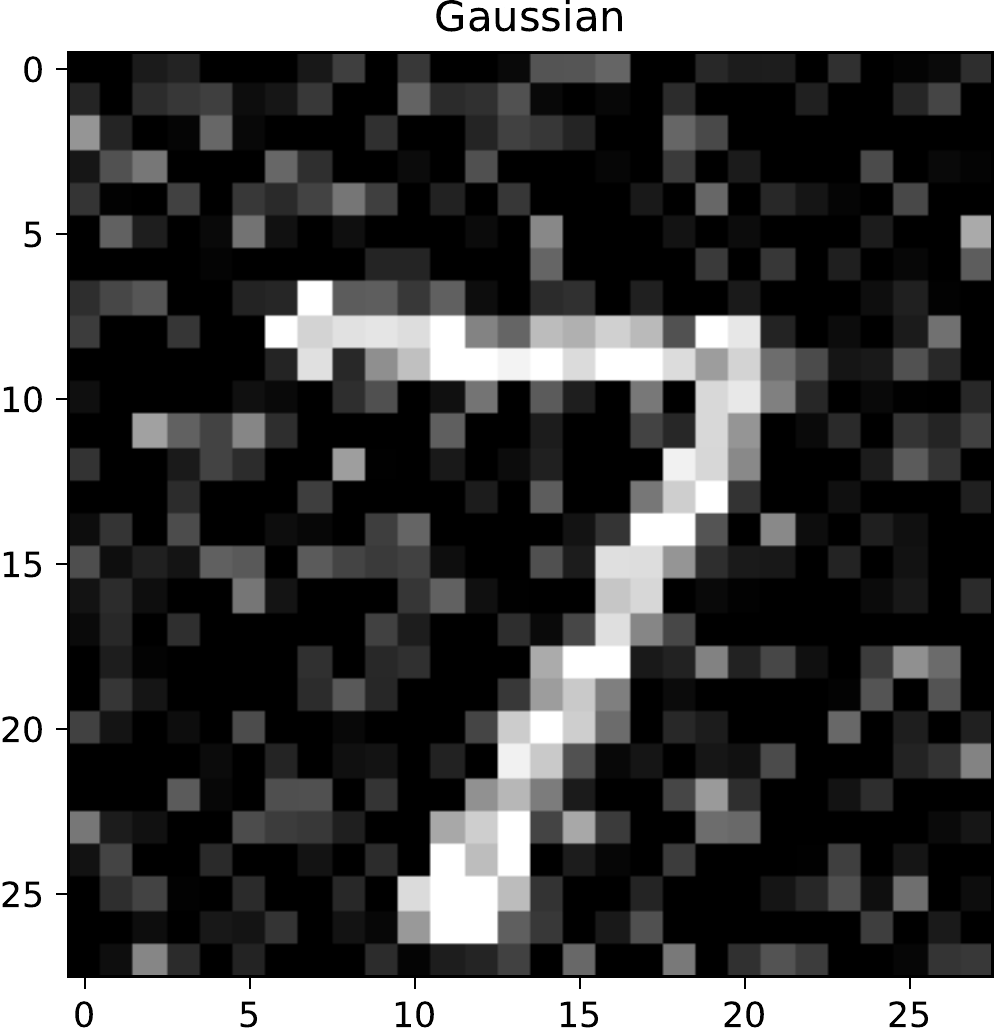}
     \caption{$l^2 \approx 4.6$}
   \label{fig:gaussian_6}
  \end{subfigure}
  \end{center}
\caption{Some examples of a digit 7 from MNIST database perturbed using the noise models.}
\label{fig:mnistp}
\end{figure}
\begin{figure}[h]
\begin{center}
  \begin{subfigure}[]{0.1\textwidth}
    \includegraphics[width=\textwidth]{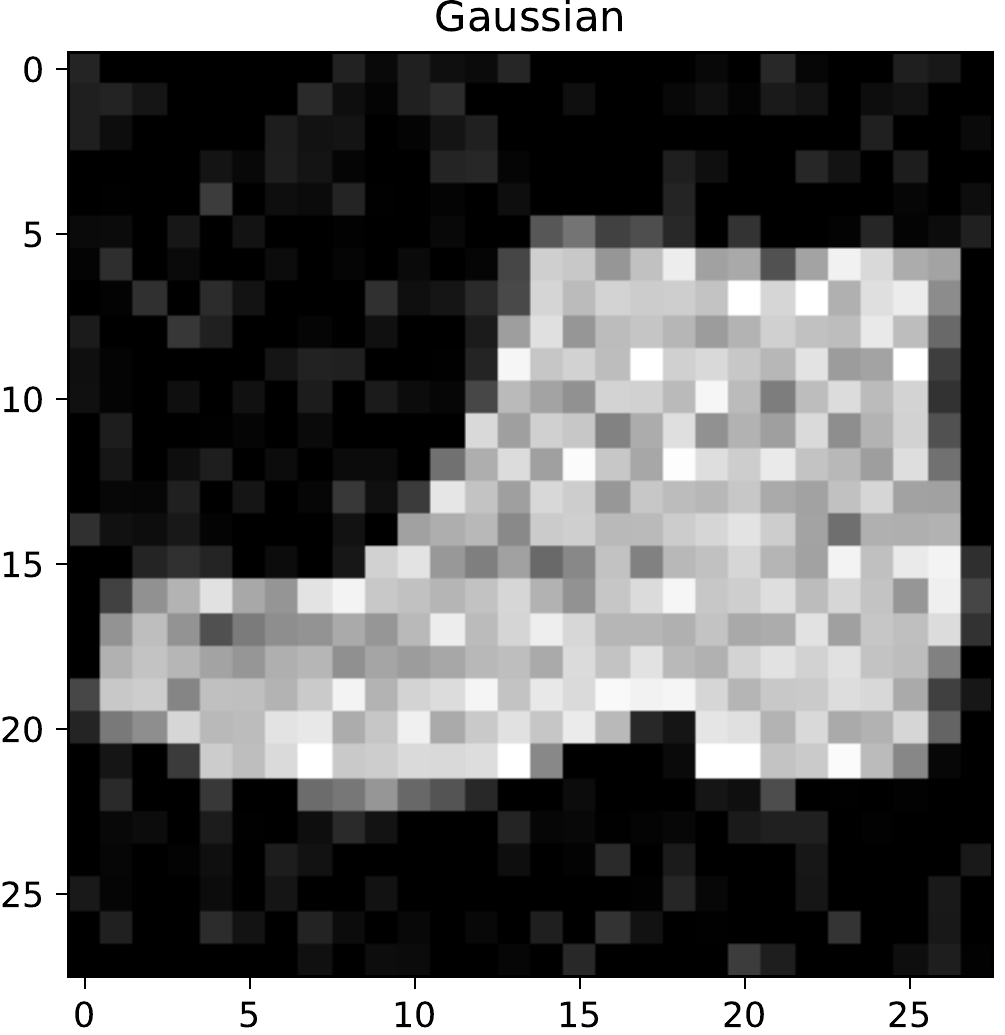}
     \caption{$l^2 \approx 2.4$}
   \label{fig:gaussian_1f} 
  \end{subfigure}
\hspace{0.1em}
  \begin{subfigure}[]{0.1\textwidth}
    \includegraphics[width=\textwidth]{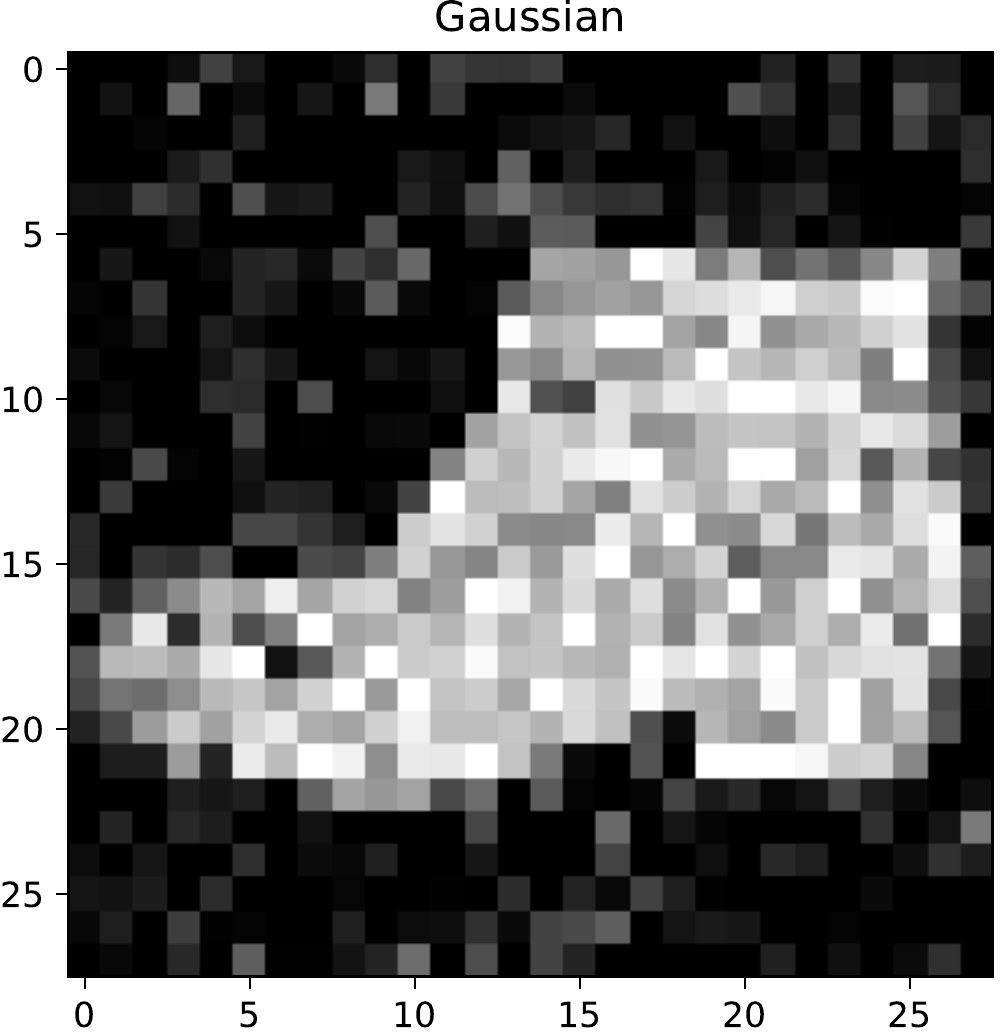}
     \caption{$l^2 \approx 3.9$}
   \label{fig:gaussian_3f} 
  \end{subfigure}  
\hspace{0.1em}
  \begin{subfigure}[]{0.1\textwidth}
    \includegraphics[width=\textwidth]{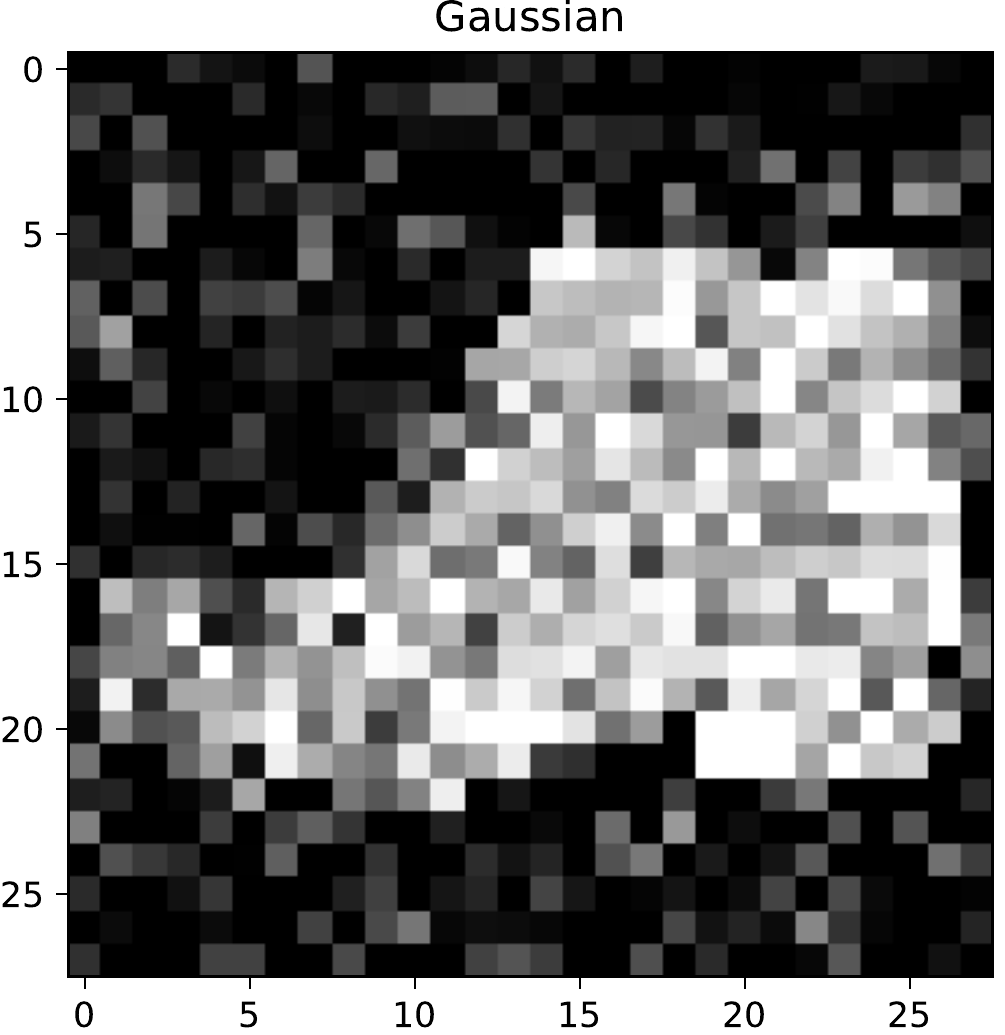}
     \caption{$l^2 \approx 5.5$}
   \label{fig:gaussian_6f}
  \end{subfigure}  
  
  \begin{subfigure}[]{0.1\textwidth}
    \includegraphics[width=\textwidth]{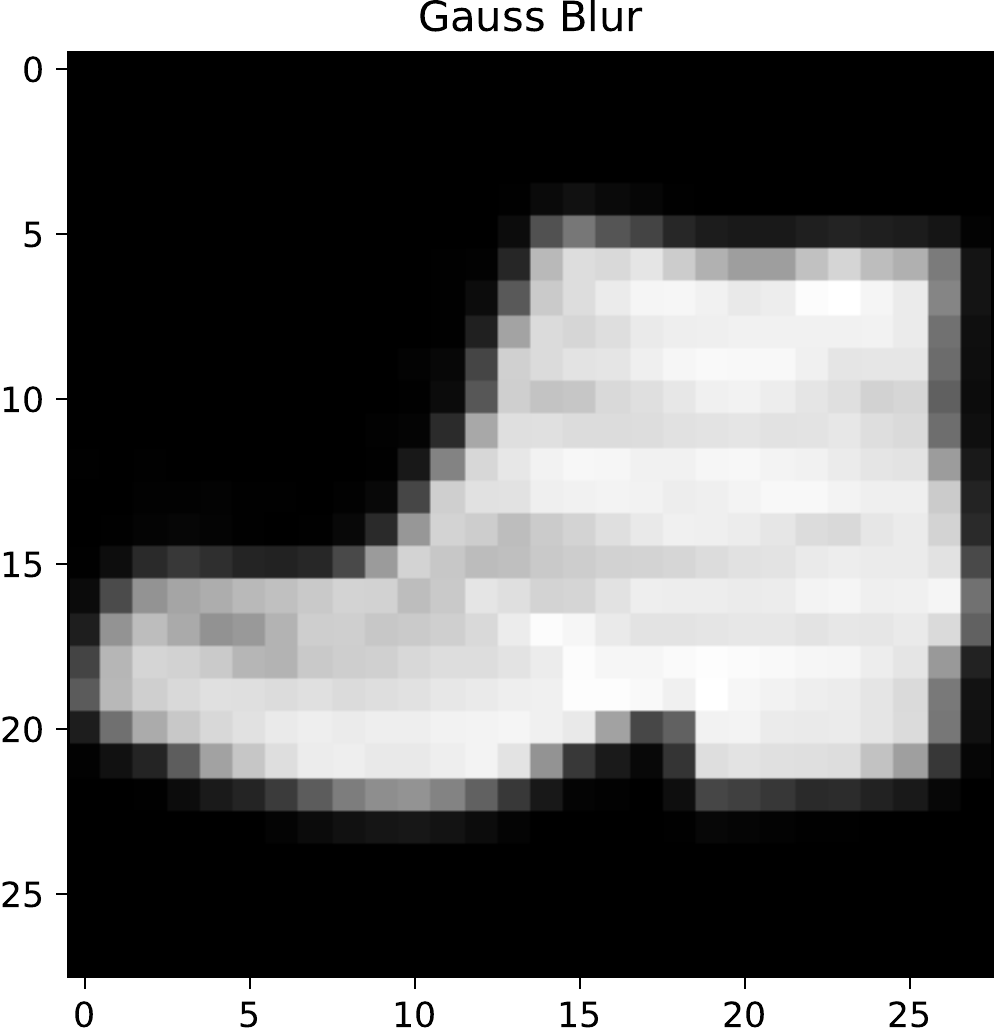}
     \caption{$l^2 \approx 1.4$}
   \label{fig:gb_2f} 
  \end{subfigure}
\hspace{0.1em}
  \begin{subfigure}[]{0.1\textwidth}
    \includegraphics[width=\textwidth]{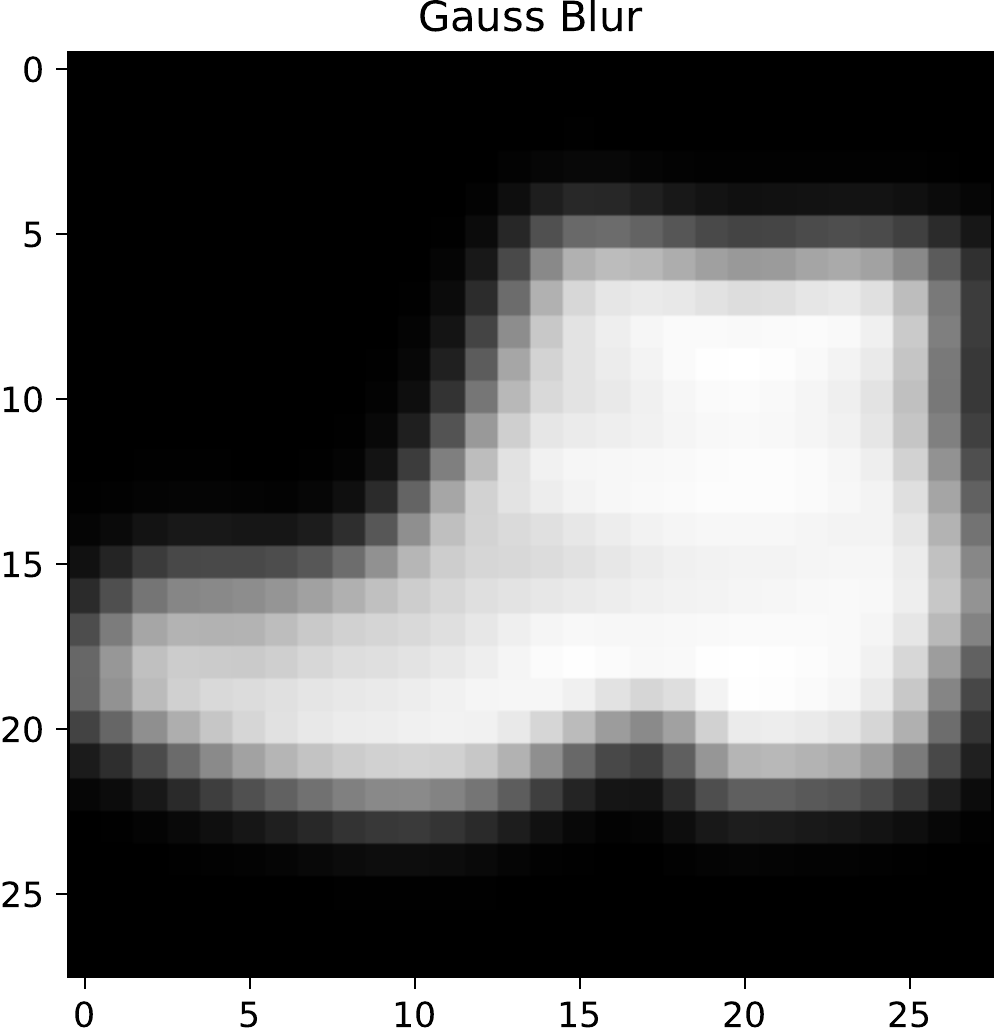}
     \caption{$l^2 \approx 3.1$}
   \label{fig:gb_4f} 
  \end{subfigure}  
\hspace{0.1em}
  \begin{subfigure}[]{0.1\textwidth}
    \includegraphics[width=\textwidth]{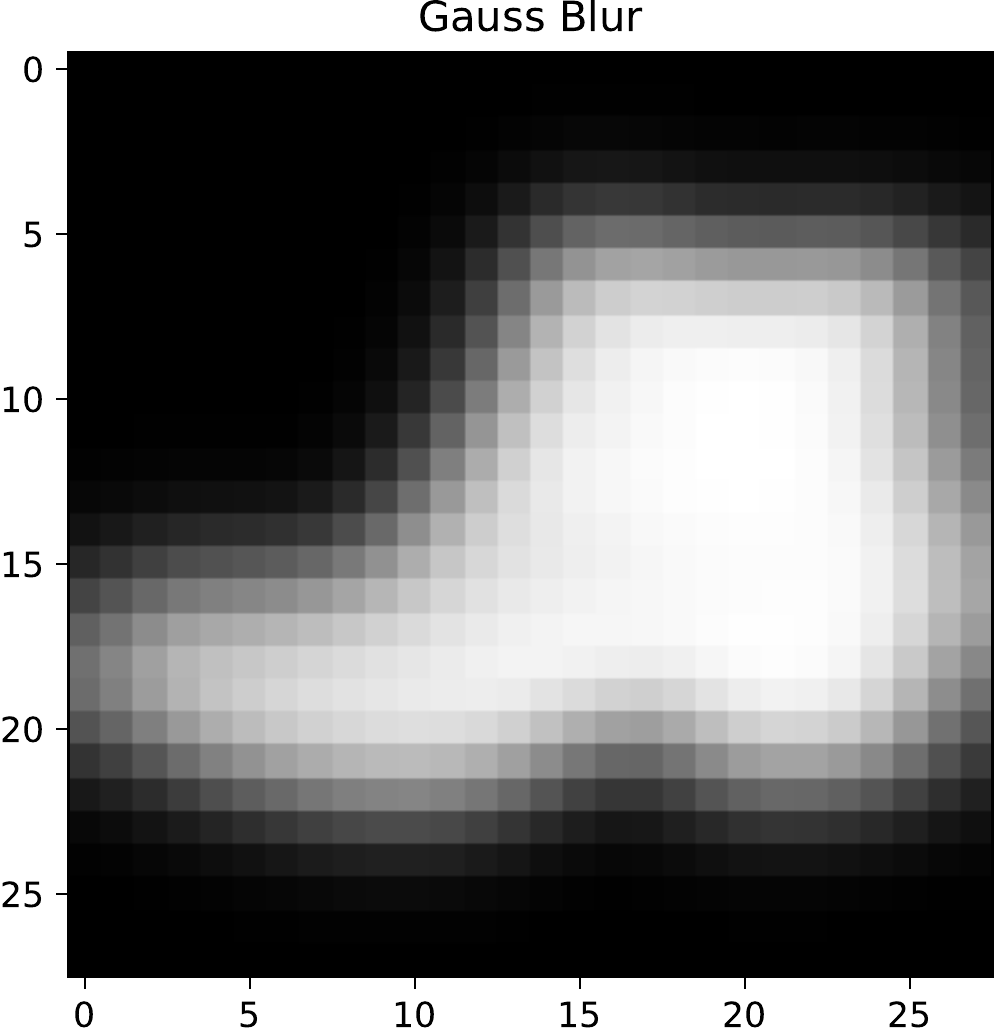}
     \caption{$l^2 \approx 4.0$}
   \label{fig:gb_6f} 
  \end{subfigure}  
  
\begin{subfigure}[]{0.1\textwidth}
    \includegraphics[width=\textwidth]{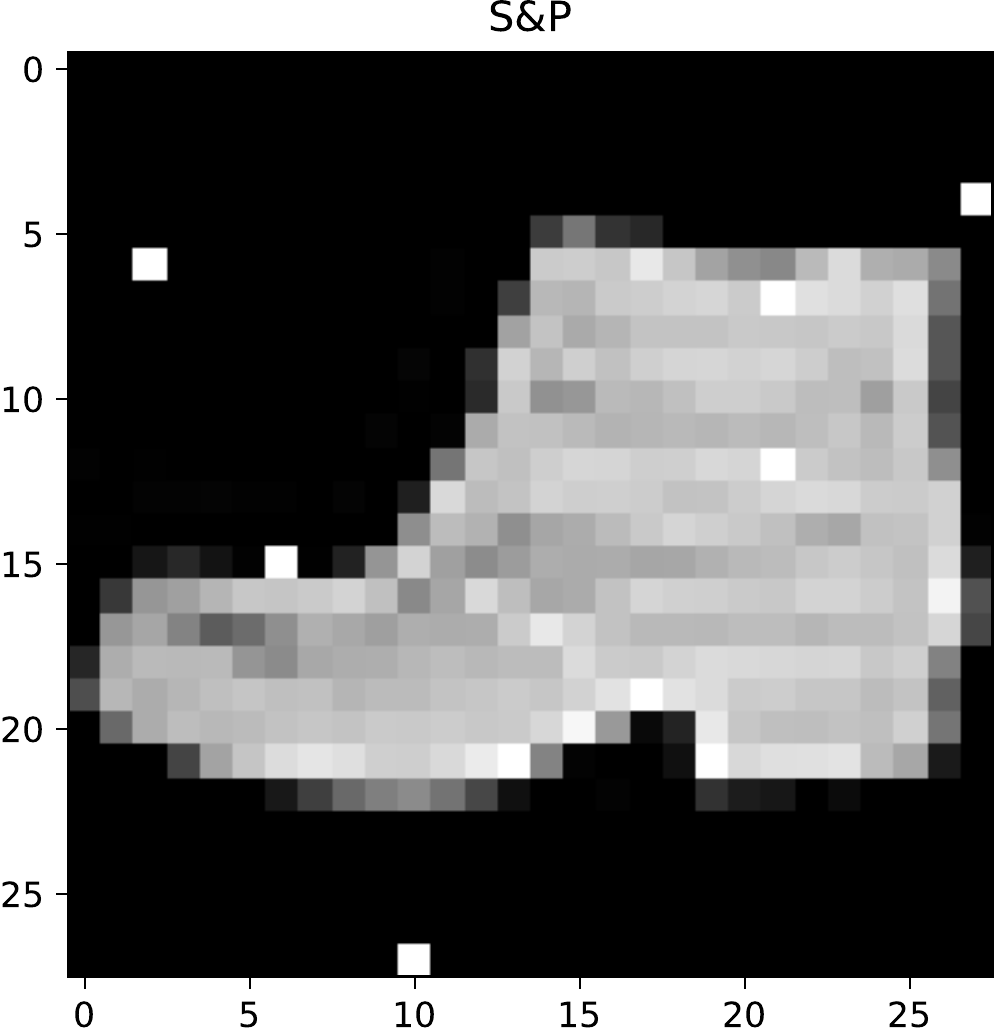}
     \caption{$l^2 \approx 2.0$}
   \label{fig:sp_10f} 
  \end{subfigure}
\hspace{0.1em}
  \begin{subfigure}[]{0.1\textwidth}
    \includegraphics[width=\textwidth]{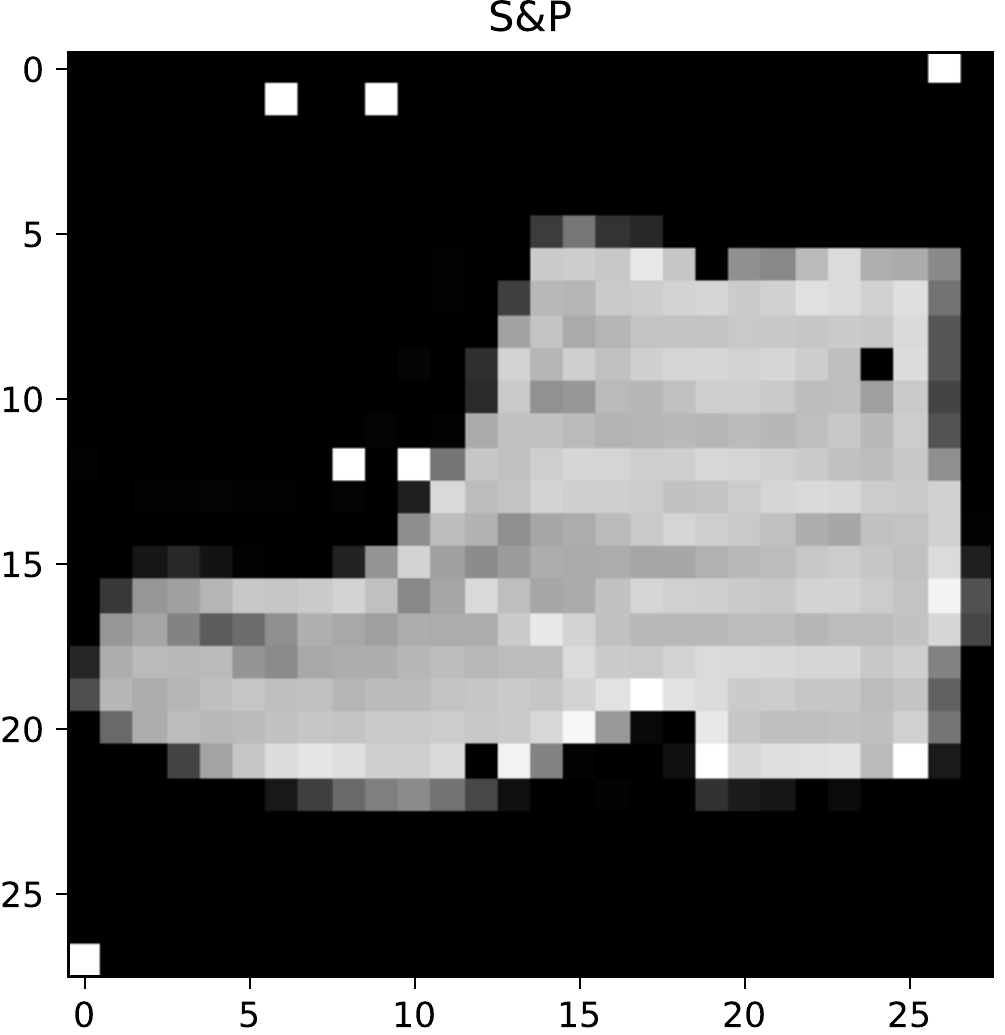}
     \caption{$l^2 \approx 2.8$}
   \label{fig:sp_30f} 
  \end{subfigure}  
\hspace{0.1em}
  \begin{subfigure}[]{0.1\textwidth}
    \includegraphics[width=\textwidth]{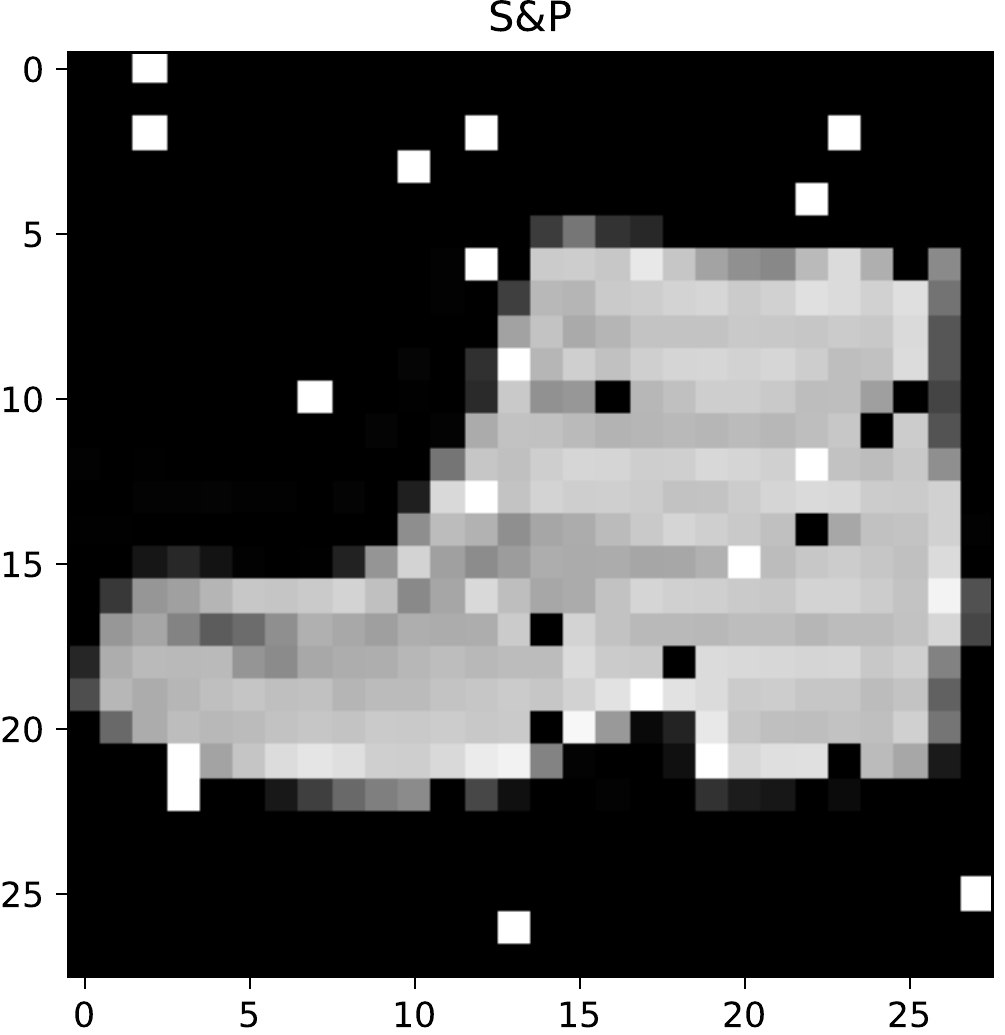}
     \caption{$l^2 \approx 4.2$}
   \label{fig:sp_60f} 
  \end{subfigure}  
\caption{Some examples of an image of a boot perturbed using the noise models.}
\label{fig:fmnistp}
\end{center}
\end{figure}

\begin{table}[!h]
  \vspace{-1em}
    \caption{The initial classification accuracies.  PCA, CAE, DAE, VAE are all Mapper based approaches, and only differ in the type of projection.  
    These values give the normalization in Equation \eqref{eq:accuracy}.} \label{tab:accuracies}
\begin{center}
{\scriptsize
 \begin{tabular}{| c | c | c |c|c|} 
    \hline
    &\multicolumn{4}{|c|}{{\bf Init. Accuracy}}\\\hline
 & {\bf $10$k MNIST} & {\bf $60$k MNIST} &
{\bf $10$k F.-MNIST} & {\bf $60$k F.-MNIST} \\\hline\hline
CNN & 97.00 & 98.51 & 89.35 & 93.4\\\hline
 \hline\hline
    PCA & 94.53 & 97.33 & 81.64 & 87.20 \\ 
    CAE & 93.61 & NA & 81.27 & NA \\
    DAE & 93.99  & NA & NA & NA \\
    VAE & 94.95 & 97.68 & NA & NA \\
  \hline
  \end{tabular}}
\end{center}
\end{table}

\begin{figure}[h]
\centering
  \begin{subfigure}[t]{0.24\textwidth}
  \centering
    \includegraphics[width=\textwidth]{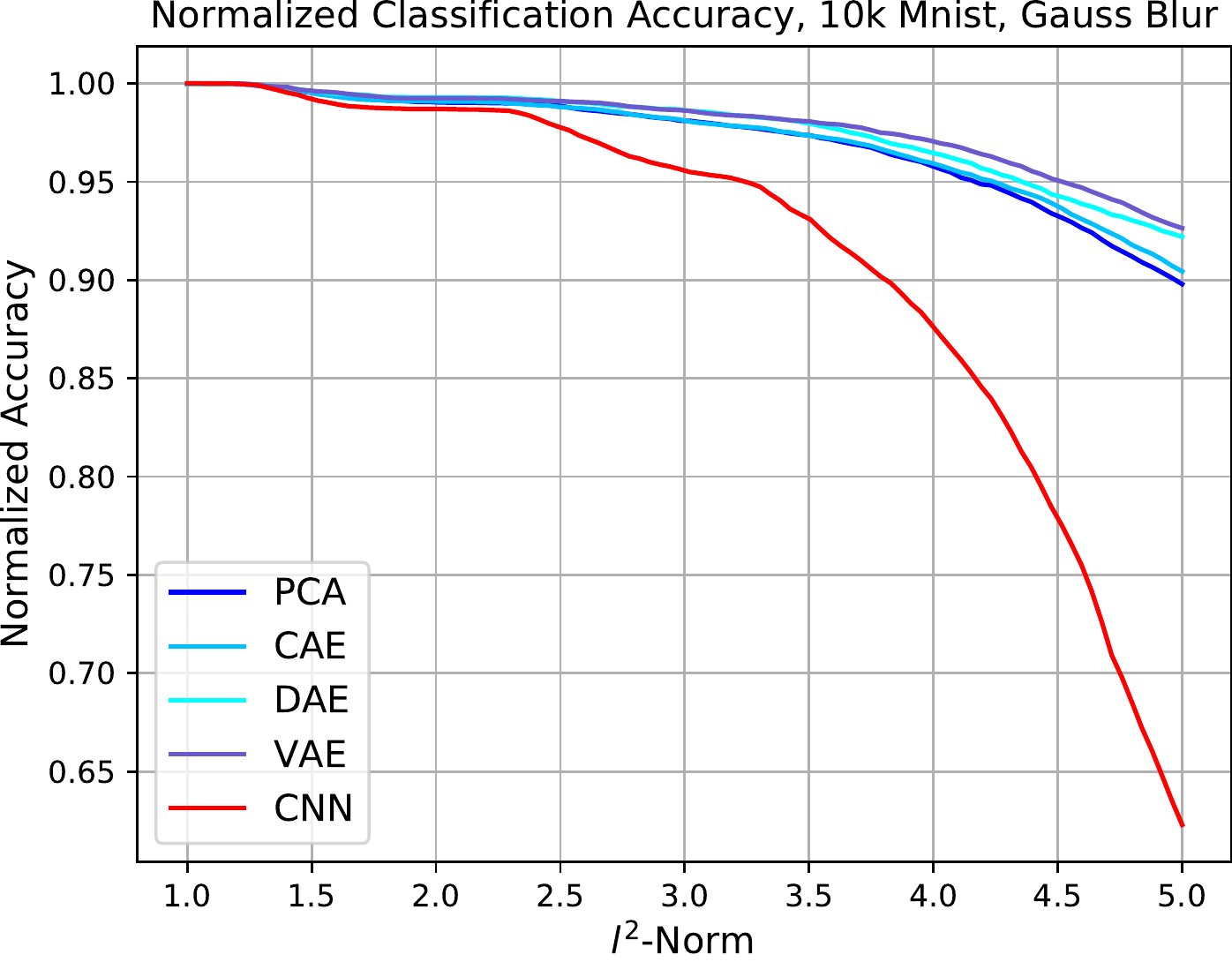}
   \label{fig:l2_mnist_gb_10k} 
  \end{subfigure}
  \begin{subfigure}[t]{0.24\textwidth}
    \includegraphics[width=\textwidth]{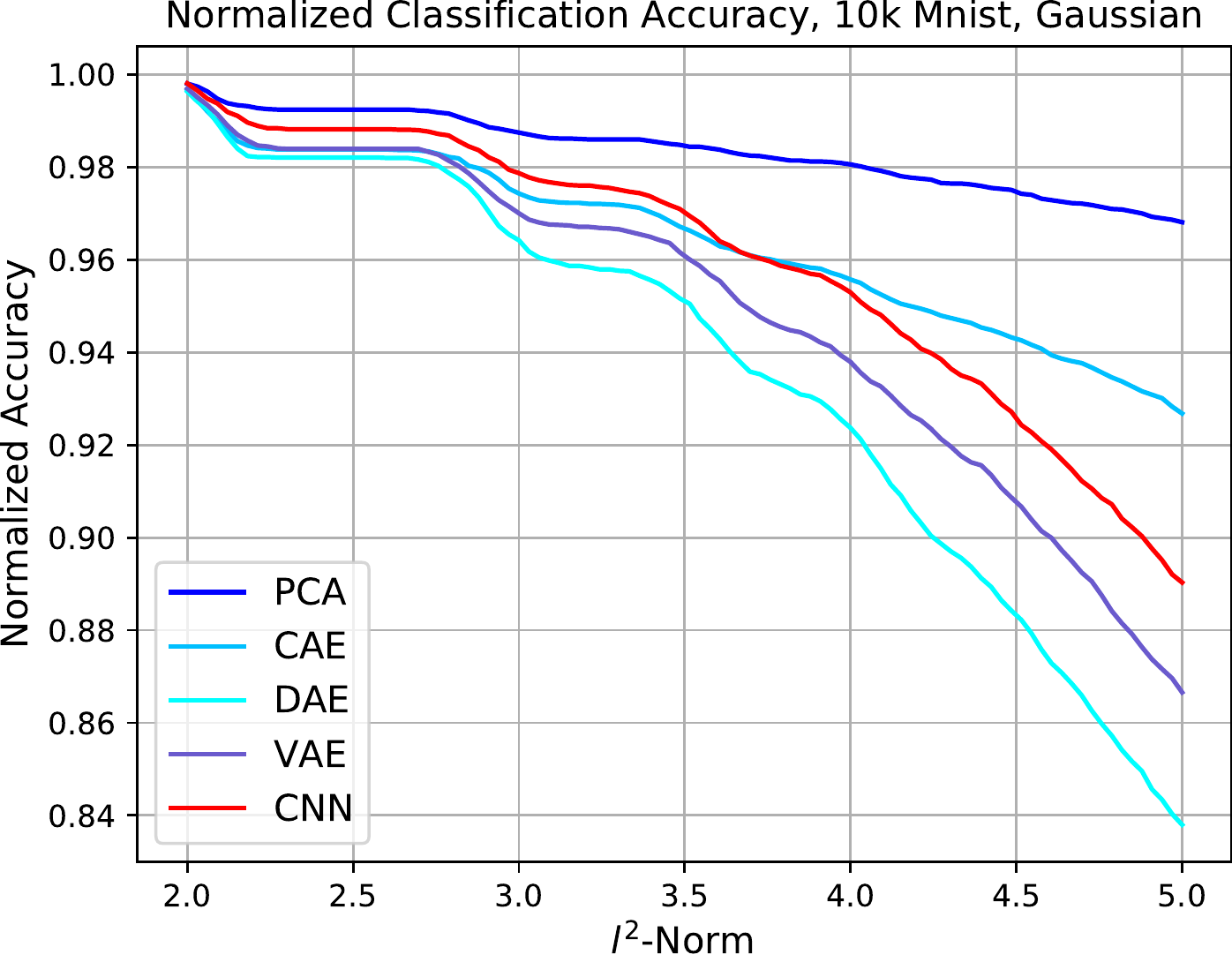}
   \label{fig:l2_mnist_ga_10k} 
  \end{subfigure}
    \begin{subfigure}[t]{0.24\textwidth}
    \includegraphics[width=\textwidth]{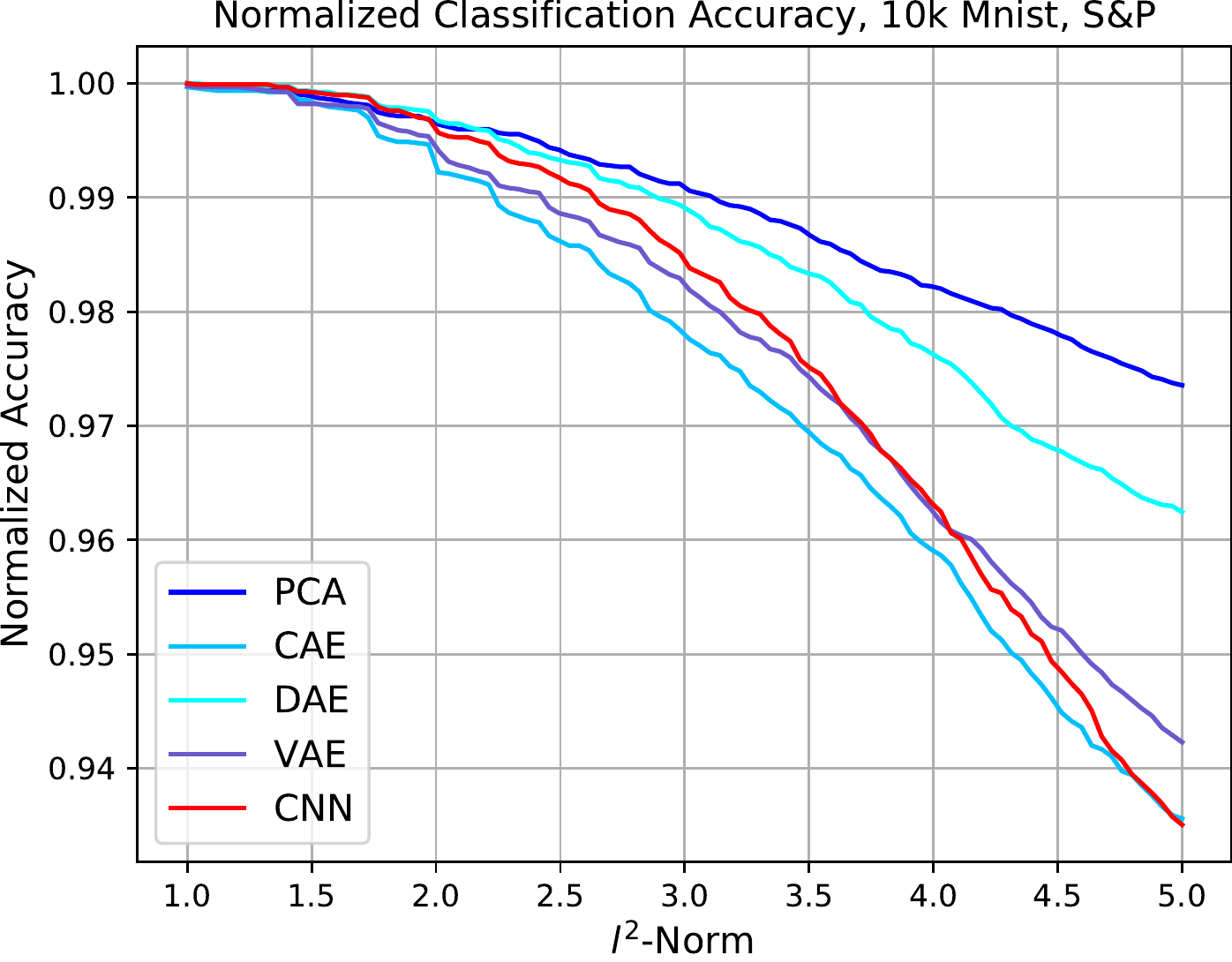}
   \label{fig:l2_mnist_sp_10k} 
  \end{subfigure}
\caption{The normalized accuracy (Eq. \eqref{eq:accuracy}) with respect to $l^2$-norm for $10$k MNIST.  The PCA based Mapper approach far outperforms the CNN approach for all the noise models.}
\label{fig:l2_mnist_10k}
\end{figure}

\begin{figure}[h]
\centering
  \begin{subfigure}[t]{0.24\textwidth}
    \includegraphics[width=\textwidth]{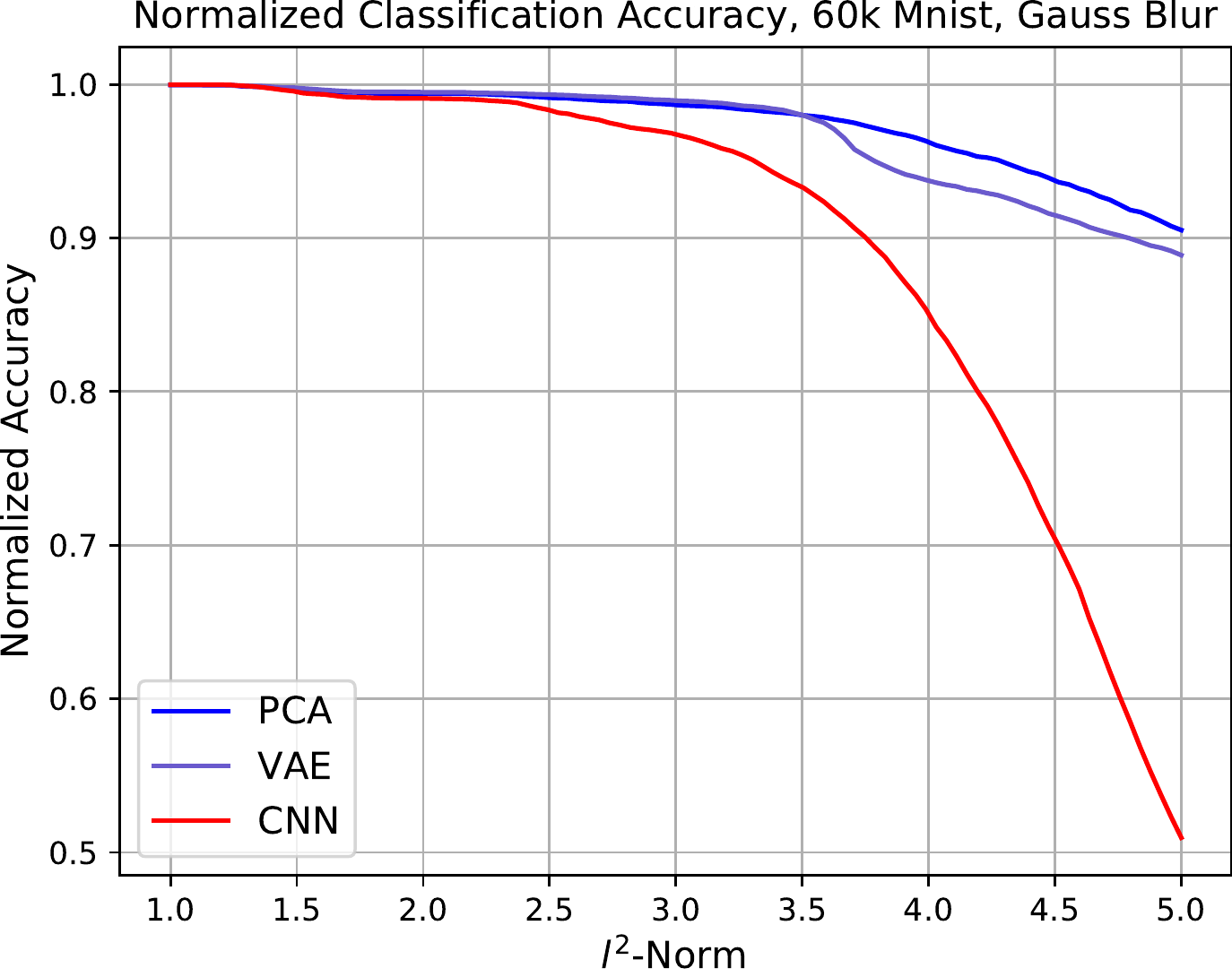}
   \label{fig:l2_mnist_60k} 
  \end{subfigure}
  \begin{subfigure}[t]{0.24\textwidth}
    \includegraphics[width=\textwidth]{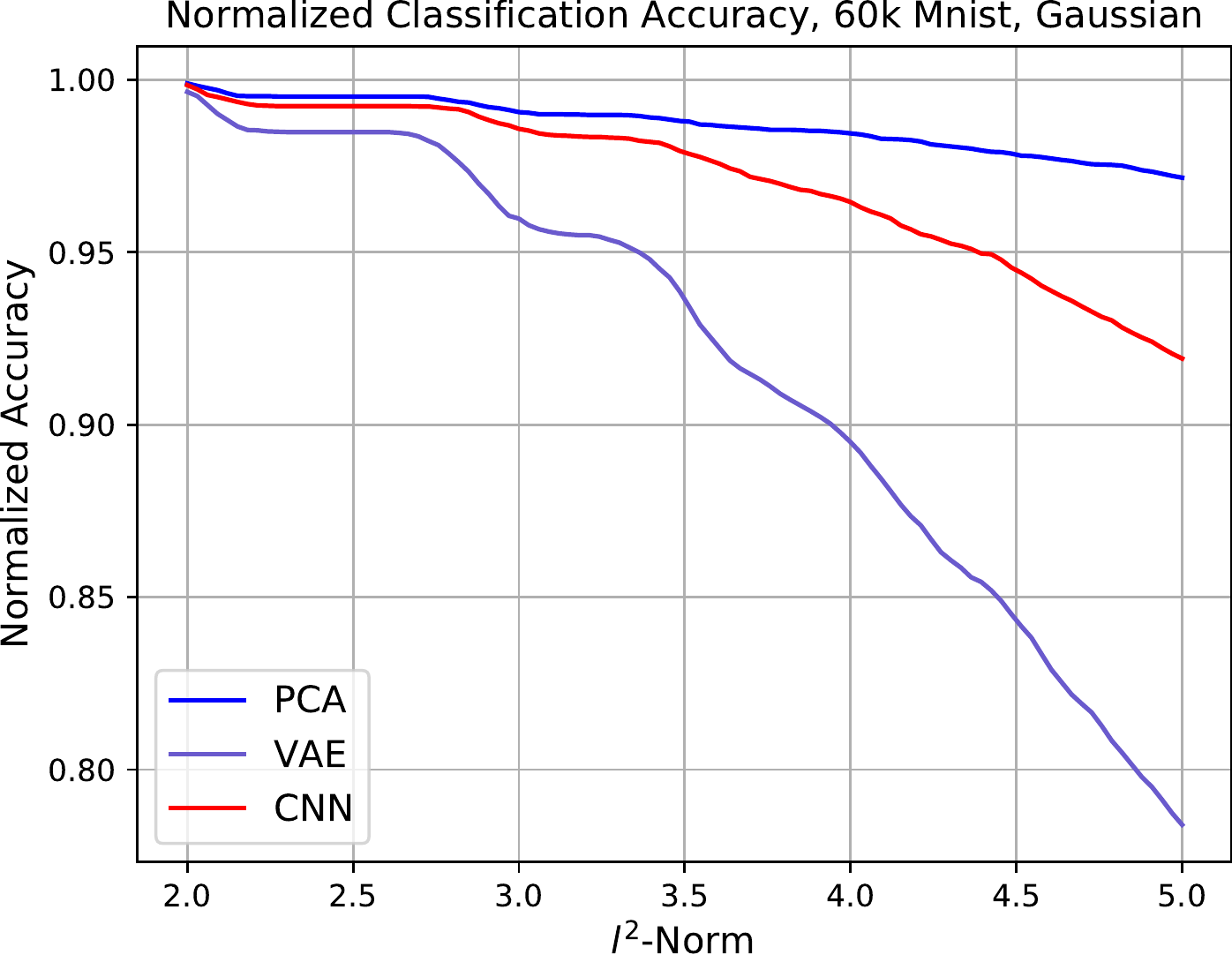}
   \label{fig:l2_mnist_ga_60k} 
  \end{subfigure}
  \begin{subfigure}[t]{0.24\textwidth}
    \includegraphics[width=\textwidth]{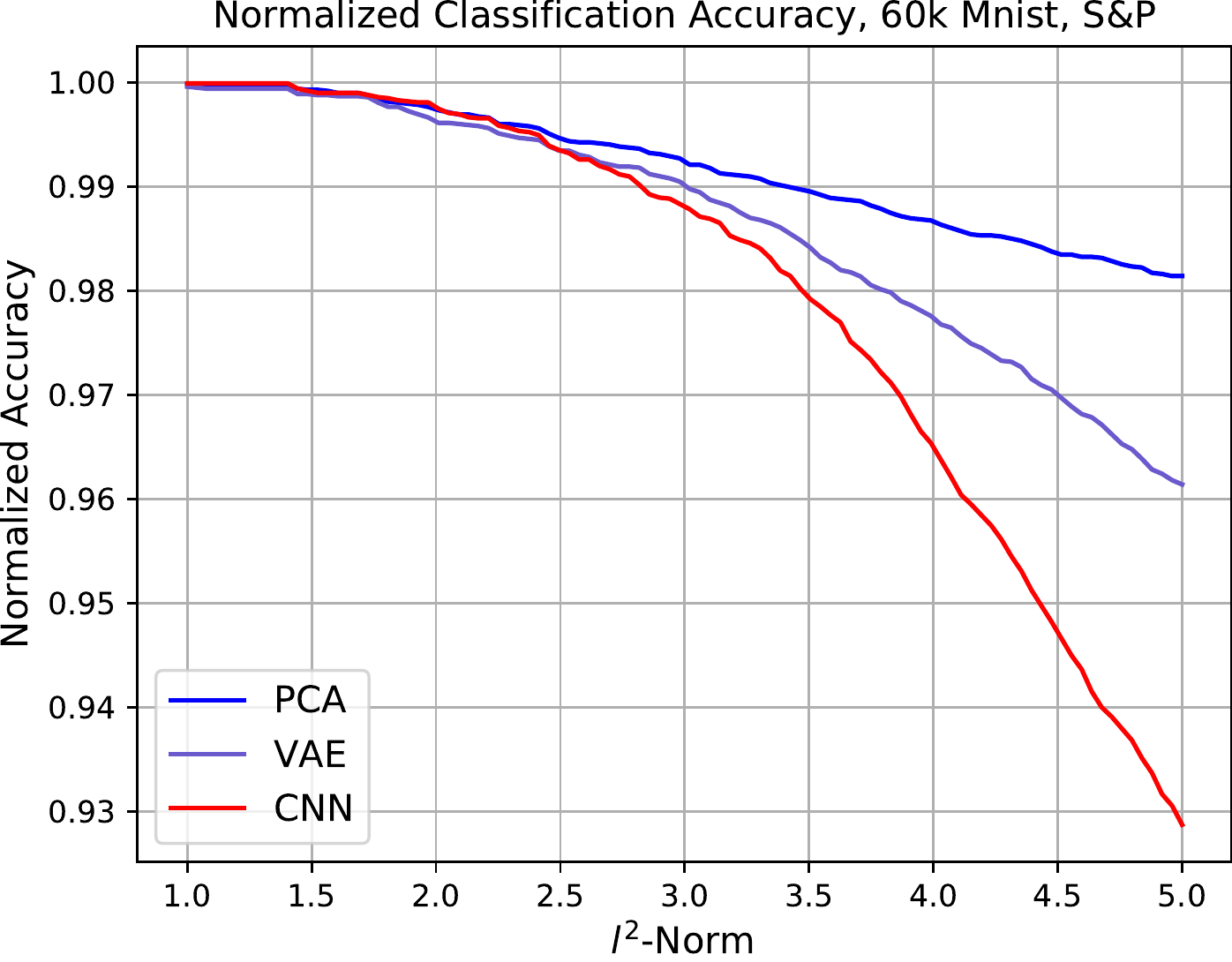}
   \label{fig:l2_mnist_sp_60k} 
  \end{subfigure}  
\caption{The normalized accuracy (Eq. \eqref{eq:accuracy}) with respect to $l^2$-norm for $60$k MNIST.  The PCA based Mapper approach far outperforms the CNN approach for all the noise models.  For $60$k MNIST, we choose to investigate just two Mapper based methods: PCA and VAE which seem to perform the best on average.}
\label{fig:l2_mnist_60k}
\end{figure}

\begin{figure}[h]
\centering
  \begin{subfigure}[t]{0.24\textwidth}
    \includegraphics[width=\textwidth]{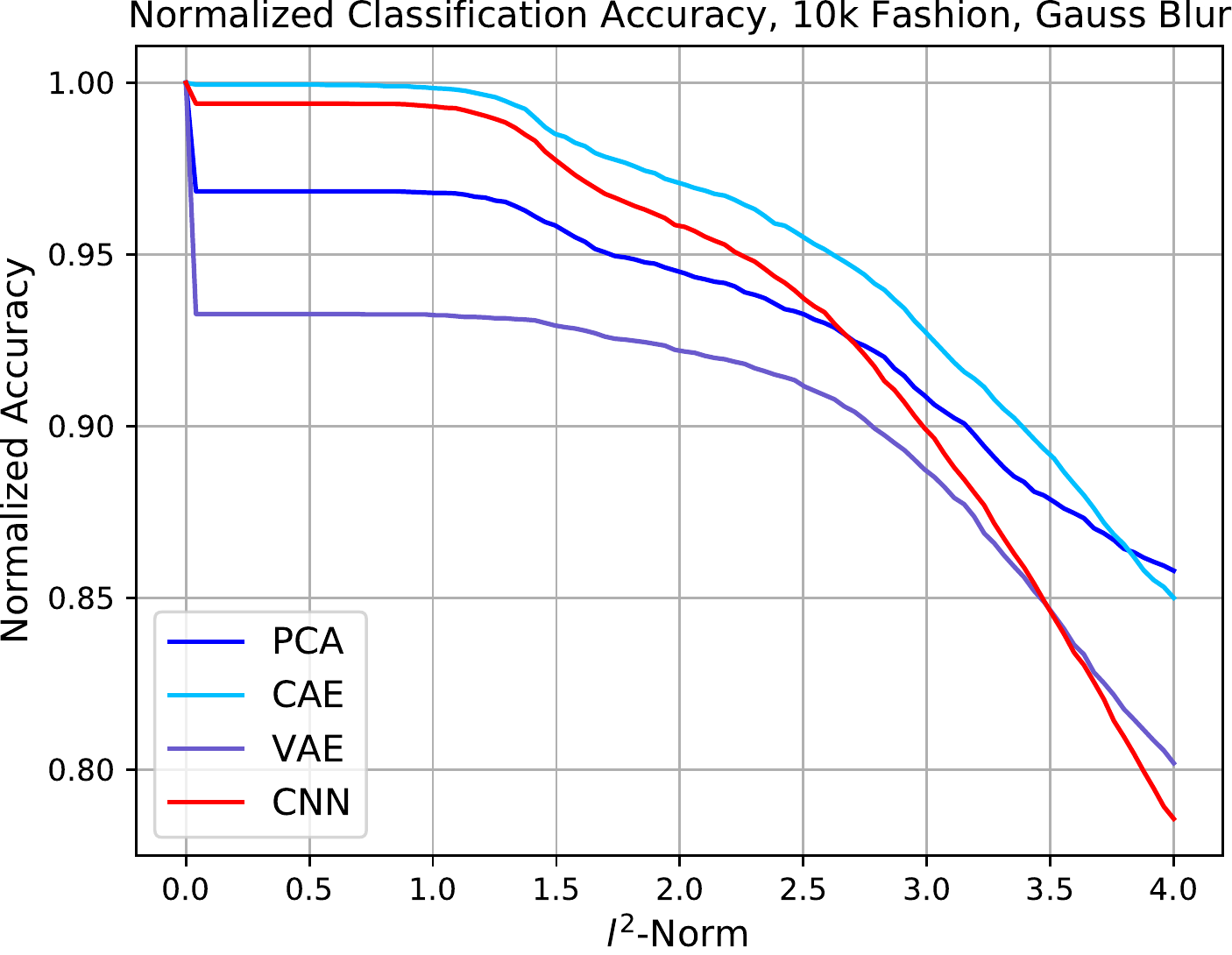}
   \label{fig:l2_fashion_gb_10k} 
  \end{subfigure}
  \begin{subfigure}[t]{0.24\textwidth}
    \includegraphics[width=\textwidth]{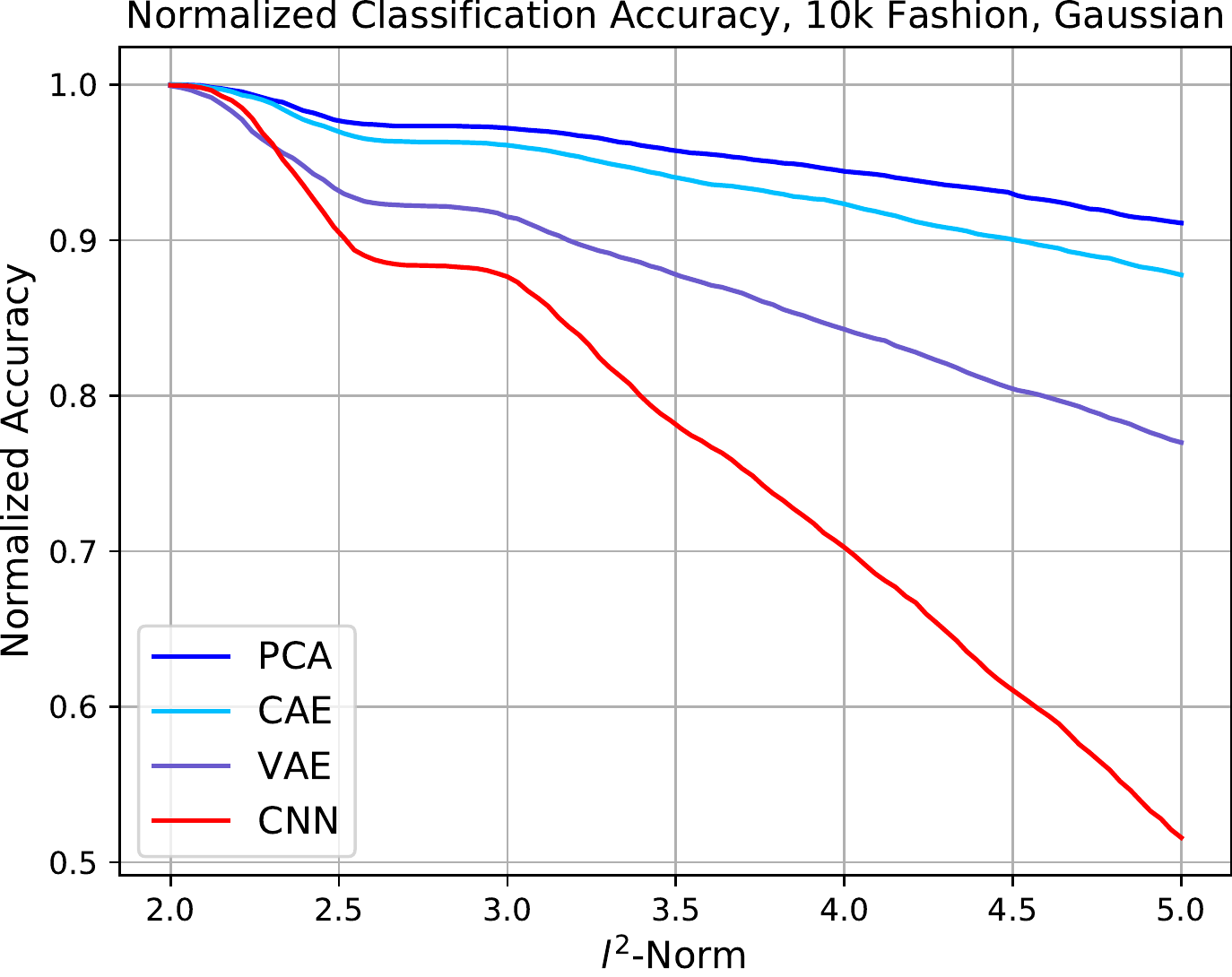}
   \label{fig:l2_fashion_ga_10k} 
  \end{subfigure}
  \begin{subfigure}[t]{0.24\textwidth}
    \includegraphics[width=\textwidth]{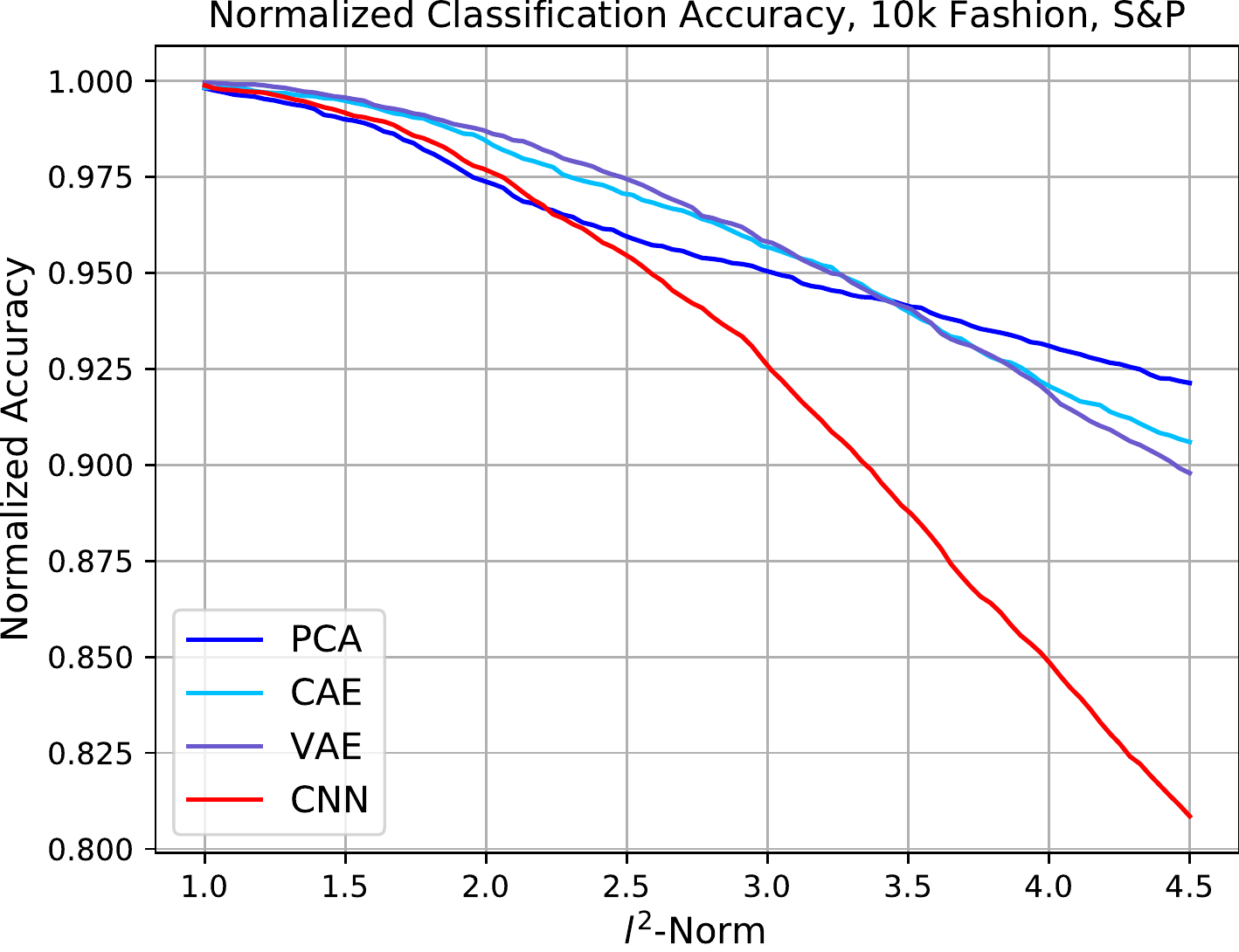}
   \label{fig:l2_fashion_sp_10k} 
  \end{subfigure}  
\vspace{-1.5em}
\caption{The normalized accuracy (Eq. \eqref{eq:accuracy}) with respect to $l^2$-norm for $10$k Fashion.  The PCA based Mapper approach far outperforms the CNN approach for all the noise models.  We choose to investigate PCA, CAE  and VAE based Mapper methods.}
\label{fig:l2_fashion_10k}
\end{figure}

\begin{figure}[h]
\centering
  \begin{subfigure}[t]{0.24\textwidth}
    \includegraphics[width=\textwidth]{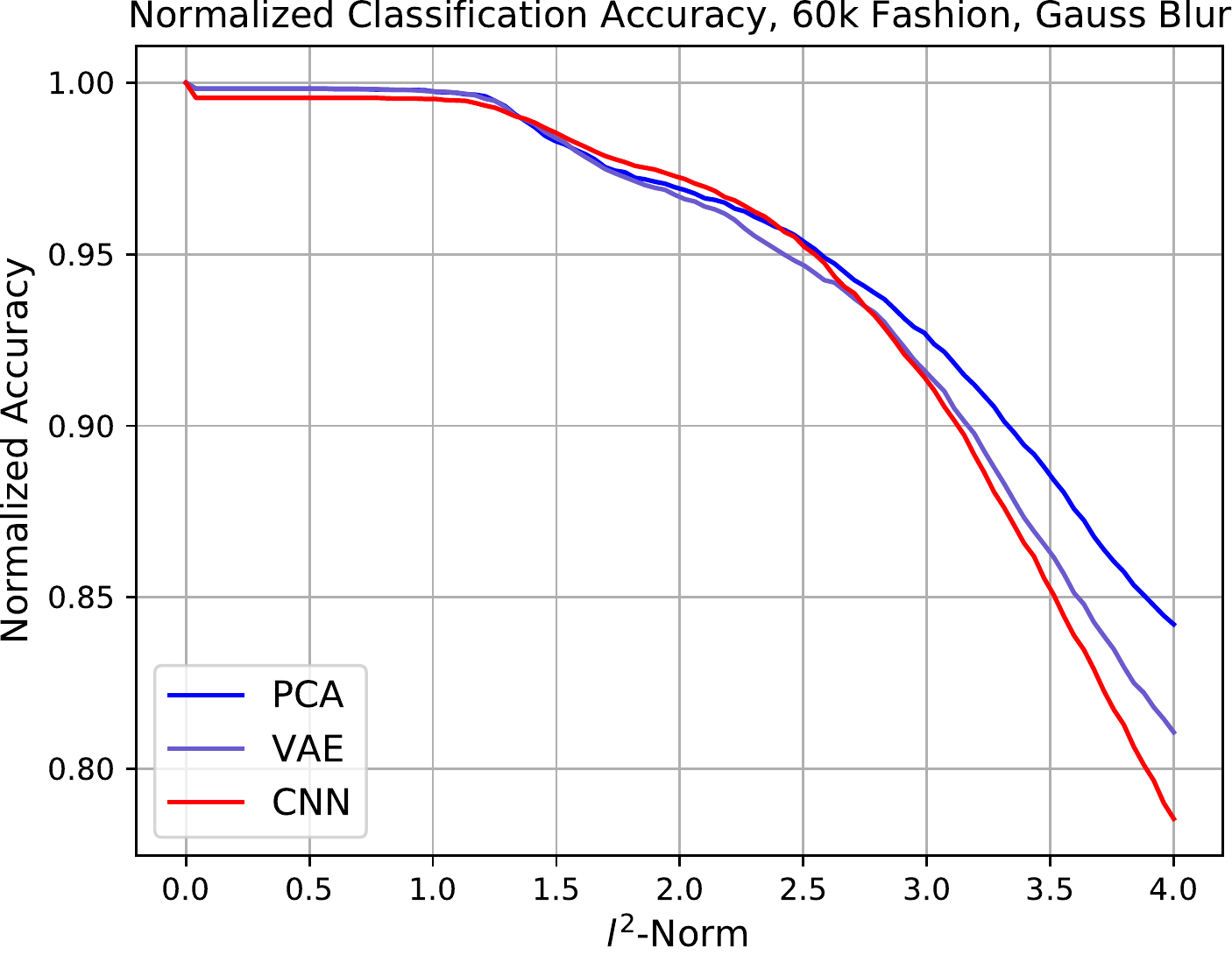}
   \label{fig:l2_fashion_gb_60k} 
  \end{subfigure}
  \begin{subfigure}[t]{0.24\textwidth}
    \includegraphics[width=\textwidth]{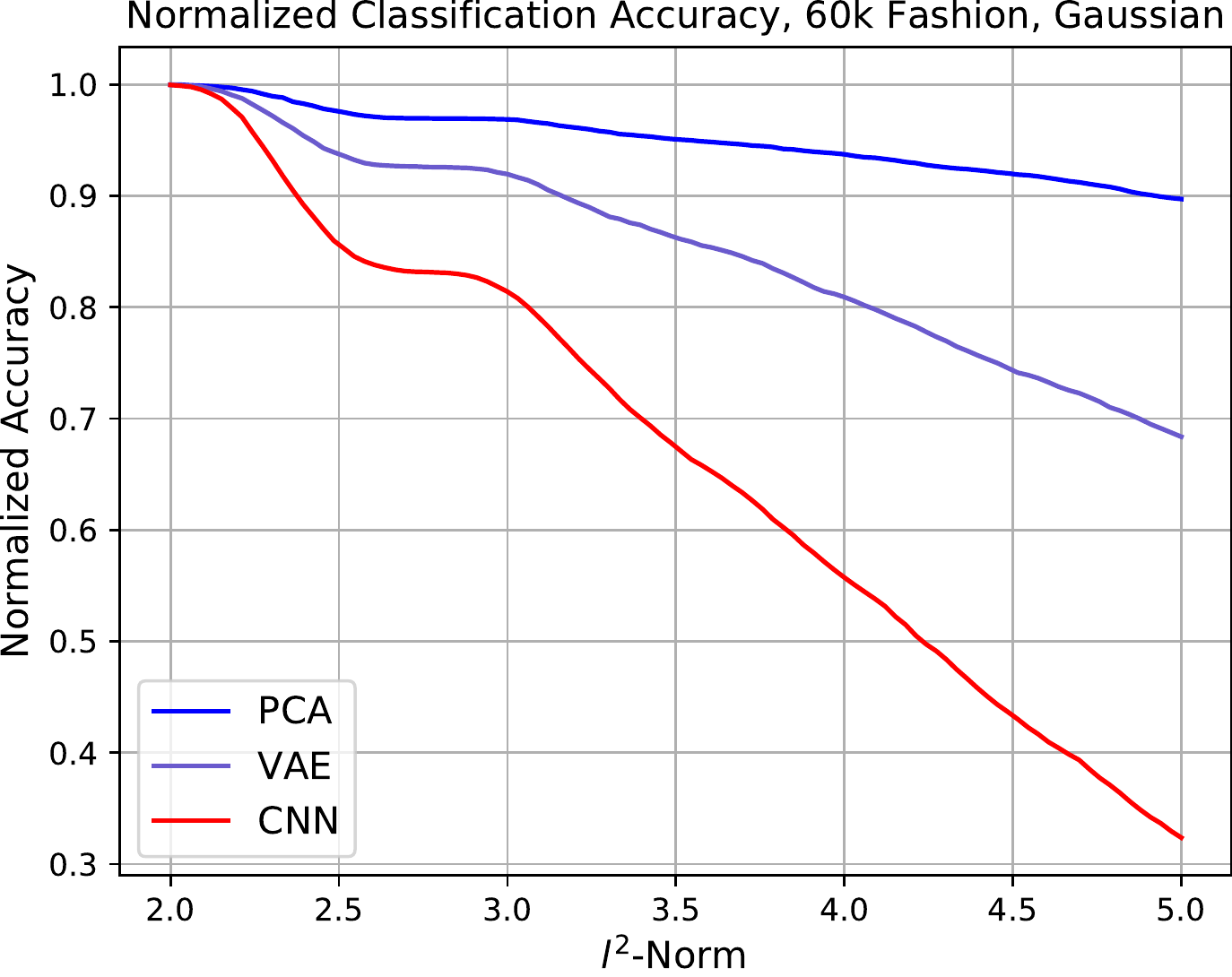}
   \label{fig:l2_fashion_ga_60k} 
  \end{subfigure}
  \begin{subfigure}[t]{0.24\textwidth}
    \includegraphics[width=\textwidth]{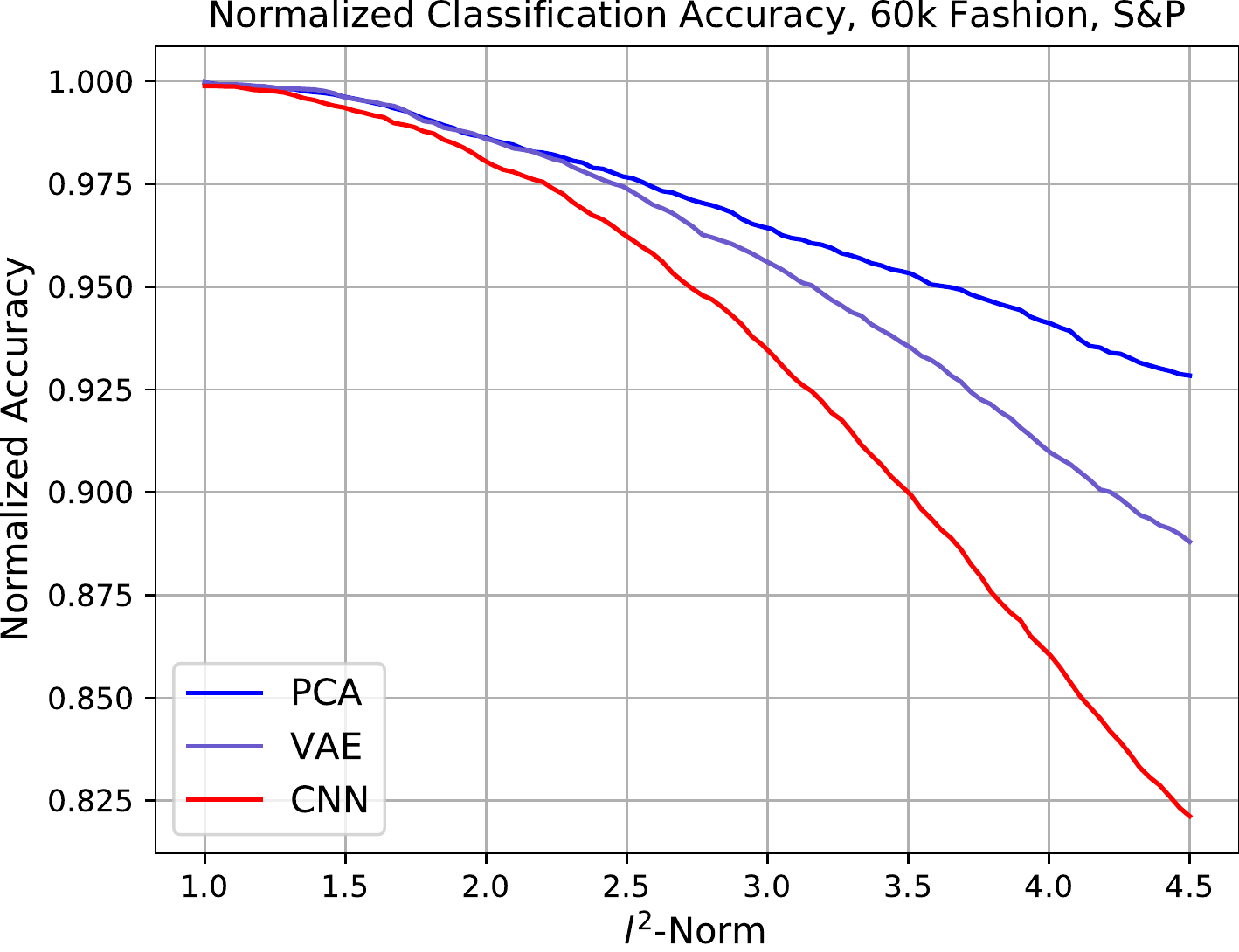}
   \label{fig:l2_fashion_sp_60k} 
  \end{subfigure}  
\vspace{-1.5em}
\caption{The normalized accuracy (Eq. \eqref{eq:accuracy}) with respect to $l^2$-norm for $60$k Fashion.  The PCA based Mapper approach far outperforms the CNN approach for all the noise models. We choose to investigate just two Mapper based methods: PCA and VAE which seem to perform the best on average.}
\label{fig:l2_fashion_60k}
\end{figure}

\subsection{Numerical Experiments Discussion}
We present some conclusions from the performed numerical experiments. Our MC method is in general more robust than the CNN, however MC achieves slightly less initial accuracy. We verify this claim by applying our method to two diverse datasets, MNIST -- hand written digits and Fashion-MNIST -- small shapes of piece of clothing. We believe that a more extensive hyperparameter optimization of our algorithm would result in a higher initial accuracy. Rather surprisingly, the nonlinear latent space generation using autoencoders performs, on average, worse than the linear PCA method. This is interesting especially because one of the applications of CAE \cite{CAE} and VAE \cite{VAE} methods is for adversarial defense by projecting perturbed data onto a small neighborhood in the latent space.  Using an autoencoder resulted in better robustness when compared to PCA only in the case of Gaussian blur for $10$k training (more visible in the case of Fashion-MNIST), but this difference is not large. When we extend the analysis to $60$k, we see PCA being the clear winner.  While we currently do not have a precise answer as to why PCA does so well overall, we expect that the nonlinear methods may be overfitting to the mathematical structure and hence the overall robustness is negatively affected when using them.  For $60k$ training some results are missing (all encoders except VAE for MNIST and all encoders for Fashion-MNIST) due to the time constraints and limitations of our prototype implementation.

Although our MC method performs particularly well for all noise models, it far outperforms the CNN for the case of  Gaussian blur noise (MNIST), and Gaussian noise (Fashion-MNIST). It remains unclear Why MC outperforms CNN by a large margin in the particular case of  the Gaussian blur (a local noise) for MNIST and the Gaussian noise (a global noise) for Fashion-MNIST. And we find this an interesting research problem and will investigate the case for other datasets.

Also, we do not go beyond $l^2$ perturbation norm around $5$ as this range is a more typical ``adversarial'' range. Flat regions that appear in Figs. \ref{fig:l2_mnist_10k} through \ref{fig:l2_fashion_60k} turn out to be an artifact of our sampling procedure, where we sample noise perturbations scaled uniformly by a ``lambda'' parameter (see the description in App.~\ref{secnoisemodel}). This procedure is not equivalent to sampling among $l^2$-norm perturbations (again, see App.~\ref{secnoisemodel}).
\section{Mathematical Intuition about the Robustness}\label{sec:intuition}

We present a Proposition that formalizes the intuition that our method should be robust with respect to small perturbations of input images.
\begin{figure}[h!]
	\centering
	\begin{subfigure}{0.22\textwidth} 
	    \label{fig:beta_0}
		\includegraphics[width=\textwidth]{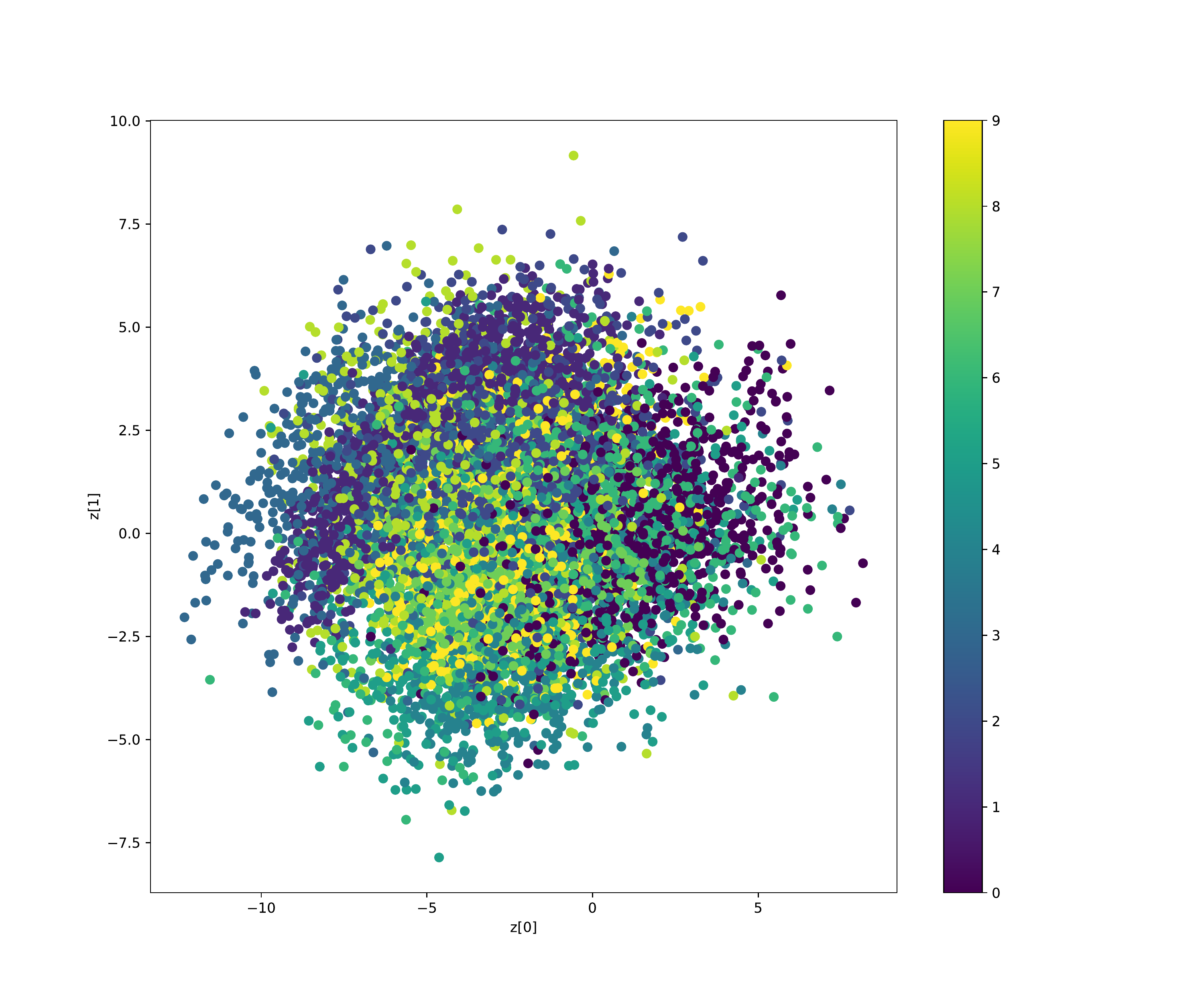}
		\caption{$\beta=0$} 
	\end{subfigure}
	\vspace{1em} 
	\begin{subfigure}{0.22\textwidth}
	    \label{fig:beta_1}
		\includegraphics[width=\textwidth]{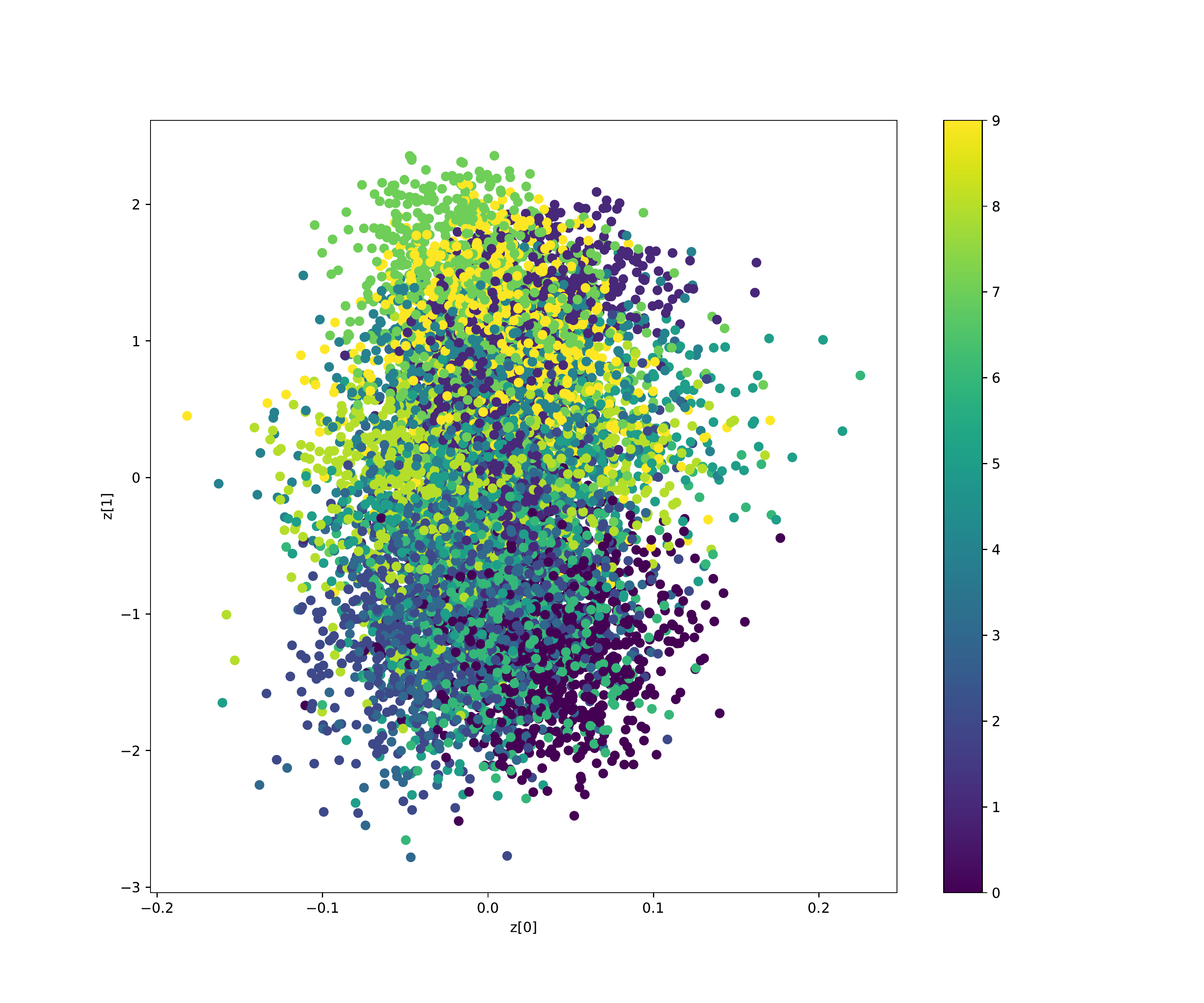}
		\caption{$\beta=1$} 
	\end{subfigure}
	\caption{Latent space representations of two nodes in the compressed layer of the VAE (i.e. a $2$-dimensional subspace of the projection to $20$-dimensions).  We use a $\beta$-term that multiplies the KL divergence.  $\beta=0$ yields the best overall robustness.} 
\label{fig:latentspace}
\end{figure}
Intuitively, the presented proposition states that for any training point $x\in X_{\rm train}$ it holds that a slight perturbation of the point within range $\varepsilon$ will also satisfy $\|g_M(x) - g'_M(x')\|_{l^1} \sim \varepsilon$ (dependence is linear with a small constant). This in turn implies that small changes in inputs will transfer onto slight changes in the $g_M$ map output. Eventually, the $g_M$ map outputs are fed into a neural-network based classifier, hence, the eventual robustness is dependent on the precise properties of the employed classifier and how it interacts with the $g$ map. We denote the $k$ nearest neighbors of $x'$ by $nn_1(x'), nn_2(x'),\dots, nn_k(x')$.

\begin{prop}\label{prop:arch1}
Let $\{U_\alpha\}$ be the open interval cover of $I$. Let $f\colon\mathcal{X}\to\mathbb{R}$ be a Mapper filter function; let $0< \delta< 1$ be a parameter of the method (in the actual algorithm $\delta=0.2$), and let $k$ be the number of nearest neighbors considered in the algorithm. $d(\cdot,\cdot)$ is the $l^2$ distance. Let $h_k(x') = \sum_{l=1}^k d(nn_l(x'),x')^{-1}$. To simplify the notation, we denote below $X = X_{\rm train}$. 

Let $x\in X$ be perturbed to $x^\prime\in\cX$ such that $\|x-x'\|_2\leq\varepsilon$ for some small $\varepsilon\ll\mathbb{E}_{x\in X}{h_{k-1}(x)}$. Let $g'_M(x')$ be computed using the algorithm described in Sec.~\ref{secmapping}.
If  for all intervals $U_\alpha\ni f(x)$, $\left|f(x') - \mbox{mid}(U_\alpha)\right| < \frac{\mbox{max}(U_\alpha)-\mbox{min}(U_\alpha)}{2}(1+\delta)$,  then 
\[
\mathbb{E}_{x\in X}{\left|g'_M(x') - g_M(x)\right|_{l_1}}\leq \varepsilon\cdot C(k, \mathbb{E}_{x\in X}{h_{k-1}(x)}) + O(\varepsilon^2),
\]
in particular  $\mathbb{E}_{x\in X}{\left|g'_M(x') - g_M(x)\right|_{l_1}}\to 0$ as $\varepsilon\to 0$. 
\end{prop}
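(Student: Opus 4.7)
The plan is to exploit the fact that $x\in X_{\rm train}$ already lies within $\varepsilon$ of $x'$, so in the weighted $k$-NN formula defining $g'_M(x')$ the point $x$ itself appears as $nn_1(x')$ with weight very close to $1$; the remaining weights are $O(\varepsilon)$, and this is what drives the whole difference.

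First I would verify that $x$ actually enters the search space of the test procedure in Sec.~\ref{sectestproc}. The hypothesized filter condition states that every $U_\alpha\ni f(x)$ also satisfies the $\delta$-widened inequality for $x'$; hence every such $\alpha$ is selected in step~1, every refined vertex $\widetilde{V_{\alpha,l_\alpha}}\ni x$ is collected in step~2, and $x$ is a legitimate candidate for the $k$-NN search. Because $\|x-x'\|_2\leq\varepsilon$ and $\varepsilon$ is assumed small relative to the typical spacing to the $(k-1)$-th neighbor (as encoded by $\mathbb{E}_{x\in X} h_{k-1}(x)$), it follows that $x=nn_1(x')$ and, by the triangle inequality, $d(x',nn_l(x'))=d(x,nn_{l-1}(x))+O(\varepsilon)$ for $l=2,\dots,k$, where $nn_{l-1}(x)$ enumerates the $(l-1)$-th nearest training neighbor of $x$.

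Next I would combine the definition of the weights with the elementary identity $\sum_{l=1}^k w_l=1$ to rewrite $g'_M(x')-g_M(x)=\sum_{l=2}^k w_l\bigl(g_M(nn_l(x'))-g_M(x)\bigr)$. Because the $33\%$-overlap choice forces every $g_M(\cdot)$ to be a binary vector with at most two nonzero entries, each $l_1$-difference under the sum is bounded by $4$. For the weights, the normalizer $Z=\sum_{j=1}^k[d(x',nn_j(x'))+\eta]^{-1}$ satisfies $Z\geq[\varepsilon+\eta]^{-1}$ (because $x$ itself contributes $[d(x',x)+\eta]^{-1}\geq[\varepsilon+\eta]^{-1}$), whence $w_l\leq (\varepsilon+\eta)/(d(x',nn_l(x'))+\eta)$ for $l\geq 2$. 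Substituting and replacing $d(x',nn_l(x'))$ by $d(x,nn_{l-1}(x))+O(\varepsilon)$ produces the pointwise bound $|g'_M(x')-g_M(x)|_{l_1}\leq 4(\varepsilon+\eta)\,h_{k-1}(x)+O(\varepsilon^2)$; taking expectation over $x\in X$ and absorbing $\eta\ll\varepsilon$ yields the stated result with $C(k,\mathbb{E}_{x\in X} h_{k-1}(x))=4\,\mathbb{E}_{x\in X} h_{k-1}(x)$.

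The hard part will be the nearest-neighbor identification: for an individual $x$, the equality $x=nn_1(x')$ can fail at "boundary" configurations where another training point is essentially equidistant from $x'$, and this is precisely why the conclusion is phrased as an expectation rather than a pointwise bound. I expect to treat such exceptional $x$ by a union-bound argument in which the fraction of $x\in X$ admitting another training point within $2\varepsilon$ of $x'$ is controlled by $\varepsilon\,\mathbb{E}_{x\in X} h_{k-1}(x)$ up to $O(\varepsilon^2)$, so that these cases collapse into the $O(\varepsilon^2)$ remainder. A secondary, purely bookkeeping concern is the $\eta$-regularization ($\eta=10^{-5}$ is a fixed constant while $\varepsilon\to 0$): one must case-split on $\varepsilon\gg\eta$ versus $\varepsilon\lesssim\eta$, but in the latter regime the entire bound degenerates to $O(\eta)$ and is dominated by whichever of the named error terms is larger.
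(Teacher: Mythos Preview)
Your proposal is correct and follows essentially the same route as the paper: identify $nn_1(x')=x$ via the filter hypothesis, bound the remaining weights by $\varepsilon/d(nn_{l-1}(x),x)+O(\varepsilon^2)$, use $\|g_M(\cdot)\|_{l^1}\le 2$, and arrive at $C=4\,\mathbb{E}_{x\in X}h_{k-1}(x)$. Your decomposition via $\sum_l w_l=1$ is slightly cleaner than the paper's triangle-inequality split of the $w_1$ term, and your discussion of the $nn_1(x')=x$ boundary cases and of the $\eta$-regularization in fact goes \emph{beyond} what the paper does---the paper simply asserts $nn_1(x')=x$ from the smallness assumption and drops $\eta$ altogether---so you need not worry about developing the union-bound argument to match the paper's level of rigor.
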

\begin{proof}
First, the condition $\left|f(x') - \mbox{mid}(U_\alpha)\right| < \frac{\mbox{max}(U_\alpha)-\mbox{min}(U_\alpha)}{2}(1+\delta)$ guarantees that the perturbed point $x'$ is mapped by the filter $f$ into the same intervals in the open cover as the original point $x$ (see the corresponding description in the algorithm).

Observe that the assumption guarantees that the nearest neighbor is $x$. That is, $nn_1(x') = x$, and hence $ w_1 = \frac{1}{\varepsilon\sum_{l=1}^{k}{d(nn_l(x'),x')^{-1}}}$. Therefore, we obtain
\begin{multline*}
g'_M(x') = \frac{1}{\varepsilon\sum_{l=1}^{k}{d_l^{-1}}} g_{M}(x) + w_2g_{M}(nn_2(x')) +\\\dots + w_kg_{M}(nn_k(x')),
\end{multline*}
and for $j=2,\dots,k$: $w_j = \frac{1}{d_j\sum_{l=1}^{k}{d_l^{-1}}}$,
where we denote $d_j = d(nn_j(x'),x')$. It holds that $g_M(x)=(1,0,\dots, 0)$ or $g_M(x)=(1,1,0,\dots, 0)$, as $nn_1(x) = x$. Also we have
\[
h_k(x') = \varepsilon^{-1} + \sum_{l=2}^k d(nn_l(x'),x')^{-1} = \varepsilon^{-1} + h_{k-1}(x) + O(\varepsilon).
\]
Also, 
\begin{multline*}
w_l = \bigl(d(nn_l(x'),x')h_k(x')\bigr)^{-1} = 1/\bigl(d_l \varepsilon^{-1} + d_l h_{k-1}(x) + d_l O(\varepsilon)\bigr) \\
\approx\varepsilon/d_l
 = \varepsilon/\bigl(d(nn_{l-1}(x),x)+O(\varepsilon)\bigr)
 =  \varepsilon/d(nn_{l-1}(x),x) + O(\varepsilon^2).
\end{multline*}
Thus, by the triangle inequality,
\begin{multline*}
\|g'_M(x')-g_M(x)\|_{l^1} \le \frac{|1-\varepsilon h_k(x')||g_M(x)|}{\varepsilon h_k(x')} +\\
\varepsilon\sum_{l=2}^k \frac{\|g_M(nn_{l-1}(x))\|_{l^1}}{d(nn_{l-1}(x),x)} + O(\varepsilon^2)
\leq\\\varepsilon\Bigl(2 h_{k-1}(x) + \sum_{l=2}^k \frac{\|g_M(nn_{l-1}(x))\|_{l^1}}{d(nn_{l-1}(x),x)}\Bigr)\\+ O(\varepsilon^2)\le 4\varepsilon h_{k-1}(x) + O(\varepsilon^2).
\end{multline*}
In the last inequality above we used the bound $\|g_M(nn_{l-1}(x))\|_{l^1}\leq 2$, as $g_M(nn_{l-1}(x))$ have either one or two nonzero entries equal to $1$ ($x$ is in one or two nodes of the mapper $M$). Finally, taking the expectation we have that 
\[
\mathbb{E}_{x\in X}{\left|g'_M(x') - g_M(x)\right|_{l_1}}\leq 4\varepsilon\mathbb{E}_{x\in X}h_{k-1}(x) + O(\varepsilon^2),
\]
and obviously as $\varepsilon\to 0$, then $\mathbb{E}_{x\in X}{\left|g'_M(x') - g_M(x)\right|_{l_1}}\to 0$.
\end{proof}
We computed in practice an estimate for the constant $C(k, \mathbb{E}_{x\in X}{h_{k-1}(x)})$ appearing in Prop.~\ref{prop:arch1}, taking $k=6$ (the value used in practice), the empirical expectation of the pairwise distance between points is equal $\mathbb{E}_{x\in X}{h_{k-1}(x)} \approx 0.5$, we obtain that $C(k, \mathbb{E}_{x\in X}{h_{k-1}(x)}) \approx 1.5$.
\section{Conclusion}
We have developed an algorithm which performs classification that is both robust and also highly accurate, resolving, to an extent, the bias-variance trade-off present in many machine learning methods.  Although we apply our MC to the task of improving robustness of image classifiers, we expect the algorithm to lend itself well to many other classification tasks in general.

There are various avenues for future work pertaining to this research.  One such avenue is to perform a more extensive hyperparameter search.  Many of these were set heuristically, or scanned over slightly, but with little scientific approach on converging to optimal values.  This is partly due to time considerations in our algorithm - in order to accomplish this search, we will need to construct our software to be more scalable.  Another avenue will be to understand why PCA does so well versus the nonlinear projections in our MC.  We expect that the MC is less prone to overfitting when using PCA, but this should be verified.  A related path to explore is to determine how the MC methods interact with specific noise models. MC vs CNN behavior should be investigated also for other datasets, especially those composed out of color images and other types of data.
%
%
%
%
%
%
%
%
%
%
%
%
%
%
%
%
%
%
\bibliographystyle{IEEEtran}
\bibliography{IEEEabrv,conference_041818}

\newpage\ \newpage
\appendix

\begin{center}
\Large{Appendix for the paper entitled\\
\emph{Mapper Based Classifier}\\
submitted to NeurIPS 2019}    
\end{center}

\section{Proof of the mathematical statement from the Sec.~4 in the paper}
\label{secproof}
Here we present a proof of our result providing a mathematical intuition about the robustness of MC.

\section{Splitting the Dataset}\label{sec:split}
As presented in Sec.~\ref{sec:train} for the purpose of training and testing we operate on the split training dataset
\[
X_{train} = X_{train}^1\cup X_{train}^2\cup \dots\cup X_{train}^n.
\]
There are two main advantages of splitting the training dataset into subsets. First, it provides a natural way of parallelizing computations during the training/testing phases.  This distributed computation procedure is amenable for modern architectures.  Computing several small Mapper outputs instead of a single large one allows for an easy model by distributing computation among the processor cores. Second, it improves the overall accuracy of the classifier, as illustrated by our results using the MNIST dataset presented in Fig.~\ref{fig:split} in the appendix available online. The comparison is done using different splittings of a $30$k subset of the MNIST training set. Due to computational and memory complexity of the Mapper algorithm, a $30$k training dataset is the limit of what we were able to compute using a PC machine having $32$gb memory.  The run time for $30$k was several hours and the memory was fully utilized.

\begin{figure}[h!]
\begin{center}
   \includegraphics[width=0.7\linewidth]{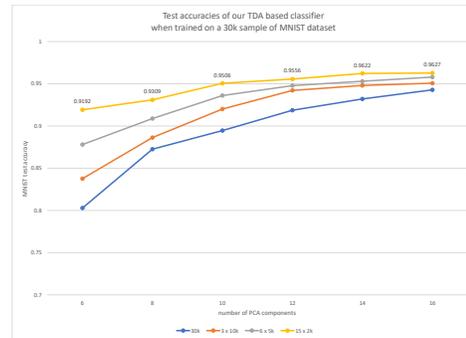}
\end{center}
   \caption{Test accuracy of MC trained using 30k sample from MNIST dataset. We present the accuracy with respect to the number of PCA filters used in the committee of Mapper graphs, and the number of subsets the whole 30k sample was split into.}
\label{fig:split}
\end{figure}

\section{Description of Noise Models}\label{sec:noisemodels}
\label{secnoisemodel}
\subsection{Models}
We implement three different noise models in order to determine overall classifier robustness.  The models we use include: Gaussian blur, Gaussian, and salt \& pepper.  All are consistent with the models in \cite{foolbox2}, and we give a brief explanation below.  For each of these models, we use a parameter to control the extent of the perturbation, which we refer to as $\lambda$.

The \emph{Gaussian blur} model performs a convolution with a $2$-dimensional Gaussian centered at each pixel in the image.  Each pixel is replaced by the Gaussian weighted sum of nearby pixel values.  The $\lambda$ parameter we use for robustness calculations is related to the Gaussian standard deviation by: $\sigma = 28 \lambda$.  The final perturbed image is clipped to the max and min values of the original image.  The $28$ comes from the image size of $28\times 28.$

The \emph{salt \& pepper} or \emph{s\&p} model replaces random pixel values with the minimum (i.e. ``pepper'') or maximum (i.e. ``salt'') in the image.  Setting $q_1$ as the probability of flipping a pixel, and $q_2$ as the ratio of salt to pepper, we use: $q_1 = \lambda$ and $q_2=\frac{1}{2}$.

The \emph{Gaussian} model adds in noise to each pixel which is sampled from a Gaussian distribution.  The distribution we use is centered at zero and has $\sigma=0.1\sqrt{\epsilon}$.  The final perturbed image is clipped.

There are a few subtle differences between the robustness results when $\lambda$ vs the $l^2$-norm is used, which we remark on here.  $\lambda$ is the internal parameter we use to quantify the scale of noise that is added.  While the value of $\lambda$ correlates to the actual $l^2$ distance an image is perturbed, there is not a one-to-one correspondence.  Perhaps a more useful way to think about $\lambda$ is that it sets a range over which $l^2$ perturbations may occur.  As $\lambda$ increases, so does this range.  Since the $l^2$-distance is a more physical measure in this experiment, we report robustness as a function of $l^2$ rather than $\lambda$.

\subsection{Image Data as a Function of Noise Parameter}
\label{secimdata}
\begin{figure}[h]
  \begin{subfigure}[]{0.15\textwidth}
    \includegraphics[width=\textwidth]{gauss_blur_1.pdf}
     \caption{$\lambda=0.01$; $l^2 \approx 0.02$}
   \label{fig:gb_1} 
  \end{subfigure}
\hspace{0.1em}
  \begin{subfigure}[]{0.15\textwidth}
    \includegraphics[width=\textwidth]{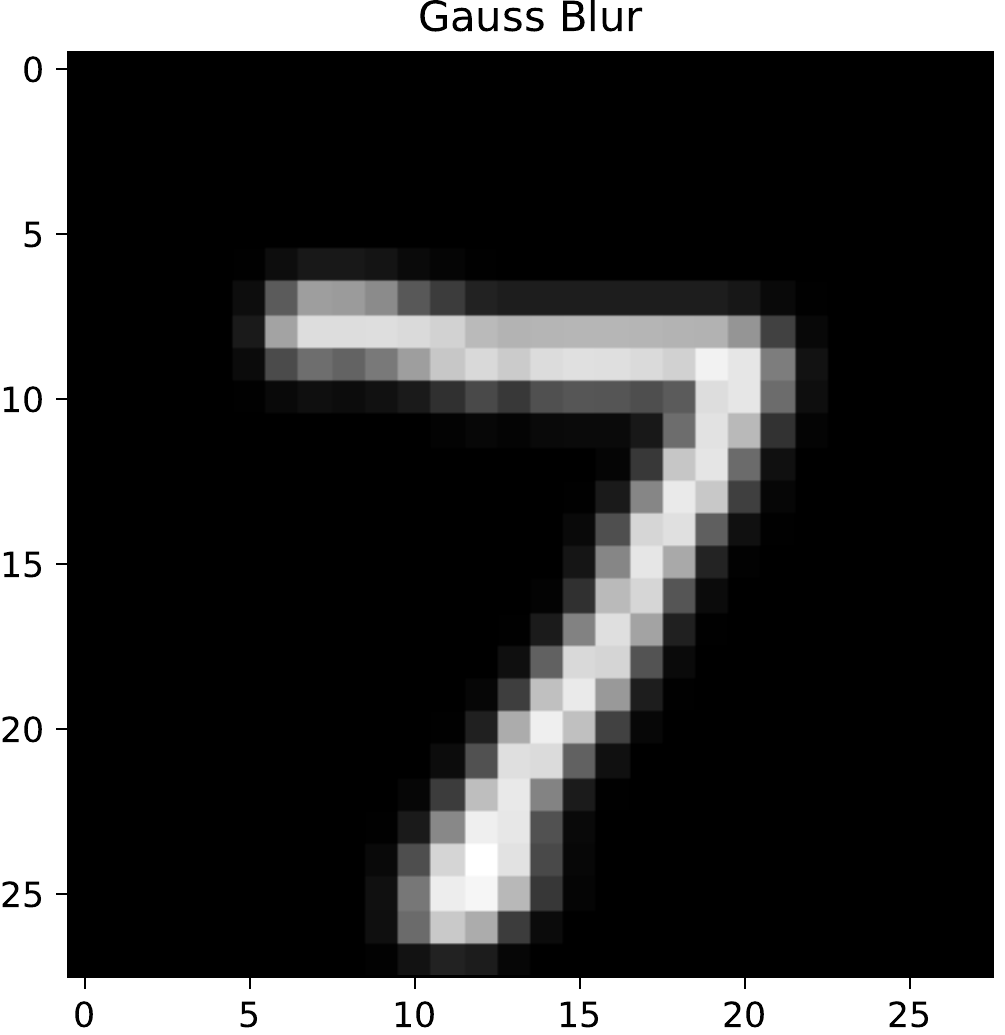}
     \caption{$\lambda=0.02$; $l^2 \approx 1.4$}
   \label{fig:gb_2} 
  \end{subfigure}
\hspace{0.1em}
  \begin{subfigure}[]{0.15\textwidth}
    \includegraphics[width=\textwidth]{gauss_blur_3.pdf}
     \caption{$\lambda=0.03$; $l^2 \approx 2.5$}
   \label{fig:gb_3} 
  \end{subfigure}  
\hspace{0.1em}
  \begin{subfigure}[]{0.15\textwidth}
    \includegraphics[width=\textwidth]{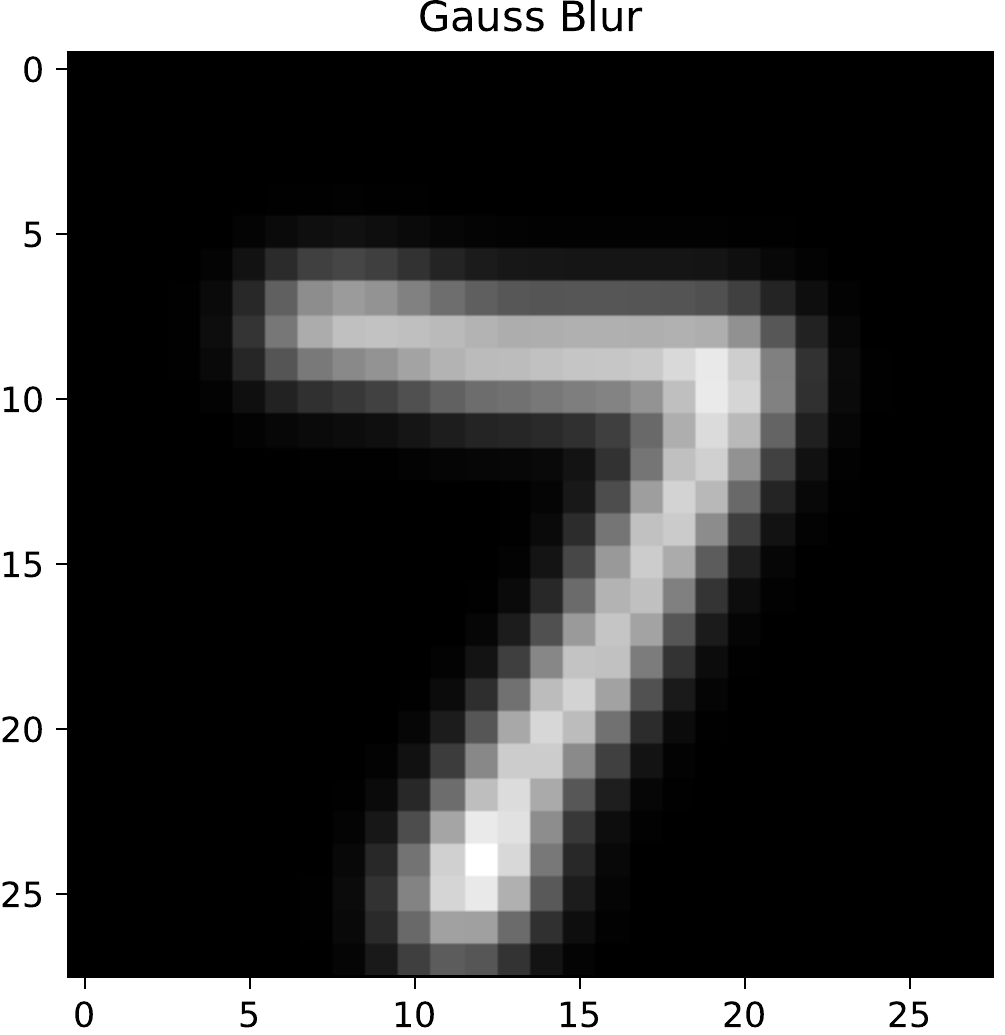}
     \caption{$\lambda=0.04$; $l^2 \approx 3.4$}
   \label{fig:gb_4} 
  \end{subfigure}  
\hspace{0.1em}
  \begin{subfigure}[]{0.15\textwidth}
    \includegraphics[width=\textwidth]{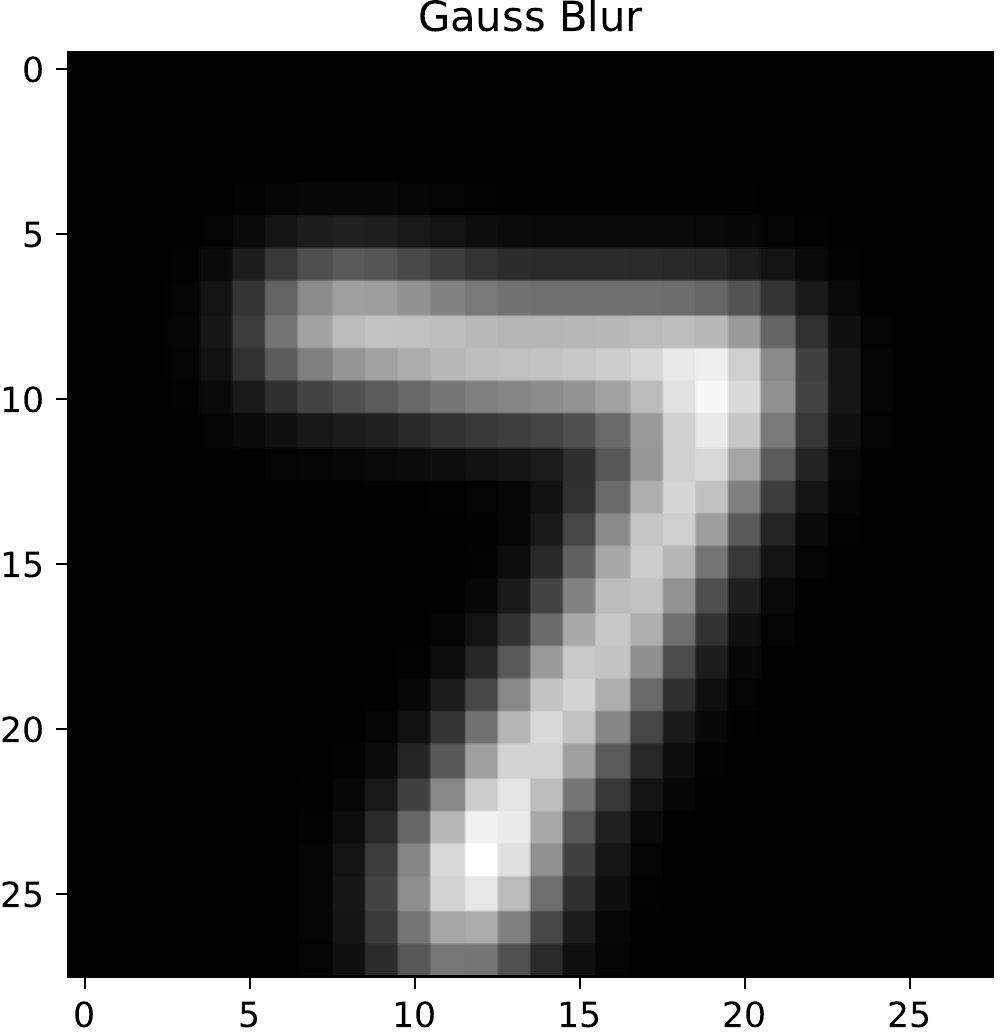}
     \caption{$\lambda=0.05$; $l^2 \approx 4.1$}
   \label{fig:gb_5} 
  \end{subfigure}
\hspace{0.1em}
  \begin{subfigure}[]{0.15\textwidth}
    \includegraphics[width=\textwidth]{gauss_blur_6.pdf}
     \caption{$\lambda=0.06$; $l^2 \approx 4.6$}
   \label{fig:gb_6} 
  \end{subfigure}  
\caption{The number $7$ as a function of $\lambda$ for the Gauss blur noise model.  $\lambda$ can be thought of as a percentage of $28$, a fundamental length scale in this data.}
\label{fig:gb}
\end{figure}

\begin{figure}[h]
  \begin{subfigure}[]{0.15\textwidth}
    \includegraphics[width=\textwidth]{gaussian_1.pdf}
     \caption{$\lambda=0.01$; $l^2 \approx 2.2$}
   \label{fig:gaussian_1} 
  \end{subfigure}
\hspace{0.1em}
  \begin{subfigure}[]{0.15\textwidth}
    \includegraphics[width=\textwidth]{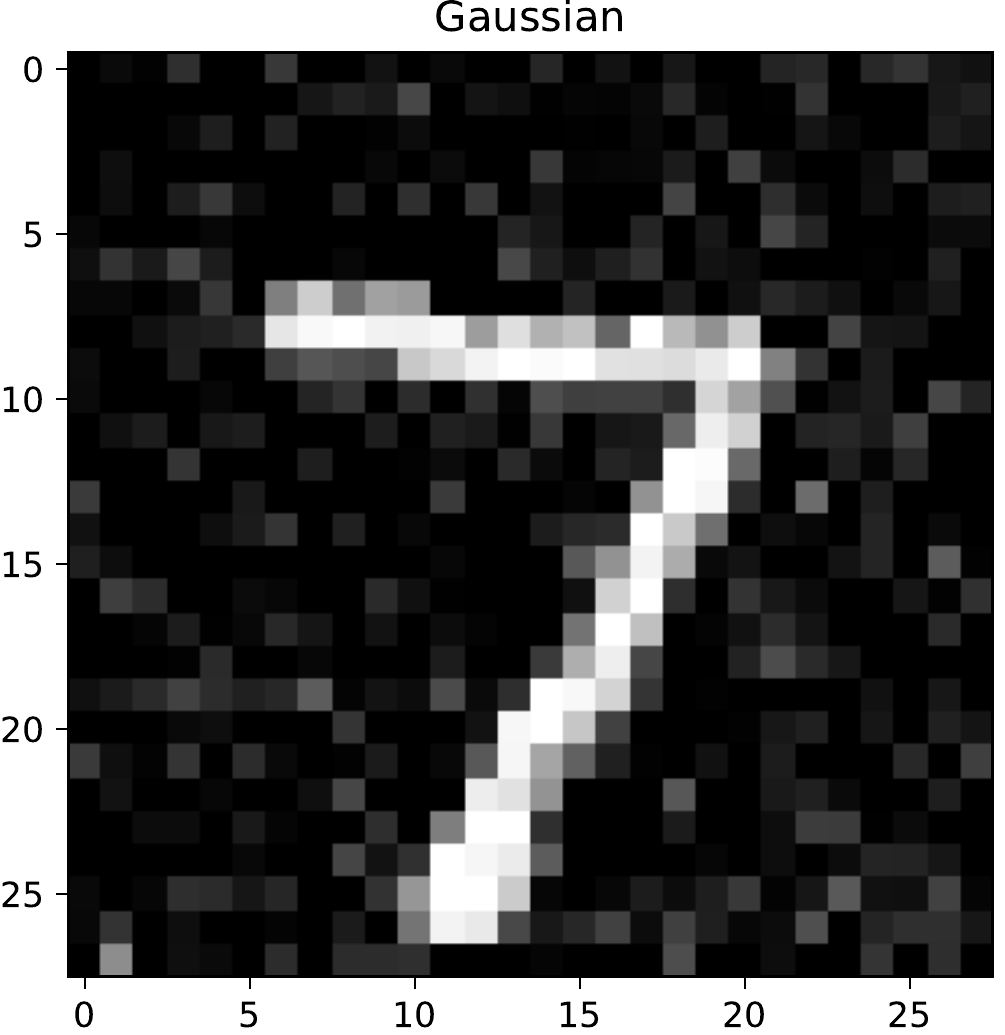}
     \caption{$\lambda=0.02$; $l^2 \approx 2.9$}
   \label{fig:gaussian_2} 
  \end{subfigure}
\hspace{0.1em}
  \begin{subfigure}[]{0.15\textwidth}
    \includegraphics[width=\textwidth]{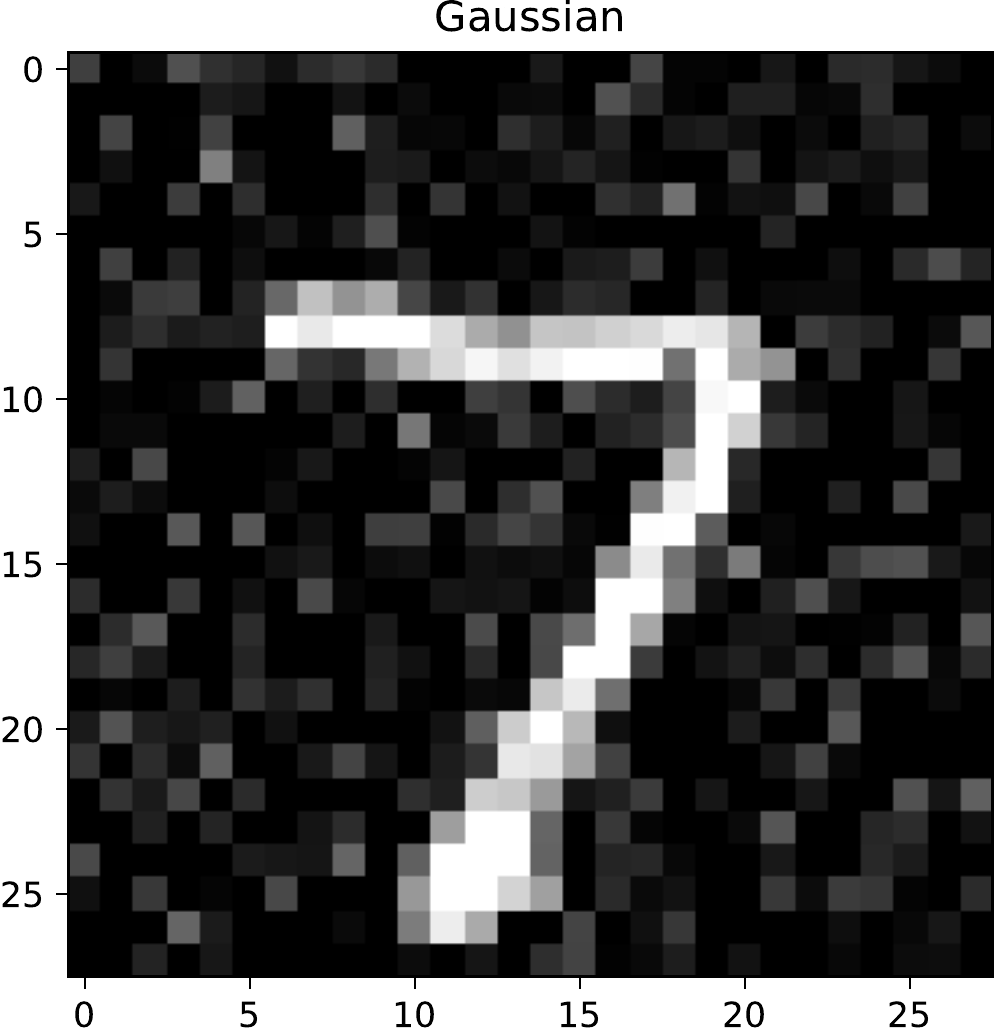}
     \caption{$\lambda=0.03$; $l^2 \approx 3.4$}
   \label{fig:gaussian_3} 
  \end{subfigure}  
\hspace{0.1em}
  \begin{subfigure}[]{0.15\textwidth}
    \includegraphics[width=\textwidth]{gaussian_4.pdf}
     \caption{$\lambda=0.04$; $l^2 \approx 3.8$}
   \label{fig:gaussian_4} 
  \end{subfigure}  
\hspace{0.1em}
  \begin{subfigure}[]{0.15\textwidth}
    \includegraphics[width=\textwidth]{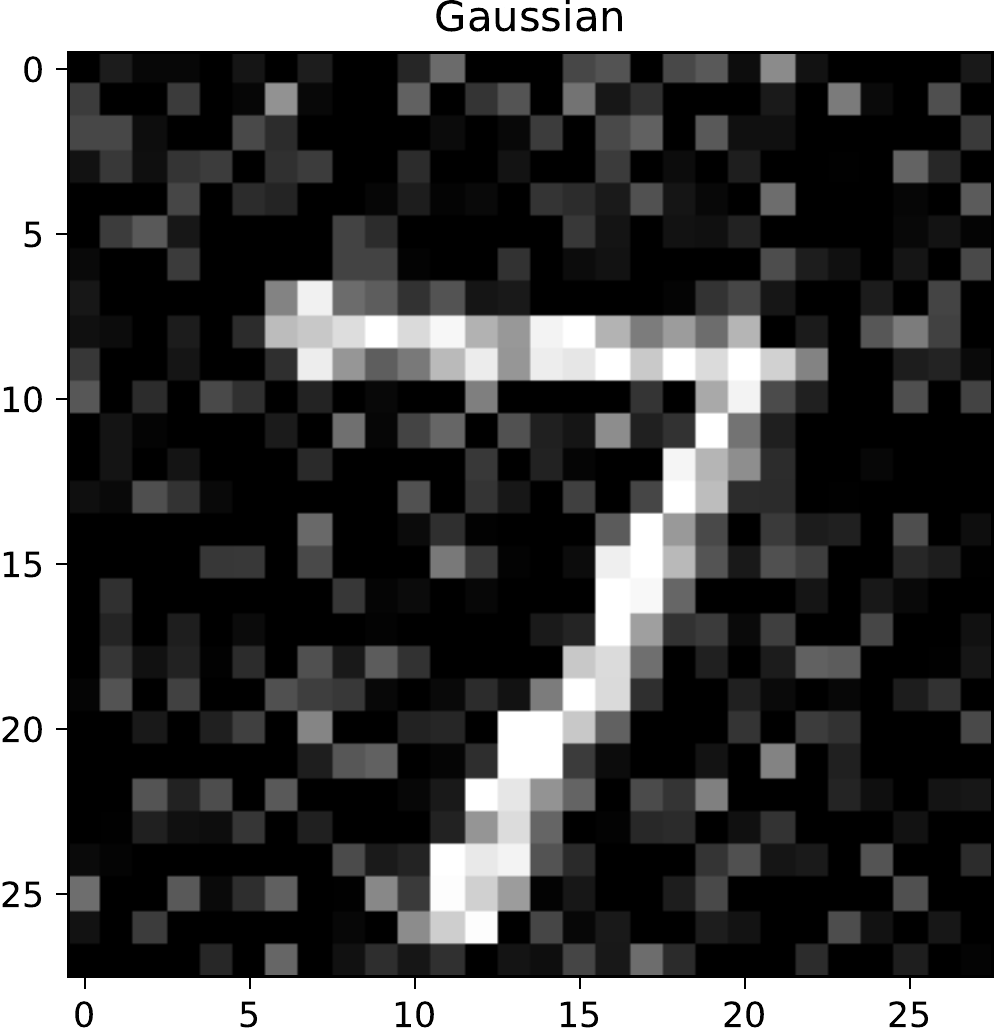}
     \caption{$\lambda=0.05$; $l^2 \approx 4.4$}
   \label{fig:gaussian_5} 
  \end{subfigure}
\hspace{0.1em}
  \begin{subfigure}[]{0.15\textwidth}
    \includegraphics[width=\textwidth]{gaussian_6.pdf}
     \caption{$\lambda=0.06$; $l^2 \approx 4.6$}
   \label{fig:gaussian_6}
  \end{subfigure}  
\caption{The number $7$ as a function of $\lambda$ for the Gaussian noise model.}
\label{fig:gaussian}
\end{figure}

\begin{figure}[h]
  \begin{subfigure}[]{0.15\textwidth}
    \includegraphics[width=\textwidth]{sp_10.pdf}
     \caption{$\lambda=0.01$; $l^2 \approx 2.4$}
   \label{fig:sp_10} 
  \end{subfigure}
\hspace{0.1em}
  \begin{subfigure}[]{0.15\textwidth}
    \includegraphics[width=\textwidth]{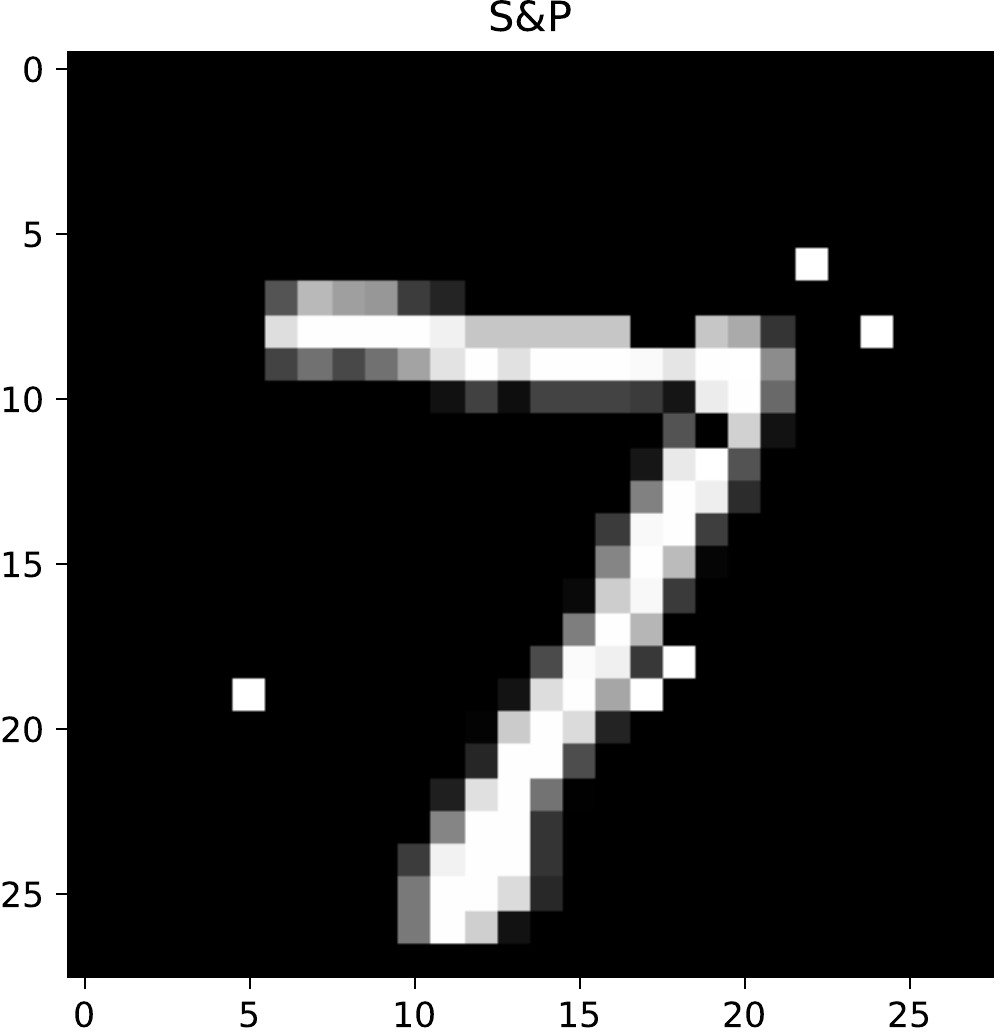}
     \caption{$\lambda=0.02$; $l^2 \approx 2.7$}
   \label{fig:sp_20} 
  \end{subfigure}
\hspace{0.1em}
  \begin{subfigure}[]{0.15\textwidth}
    \includegraphics[width=\textwidth]{sp_30.pdf}
     \caption{$\lambda=0.03$; $l^2 \approx 3.6$}
   \label{fig:sp_30} 
  \end{subfigure}  
\hspace{0.1em}
  \begin{subfigure}[]{0.15\textwidth}
    \includegraphics[width=\textwidth]{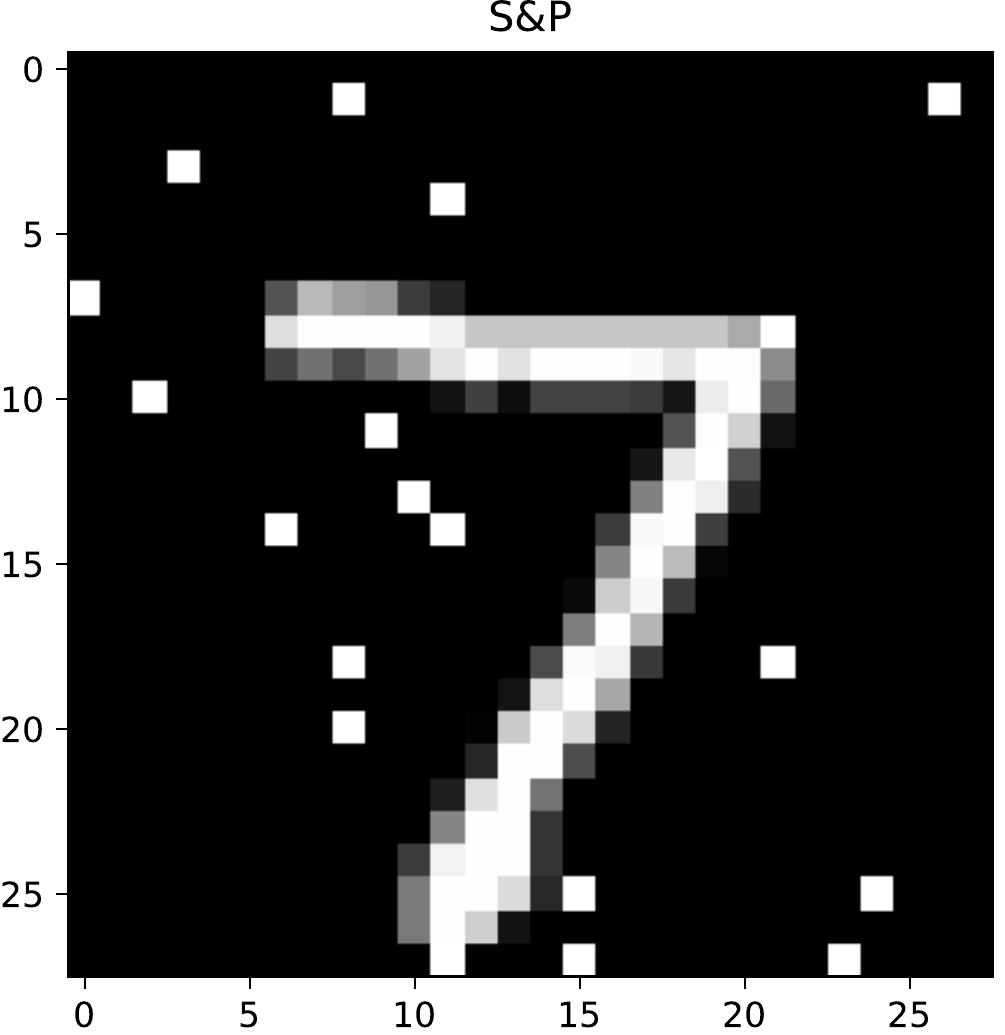}
     \caption{$\lambda=0.04$; $l^2 \approx 4.3$}
   \label{fig:sp_40} 
  \end{subfigure}  
\hspace{0.1em}
  \begin{subfigure}[]{0.15\textwidth}
    \includegraphics[width=\textwidth]{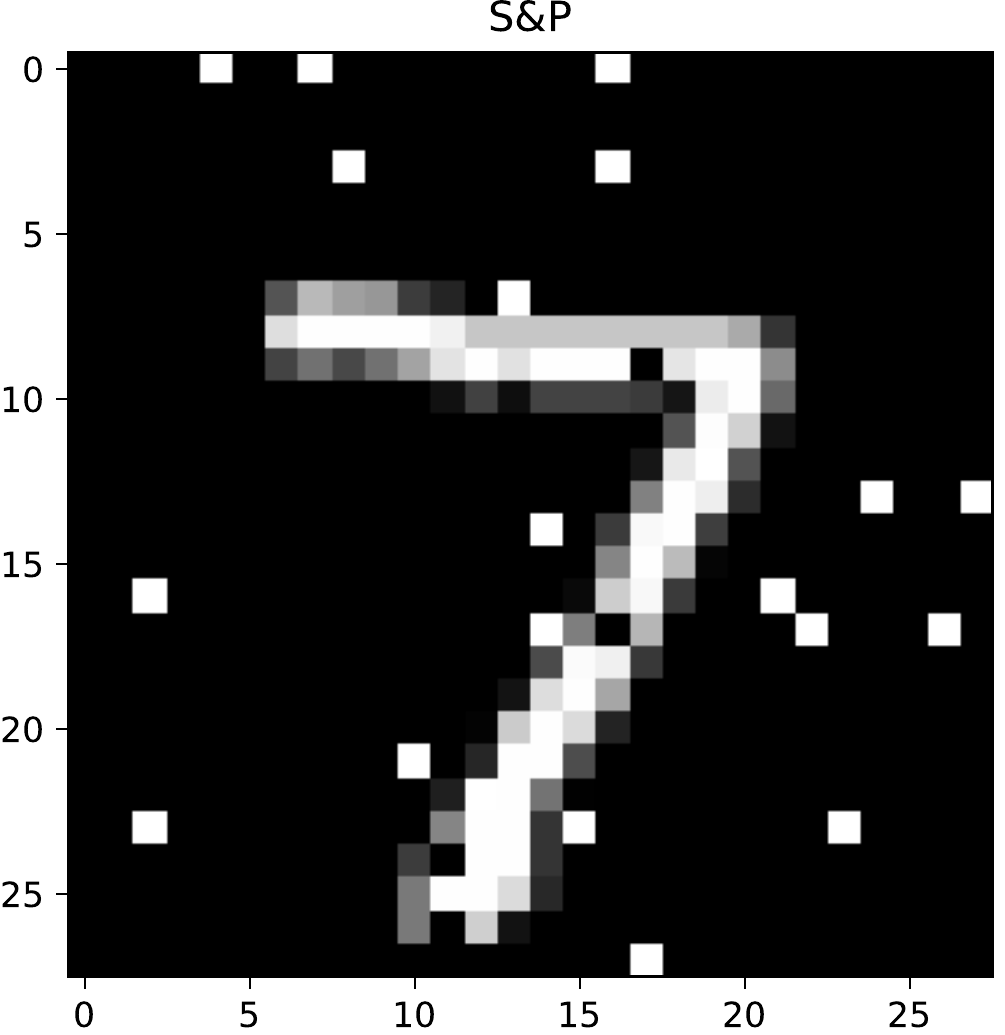}
     \caption{$\lambda=0.05$; $l^2 \approx 4.8$}
   \label{fig:sp_50} 
  \end{subfigure}
\hspace{0.1em}
  \begin{subfigure}[]{0.15\textwidth}
    \includegraphics[width=\textwidth]{sp_60.pdf}
     \caption{$\lambda=0.06$; $l^2 \approx 5.6$}
   \label{fig:sp_60} 
  \end{subfigure}  
\caption{The number $7$ as a function of $\lambda$ for the s\&p noise model.  $\lambda=0.01$, for instance, corresponds to a $1\%$ chance of flipping a pixel.}
\label{fig:sp}
\end{figure}

\begin{figure}[h]
  \begin{subfigure}[]{0.15\textwidth}
    \includegraphics[width=\textwidth]{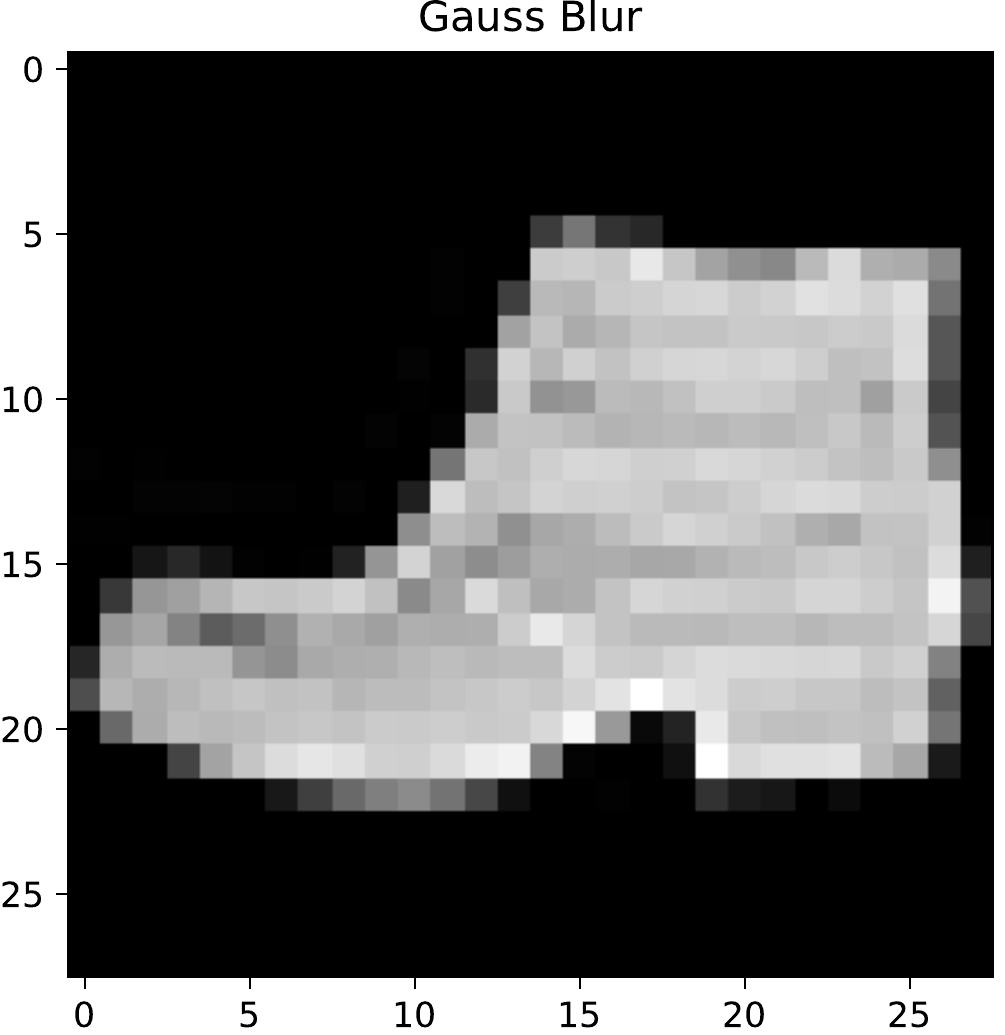}
     \caption{$\lambda=0.01$; $l^2 \approx 0.02$}
   \label{fig:gb_1f} 
  \end{subfigure}
\hspace{0.1em}
  \begin{subfigure}[]{0.15\textwidth}
    \includegraphics[width=\textwidth]{gauss_blur_2f.pdf}
     \caption{$\lambda=0.02$; $l^2 \approx 1.4$}
   \label{fig:gb_2f} 
  \end{subfigure}
\hspace{0.1em}
  \begin{subfigure}[]{0.15\textwidth}
    \includegraphics[width=\textwidth]{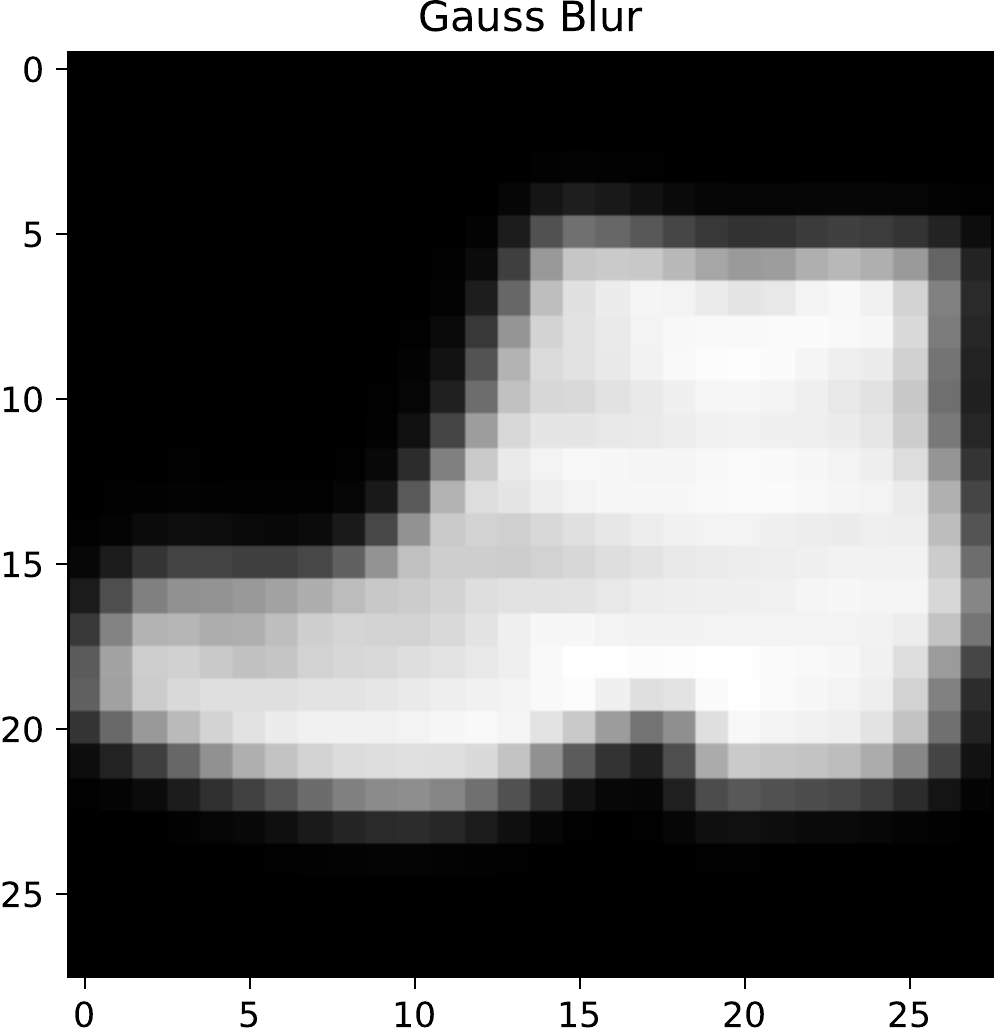}
     \caption{$\lambda=0.03$; $l^2 \approx 2.4$}
   \label{fig:gb_3f} 
  \end{subfigure}  
\hspace{0.1em}
  \begin{subfigure}[]{0.15\textwidth}
    \includegraphics[width=\textwidth]{gauss_blur_4f.pdf}
     \caption{$\lambda=0.04$; $l^2 \approx 3.1$}
   \label{fig:gb_4f} 
  \end{subfigure}  
\hspace{0.1em}
  \begin{subfigure}[]{0.15\textwidth}
    \includegraphics[width=\textwidth]{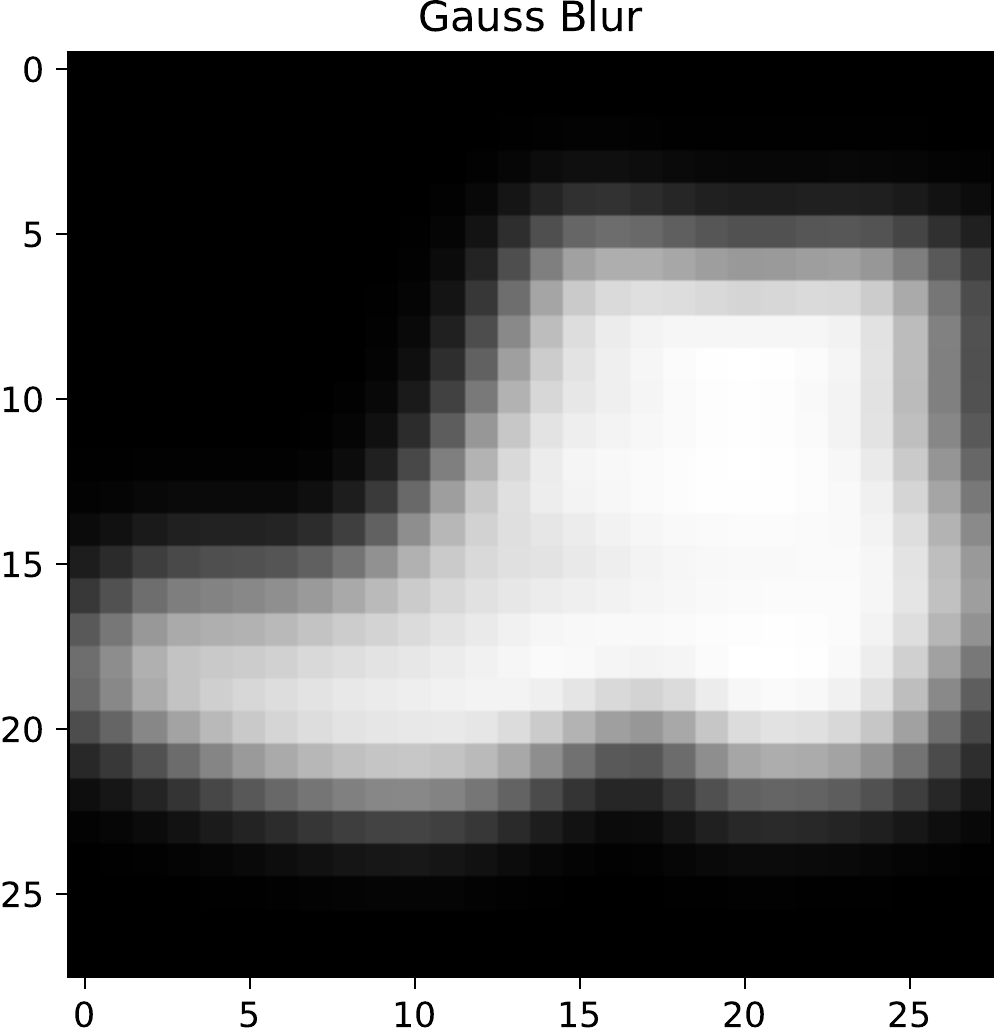}
     \caption{$\lambda=0.05$; $l^2 \approx 3.6$}
   \label{fig:gb_5f} 
  \end{subfigure}
\hspace{0.1em}
  \begin{subfigure}[]{0.15\textwidth}
    \includegraphics[width=\textwidth]{gauss_blur_6f.pdf}
     \caption{$\lambda=0.06$; $l^2 \approx 4.0$}
   \label{fig:gb_6f} 
  \end{subfigure}  
\caption{An image of a boot as a function of $\lambda$ for the Gauss blur noise model.  $\lambda$ can be thought of as a percentage of $28$, a fundamental length scale in this data.}
\label{fig:gbf}
\end{figure}

\begin{figure}[h]
  \begin{subfigure}[]{0.15\textwidth}
    \includegraphics[width=\textwidth]{gaussian_1f.pdf}
     \caption{$\lambda=0.01$; $l^2 \approx 2.4$}
   \label{fig:gaussian_1f} 
  \end{subfigure}
\hspace{0.1em}
  \begin{subfigure}[]{0.15\textwidth}
    \includegraphics[width=\textwidth]{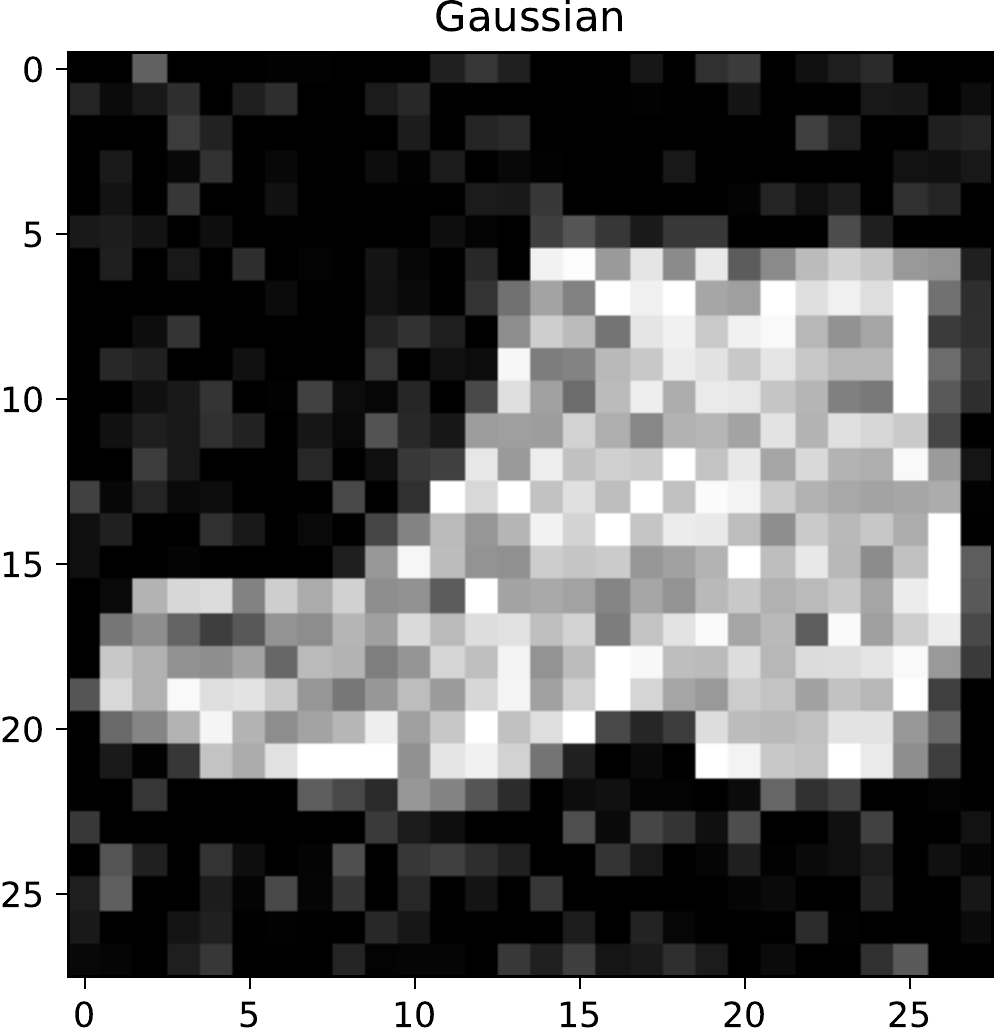}
     \caption{$\lambda=0.02$; $l^2 \approx 3.3$}
   \label{fig:gaussian_2f} 
  \end{subfigure}
\hspace{0.1em}
  \begin{subfigure}[]{0.15\textwidth}
    \includegraphics[width=\textwidth]{gaussian_3f.pdf}
     \caption{$\lambda=0.03$; $l^2 \approx 3.9$}
   \label{fig:gaussian_3f} 
  \end{subfigure}  
\hspace{0.1em}
  \begin{subfigure}[]{0.15\textwidth}
    \includegraphics[width=\textwidth]{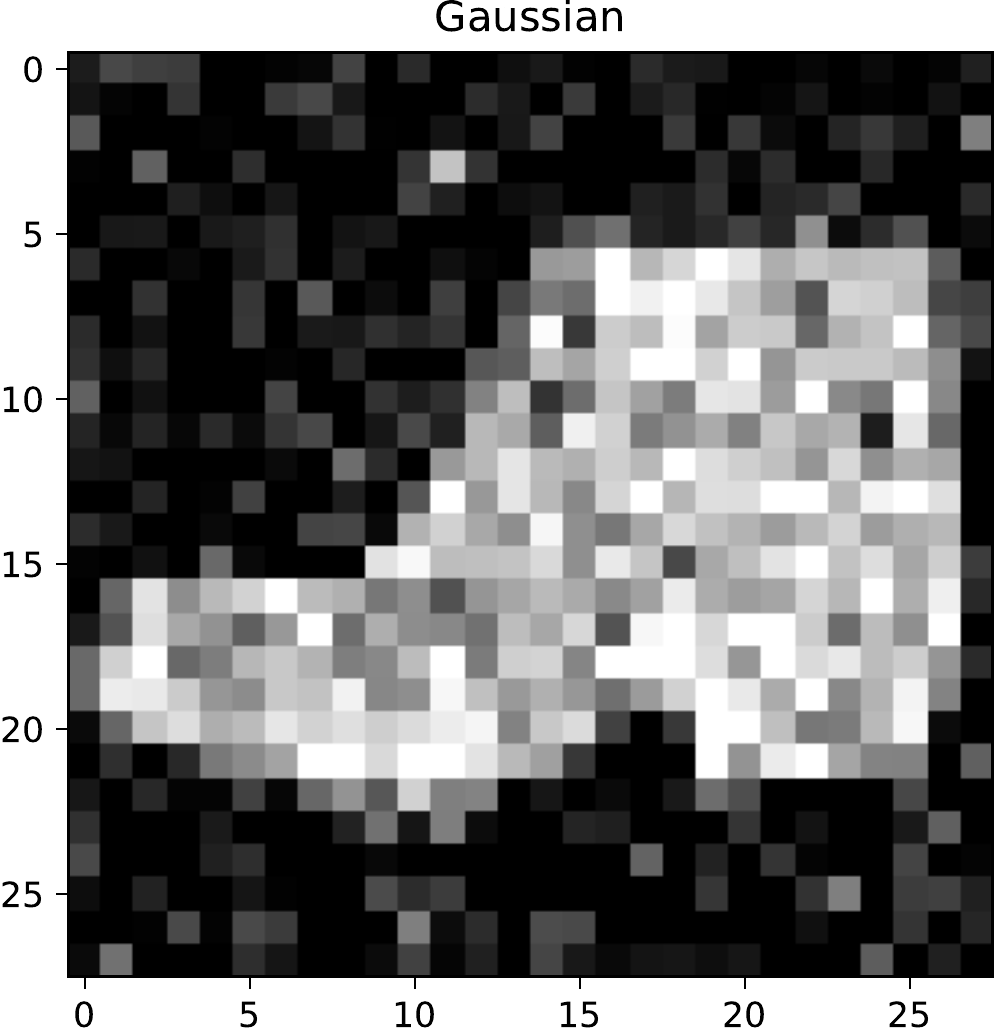}
     \caption{$\lambda=0.04$; $l^2 \approx 4.4$}
   \label{fig:gaussian_4f} 
  \end{subfigure}  
\hspace{0.1em}
  \begin{subfigure}[]{0.15\textwidth}
    \includegraphics[width=\textwidth]{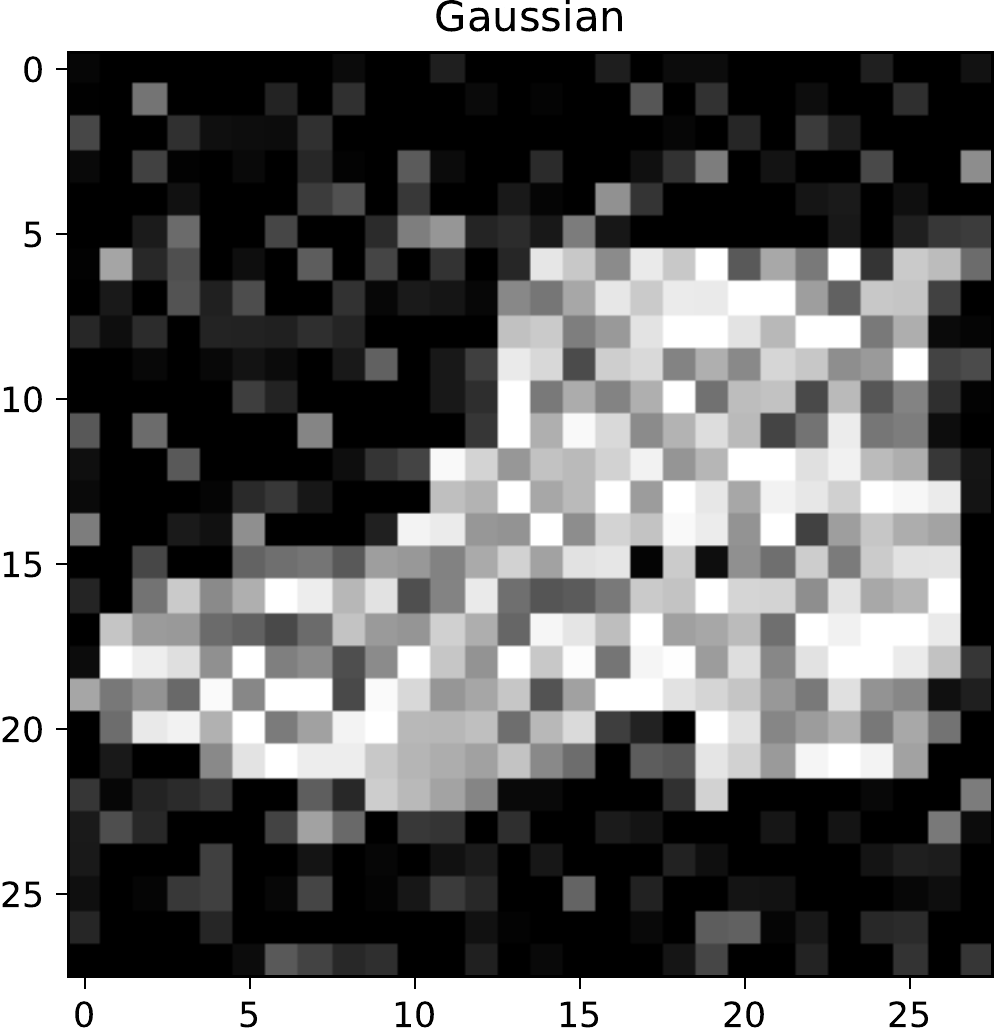}
     \caption{$\lambda=0.05$; $l^2 \approx 4.8$}
   \label{fig:gaussian_5f} 
  \end{subfigure}
\hspace{0.1em}
  \begin{subfigure}[]{0.15\textwidth}
    \includegraphics[width=\textwidth]{gaussian_6f.pdf}
     \caption{$\lambda=0.06$; $l^2 \approx 5.5$}
   \label{fig:gaussian_6f}
  \end{subfigure}  
\caption{ An image of a boot as a function of $\lambda$ for the Gaussian noise model.}
\label{fig:gaussianf}
\end{figure}

\begin{figure}[h]
  \begin{subfigure}[]{0.15\textwidth}
    \includegraphics[width=\textwidth]{sp_10f.pdf}
     \caption{$\lambda=0.01$; $l^2 \approx 2.0$}
   \label{fig:sp_10f} 
  \end{subfigure}
\hspace{0.1em}
  \begin{subfigure}[]{0.15\textwidth}
    \includegraphics[width=\textwidth]{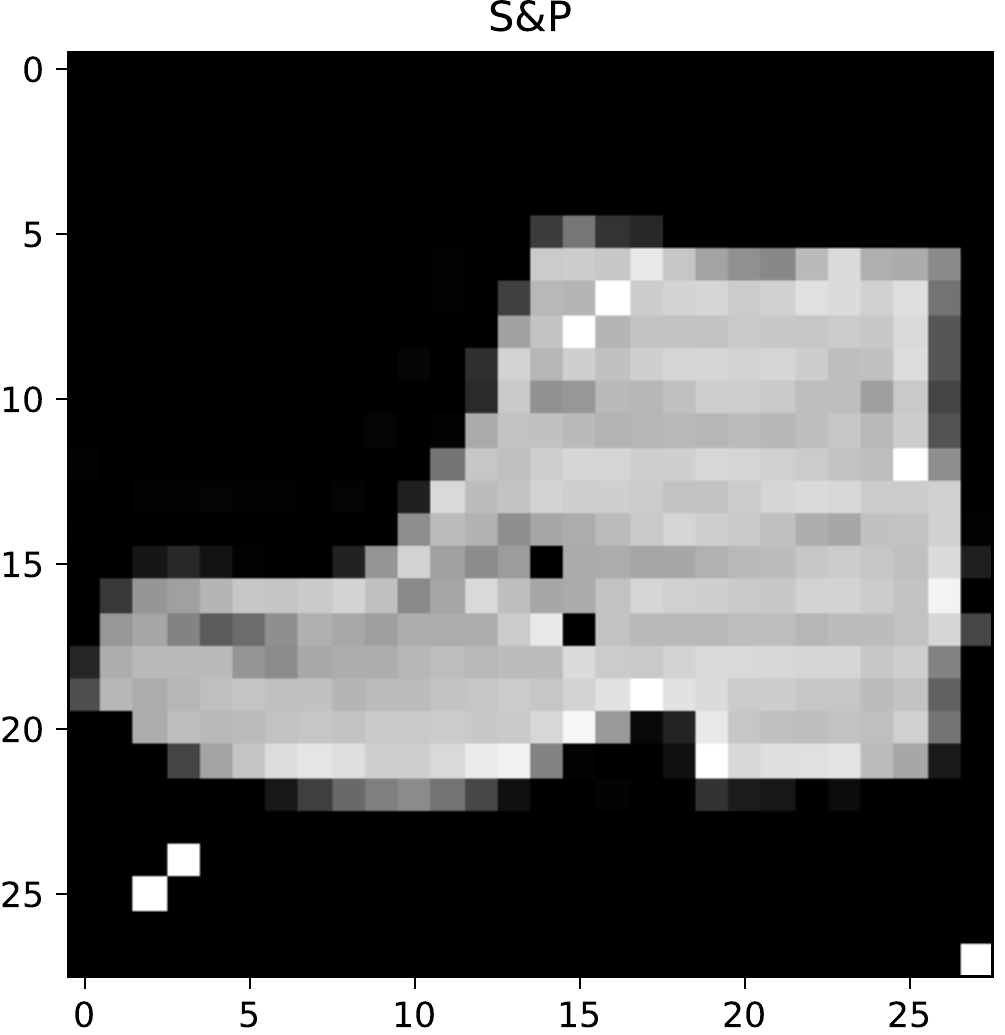}
     \caption{$\lambda=0.02$; $l^2 \approx 2.2$}
   \label{fig:sp_20f} 
  \end{subfigure}
\hspace{0.1em}
  \begin{subfigure}[]{0.15\textwidth}
    \includegraphics[width=\textwidth]{sp_30f.pdf}
     \caption{$\lambda=0.03$; $l^2 \approx 2.8$}
   \label{fig:sp_30f} 
  \end{subfigure}  
\hspace{0.1em}
  \begin{subfigure}[]{0.15\textwidth}
    \includegraphics[width=\textwidth]{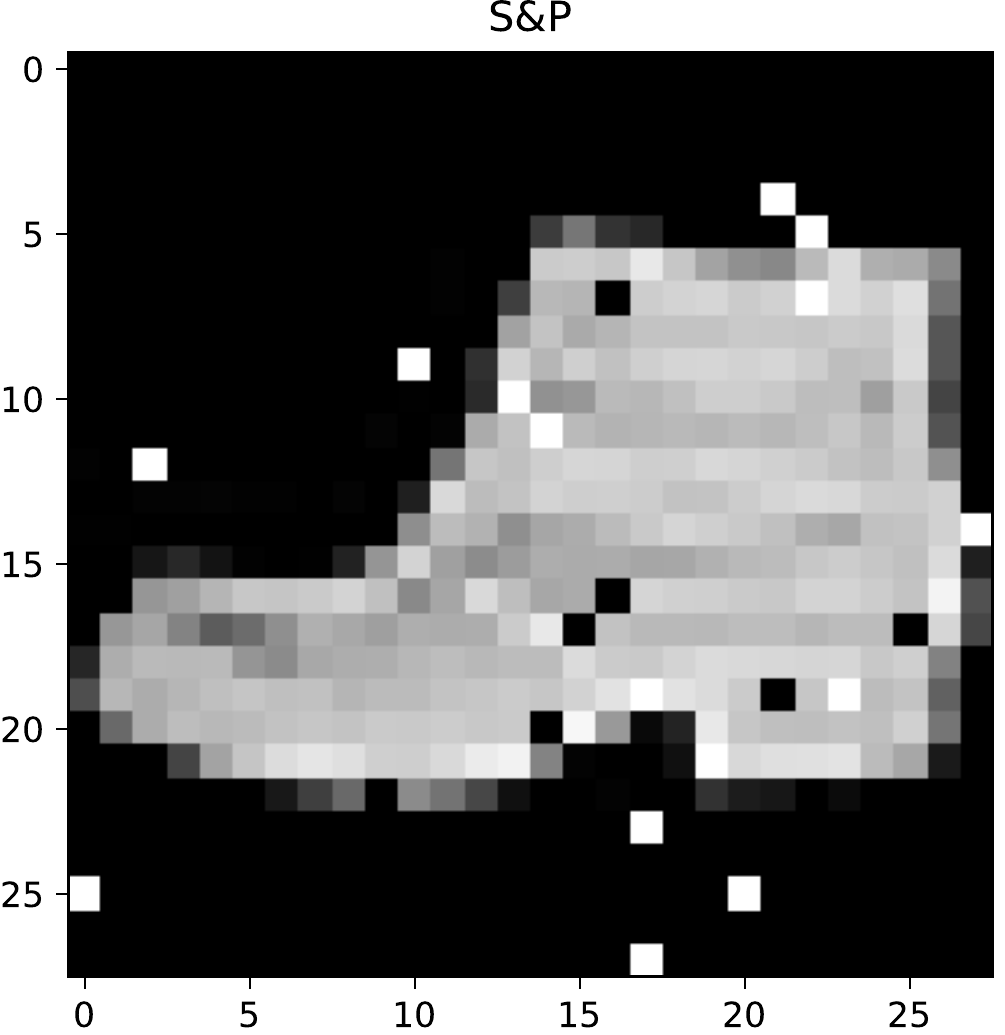}
     \caption{$\lambda=0.04$; $l^2 \approx 3.6$}
   \label{fig:sp_40f} 
  \end{subfigure}  
\hspace{0.1em}
  \begin{subfigure}[]{0.15\textwidth}
    \includegraphics[width=\textwidth]{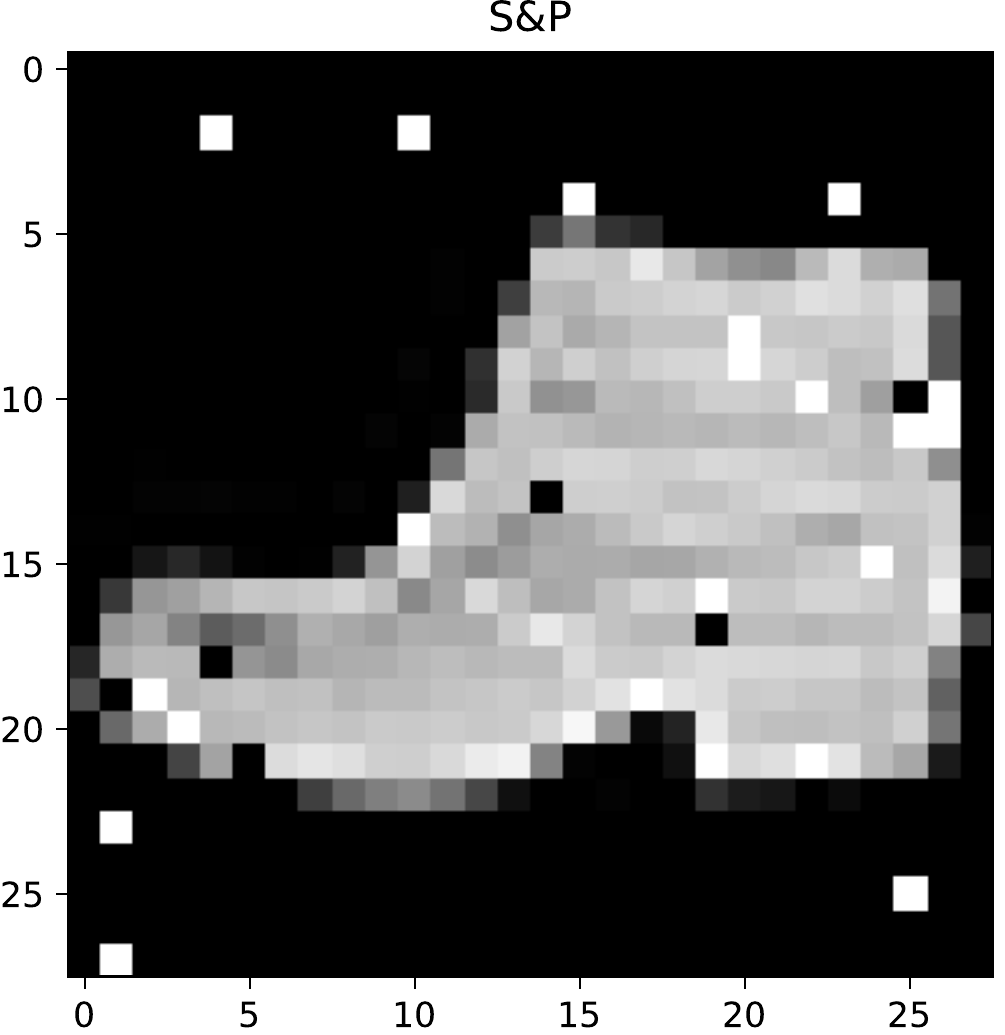}
     \caption{$\lambda=0.05$; $l^2 \approx 3.5$}
   \label{fig:sp_50f} 
  \end{subfigure}
\hspace{0.1em}
  \begin{subfigure}[]{0.15\textwidth}
    \includegraphics[width=\textwidth]{sp_60f.pdf}
     \caption{$\lambda=0.06$; $l^2 \approx 4.2$}
   \label{fig:sp_60f} 
  \end{subfigure}  
\caption{ An image of a boot as a function of $\lambda$ for the s\&p noise model.  $\lambda=0.01$, for instance, corresponds to a $1\%$ chance of flipping a pixel.}
\label{fig:spf}
\end{figure}

\section{Mapper Committee Dimensions}
In this section, we summarize the overall mapper dimensions (i.e. total number of nodes in a committee) that data points are sent to.  In our analysis, the mapper committee dimension depends on the latent space we project to.  In general, this number is highly dependent on $n_{bins}$ and $n_{int}$, but we keep these fixed to $n_{bins}=n_{int}=10$.

\begin{table}[!h]
    \caption{The total number of nodes in the mapper committee, with respect to choice in latent space projections.  For $60$k, we choose to use just PCA and VAE since they are the highest performing.  Additionally, for $60$k MNIST, we use the splitting procedure presented in Sec. \ref{secmapping} which will increase the dimensionality of the Mapper committee.}
\begin{center}
 \begin{tabular}{| c | c | c |} 
    \hline
\textbf{\bf Latent Space} & {\bf $10$k} & {\bf $60$k}  \\ [0.7ex] 
 \hline\hline
    PCA & 215 & 1326 \\ 
    CAE & 201 & NA \\
    DAE & 201  & NA \\
    VAE & 207 & 1337 \\
  \hline
  \end{tabular}
\end{center}
\label{tab:mapperdimensions}
\end{table}

\section{Architecture of the classifier used in MC method}
\label{secclassifier}
We used the following neural network architecture as the classifier on top of the Mapper method (see Fig.~\ref{fig:tdann}):
\begin{enumerate}
    \item ReLU(committee $C$ dim $\sum_{j=1}^{N_C}{N_{M_j}}$, $4000$),
    \item Dropout with $p=0.25$,
    \item ReLU($4000$, $2000$),
    \item Dropout with $p=0.25$,
    \item ReLU($2000$, $10$),
    \item LogSoftmax normalization,
\end{enumerate}
\begin{itemize}
    \item The negative log likelihood loss,
    \item batch size $100$,
    \item SGD optimizer with learning rate=$0.01$, momentum=$0.9$. 
\end{itemize}

\section{Additional Figures}\label{sec:figures}
\begin{figure}[h!]
	\centering
	\begin{subfigure}{0.4\textwidth} 
	    \label{fig:10bin}
		\includegraphics[width=\textwidth]{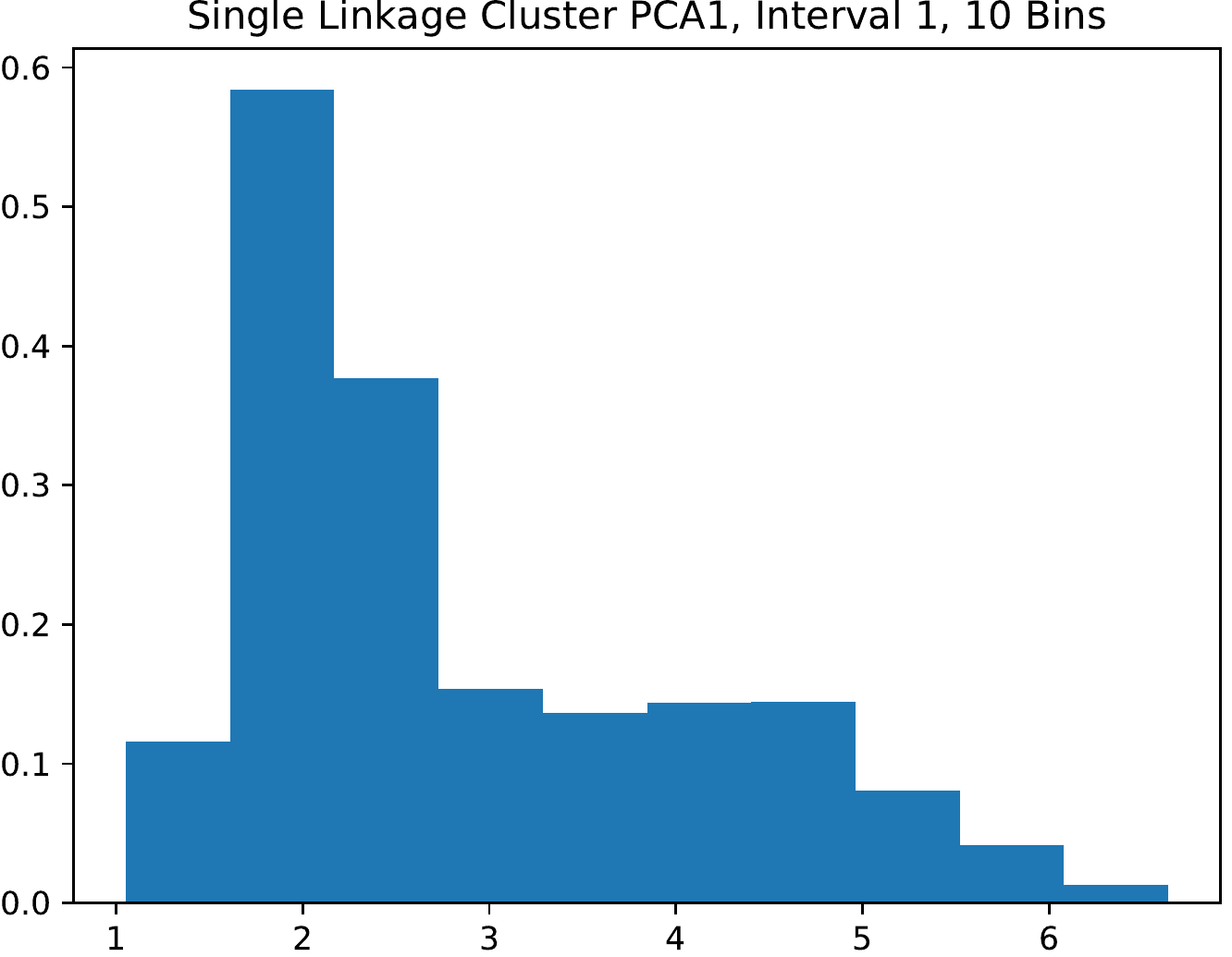}
		\caption{Produced with $n_{\rm int} = 10, n_{\rm bins}=10$.} 
	\end{subfigure}
	\vspace{1em} 
	\begin{subfigure}{0.4\textwidth}
	    \label{fig:20bin}
		\includegraphics[width=\textwidth]{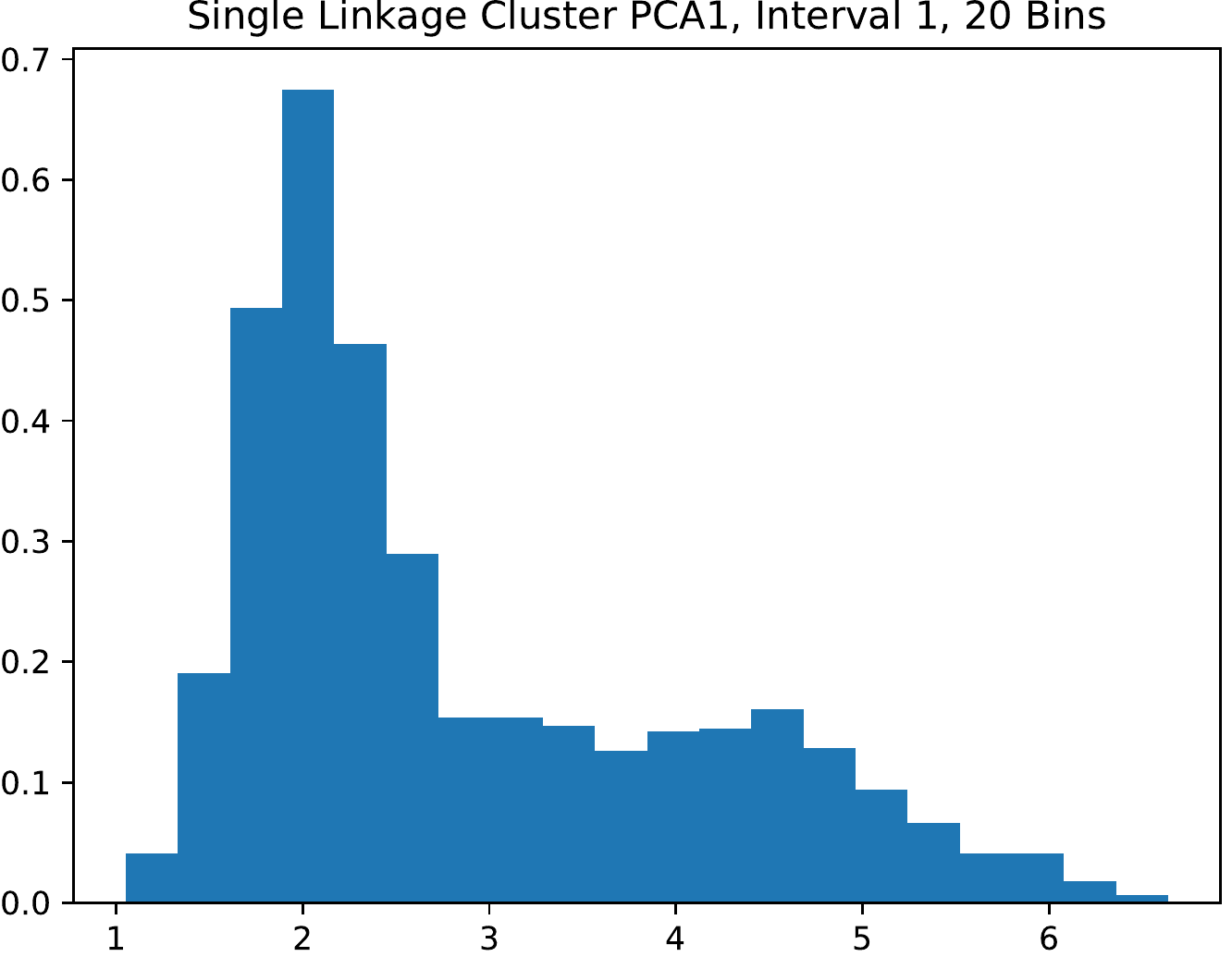}
		\caption{Produced with $n_{\rm int} = 10, n_{\rm bins}=20$.} 
	\end{subfigure}
	\caption{Histograms of length scales at which clusters are grouped via single-linkage clustering.  These plots represent varied $n_{bins}$ for the first interval in the open cover of $U_\alpha$, using the filter $f=PCA_1$.  In this case, increasing $n_{bins}$ from $10$ to $20$ has no effect on the cutoff value, which is set to $6.6$.} 
\label{fig:hists}
\end{figure}

\end{document}